\let\chapter\section
\def\R{{\mathbb{R}}}
\newcommand{\defeq}{\vcentcolon=}
\newcommand{\sample}{\mathcal{S}}
\newtheorem{claim}{Claim}
\newtheorem{theorem}{Theorem}
\newtheorem{definition}{Definition}
\newtheorem{lemma}{Lemma}
\newtheorem{remark}{Remark}
\newtheoremstyle{named}{}{}{\itshape}{}{\bfseries}{.}{.5em}{\thmnote #1{ #3}}
\theoremstyle{named}
\newtheorem*{namedtheorem}{Theorem}
\begin{document}

\title{Sample Complexity of Automated Mechanism Design
\footnote{Authors' addresses: Carnegie Mellon University, School of Computer Science. Email: \texttt{\{ninamf,sandholm,vitercik\}@cs.cmu.edu}.
}}
\author{Maria-Florina Balcan \and
Tuomas Sandholm \and
Ellen Vitercik}

\maketitle

\begin{abstract}
The design of revenue-maximizing combinatorial auctions, i.e. multi-item auctions over bundles of goods, is one of the most fundamental problems in computational economics, unsolved even for two bidders and two items for sale. In the traditional economic models, it is assumed that the bidders' valuations are drawn from an underlying distribution and that the auction designer has perfect knowledge of this distribution. Despite this strong and oftentimes unrealistic assumption, it is remarkable that the revenue-maximizing combinatorial auction remains unknown. In recent years, \emph{automated mechanism design} has emerged as one of the most practical and promising approaches to designing high-revenue combinatorial auctions. The most scalable automated mechanism design algorithms take as input \emph{samples} from the bidders' valuation distribution and then search for a high-revenue auction in a rich auction class. In this work, we provide the first sample complexity analysis for the
standard hierarchy of deterministic combinatorial auction classes used in
automated mechanism design.  In particular, we provide tight sample complexity bounds on the
number of samples needed to guarantee that the empirical revenue of the
designed mechanism on the samples is close to its expected revenue on the
underlying, unknown distribution over bidder valuations, for each of the
auction classes in the hierarchy.  In addition to helping set automated
mechanism design on firm foundations, our results also push the boundaries
of learning theory. In particular, the hypothesis functions used in our
contexts are defined through multi-stage combinatorial optimization
procedures, rather than simple decision boundaries, as are common in machine
learning.
\end{abstract}

\section{Introduction}\label{sec:intro}

Multi-item, multi-bidder auctions have been studied extensively in economics, operations research, and computer science. In a \emph{combinatorial auction
(CA)}~\cite{Cramton06:Combinatorial}, the bidders may submit bids on bundles of goods, rather than on individual items alone, and thereby they may fully express their complex valuation functions. Notably, these functions may be non-additive due to the presence of complementary or substitutable goods for sale. There are many important and practical applications of CAs, ranging from the US government's wireless spectrum license auctions to sourcing auctions, through which companies coordinate the procurement and distribution of equipment, materials and supplies \cite{Cramton06:Combinatorial}.

One of the most important and tantalizing open questions in computational economics is the design of {\em optimal auctions}, that is,
auctions that maximize the seller's expected
revenue~\cite{Vohra:01}. In the standard economic model, it is assumed that the bidders' valuations are drawn from an underlying distribution and that the mechanism designer has perfect information about this distribution. Astonishingly, even with this strong assumption, the optimal CA design problem is unsolved even for
auctions with just two distinct items for sale and
two bidders.  A monumental advance in the study of optimal auction design was the characterization of the optimal 1-item auction~\cite{Myerson81:Optimal}.
In that auction, the winner and the payment are determined not based on the bids, but rather on \emph{virtual valuations} which are transformations of the bids in a way that makes weak bidders (\emph{i.e.}, bidders who are likely to have low valuations) artificially  more competitive.
That auction was later extended to the case of selling multiple copies of the same
item~\cite{Maskin89:Optimal}. However, the characterization of
revenue-maximizing \emph{multi-item} auctions has been obtained only for special
cases of the two-item two-bidder setting~\cite{Avery00:Bundling,Armstrong00:Optimal}.

While it might be surprising that the revenue-maximizing CA is unknown,
we observe that this is actually what one should expect once one views the problem through a computational lens. Conitzer and Sandholm proved that the problem of finding a revenue-maximizing CA (among all deterministic CAs with discrete types) is NP-complete \cite{Conitzer04:Self}.  Therefore, it is unlikely that a concise
characterization
of revenue-maximizing CAs (among deterministic CAs) can even \emph{exist}
\footnote{It is well known that randomization can increase revenue beyond that of the best deterministic CA.  In this paper we focus on deterministic CAs because randomized CAs 1) have \emph{ex post} fairness problems that can be unpalatable to bidders, 2) are harder for bidders and auctioneers to understand, and 3) are not used in practice, to our knowledge.}.

In recent years, a novel approach known as \emph{automated mechanism design} (AMD) has been adopted to attack the revenue-maximizing auction design problem \cite{Conitzer02:Mechanism,Sandholm03:Automated}. In one strand of AMD research, the support of the distribution of the bidders' valuations is discretized and the input to the design algorithm is a probability for each support point~\cite{Conitzer02:Mechanism,Sandholm03:Automated,Conitzer04:Self}. This has the challenge that the input is doubly exponential in the number of items. In an independent-private-values setting, the number of support points is $nk^{2^m}$, where $n$ is the number of bidders, $k$ is the number of discrete value levels a bidder can assign to a bundle, and $m$ is the number of items. This is because each of the $2^m$ bundles can take any of $k$ values.  With correlated valuations, the prior has $k^{2^{nm}}$ support points.  Therefore, that strand is not scalable~\cite{Conitzer03:Applications}, and it is unlikely that such priors are available in practical applications.

In contrast, in the most scalable strand of AMD research, algorithms have been developed which take samples from the bidders' valuation distributions as input, optimize over a rich class of auctions, and return an auction which is high-performing over the sample~\cite{Likhodedov04:Boosting,
Likhodedov05:Approximating,Sandholm15:Automated}. AMD algorithms have yielded deterministic mechanisms with the highest known revenues in the contexts used for empirical evaluations \cite{Sandholm15:Automated}. 
This approach relaxes the unrealistic assumption that the mechanism designer has perfect information about the bidders' valuation distribution.

However, until now, there was no formal characterization of the number of samples required to guarantee that the empirical revenue of the designed mechanism on the samples is close to its expected revenue on the underlying, unknown distribution over bidder valuations. In this paper, we provide that missing link. We present tight sample complexity guarantees over an extensive hierarchy of expressive CA families. These are the most commonly used auction families in AMD. The classes in the hierarchy are based on the classic VCG mechanism~\cite{Vickrey61:Counterspeculation,Clarke71:Multipart,Groves73:Incentives}, which is a generalization of the well-known second-price, or Vickrey, single-item auction. The auctions we consider achieve significantly higher revenue than the VCG baseline by weighting bidders (multiplicatively increasing all of their bids) and boosting outcomes (additively increasing the liklihood that a particular outcome will be the result of the auction).

A major strength of our results is their applicability to any algorithm that determines the optimal auction over the sample, a nearly optimal approximation, or any other black box procedure. Therefore, they apply to any automated mechanism design algorithm, optimal or not. One of the key challenges in deriving these general sample complexity bounds is that to do so, we must develop deep insights into how changes to the auction parameters (the bidder weights and allocation boosts)
effect the outcome of the auction (who wins which items and how much each bidder pays) and thereby the revenue of the auction. In our context, we show that the functions which determine the outcome of an auction are highly complex, consisting of multi-stage optimization procedures.

Therefore, the function classes we consider are much more challenging than those commonly found in machine learning contexts. Typically, for well-understood classes of functions used in machine learning, such as linear separators or other smooth curves in Euclidean spaces, there is a simple
mapping from the parameters of a specific hypothesis to its prediction on a given
example and a close connection between the distance in the parameter space
between two parameter vectors and the distance in function space between
their associated hypotheses. Roughly speaking, it is necessary to understand this connection in order to determine
how many significantly different hypotheses there are over the full range of parameters. In our context, due to the inherent complexity of the classes we consider, connecting the parameter space to the space of revenue functions requires a
much more delicate analysis. Indeed, the key technical part of our work
involves understanding this connection from a learning theoretic perspective. For the more general classes in the hierarchy, we use Rademacher complexity to derive our bounds, and for the auction classes with more combinatorial structure, we exploit that structure to prove pseudo-dimension bounds. Therefore, this work is both of practical importance since we fill a fundamental gap in AMD, and of learning theoretical interest, as our sample complexity analysis requires a deep understanding of the structure of the revenue function classes we consider.

\subsection{The Hierarchy of Deterministic Combinatorial Auctions}

Early work in automated mechanism design approached the mechanism design problem as an integer program or linear program~\cite{Conitzer02:Mechanism,Sandholm03:Automated,Conitzer04:Self,Conitzer03:Applications}. Then a more scalable approach emerged where the design focuses on a parameterized family of CA mechanisms. In that approach, the design of a high-revenue CA is conducted via an algorithmic search for a good parameter vector within the family~\cite{Likhodedov04:Boosting,Likhodedov05:Approximating,Sandholm15:Automated}. Under this view, there is a hierarchy of CA families which we will now describe, and which is depicted in Figure~\ref{fig:hierarchy}.
We define these families formally in Section~\ref{sec:prelim}.
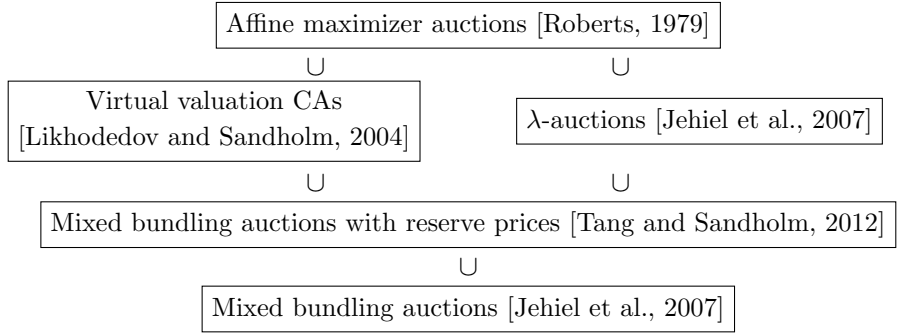
\begin{figure}
\centerline{\begin{tikzpicture}
[place/.style={rectangle,draw},
transition/.style={rectangle,draw},
phantom/.style={rectangle}]
\node[place] (AMA)[label=-10:$\cup$,label=-170:$\cup$] {\begin{varwidth}{\textwidth}\begin{center} {\small Affine maximizer auctions  \cite{Roberts79:Characterization}}\end{center}\end{varwidth}};
\node[phantom] (phantom)[below = .83cm of AMA] {};
\node[transition] (VVCA) [left = .5cm of phantom] {{\begin{varwidth}{\textwidth}\begin{center} {\small Virtual valuation CAs\\ \cite{Likhodedov04:Boosting}}\end{center}\end{varwidth}}};
\node[transition] (lambda) [right = .5cm of phantom] {{\begin{varwidth}{5cm}\begin{center} {\small $\lambda$-auctions \cite{Jehiel07:Mixed}}\end{center}\end{varwidth}}};
\node[place] (MBARP) [below = .93cm of phantom,label=10:$\cup$,label=170:$\cup$] {{\begin{varwidth}{\textwidth}\begin{center} {\small Mixed bundling auctions with reserve prices \cite{Tang12:Mixed}}\end{center}\end{varwidth}}};
\node[place] (MBA) [below = .5cm of MBARP, label=90:$\cup$] {{\begin{varwidth}{\textwidth}\begin{center} {\small Mixed bundling auctions \cite{Jehiel07:Mixed}}\end{center}\end{varwidth}}};
\end{tikzpicture}}
\caption{The hierarchy of deterministic combinatorial auctions families.  Generality increase upward in the hierarchy.}
\label{fig:hierarchy}
\end{figure}

The most general family in the hierarchy of deterministic combinatorial auctions that we study is \emph{affine maximizer auctions (AMAs)}~\cite{Roberts79:Characterization}. It contains the VCG mechanism as a special case, as well auctions that achieve higher expected revenue than the VCG by weighting bidders and boosting allocations. In particular, if the weight of a bidder is increased, any bid she submits will be increased multiplicatively by that amount. If an allocation is boosted by adding a monetary preference to it,  the chance that it will be the AMA allocation is increased.
The parameters of an AMA are the coefficients in these bidder weightings and allocation boostings.

In the classes below AMAs in the hierarchy, more constraints are added to these transformations, thereby decreasing the flexibility of the auctions. For example, the class of \emph{virtual valuation combinatorial auctions (VVCAs)}~\cite{Likhodedov04:Boosting} consists of AMAs with a restricted set of allowable allocation boosts; the structure is such that the parameters can be thought of as affine transformation parameters of each bidder's valuation function---hence the name of the family. Meanwhile, in a \emph{$\lambda$-auction}~\cite{Jehiel07:Mixed}, any allocation boost is valid, but no bidder is weighted more than any other. In a \emph{mixed bundling auction (MBA)}~\cite{Jehiel07:Mixed}, the only allowed allocation boosts are for those wherein a single bidder receives all of the items in the auction. These auctions can be supplemented with reserve prices, which yields the family of \emph{mixed bundling auction with reserve prices (MBARPs)}~\cite{Tang12:Mixed}.

\subsection{Summary of Results and Techniques}


For each family in the hierarchy, we prove  strong upper bounds on the number of samples required to guarantee that with high probability, for any auction in the family, the expected revenue of the auction is close to the average revenue over the samples. In learning-theoretic terms, these are called uniform convergence sample complexity bounds and they have the nice feature that they apply to any procedure one might use to optimize over the samples, such as an algorithm that returns the optimal auction over the sample or a nearly optimal approximation, as well as any other black box procedure. Note that given any two auction families such that one of them is a subset of the other, the  uniform convergence sample complexity bound for the smaller family is always upper bounded by the  uniform convergence sample complexity bound of the larger one. Therefore the sample complexity results we obtain for AMAs immediately apply to its subfamilies. For these subfamilies, however, we exploit their unique structures and thus derive even better upper bounds.

We will now summarize our main results a bit more formally. Let $\mathcal{A}$ be a fixed class of auctions (e.g. AMAs or VVCAs) and define $rev_A(\vec{v})$ to be the revenue of an auction $A \in \mathcal{A}$ on a vector of bidder valuations $\vec{v}$. Given a distribution $\mathcal{D}$, $\mathbb{E}_{\vec{v} \sim \mathcal{D}}[rev_A(\vec{v})]$ is the expected revenue of the auction on a vector of bidder valuations drawn at random from $\mathcal{D}$. Moreover, given a sample $\sample = \left\{\vec{v}^1, \dots, \vec{v}^N\right\}$ of bidder valuations, $\frac{1}{N} \sum_{i = 1}^N rev_A\left(\vec{v}^i\right)$ is the average revenue of $A$ over the sample. Now, we define the \emph{sample complexity of uniform convergence over $\mathcal{A}$} as follows.

\begin{definition}[Sample complexity of uniform convergence over $\mathcal{A}$]
We say that $N(\epsilon, \delta, \mathcal{A})$ is the sample complexity of uniform convergence over $\mathcal{A}$ if for any $\epsilon, \delta \in (0,1)$, if $\sample = \left\{\vec{v}^1, \dots, \vec{v}^N\right\}$ is a sample of size $N \geq N(\epsilon, \delta, \mathcal{A})$ drawn at random from $\mathcal{D}$, with probability at least $1-\delta$, for all auctions $A \in \mathcal{A}$, $\left|\frac{1}{N} \sum_{i = 1}^N rev_A\left(\vec{v}^i\right) - \mathbb{E}_{\vec{v} \sim \mathcal{D}}\left[rev_A(\vec{v})\right]\right| \leq \epsilon$.
\end{definition}

In other words, the sample complexity of uniform convergence over $\mathcal{A}$ is the sufficient number of samples such that uniformly for all auctions in that class, the expected revenue over the distribution is close to the average revenue over the sample.

In Theorem~\ref{thm:main_sample_AMA}, we bound the sample complexity of uniform convergence for the classes of AMAs, VVCAs, and $\lambda$-auctions, and we prove lower bounds with near-tight dependence on the number of bidders $n$ and the number of items $m$. We go on to prove tighter upper bounds for the restricted classes of MBAs and MBARPs in Theorems~\ref{thm:main_sample_MBARP} and \ref{thm:main_sample_MBAn}.

These upper bounds immediately imply that for any algorithm that outputs the auction $A$ that achieves maximum average revenue over its input samples, we can guarantee that the expected revenue of $A$ is close to the expected revenue of the best auction with respect to the actual---unknown---distribution. In particular, for a fixed class of auctions $\mathcal{A}$, suppose that $\hat{A}$ is the auction that maximizes average revenue over the samples $\sample$ and $A^*$ is the auction that maximizes expected revenue with respect to the distribution $\mathcal{D}$. The sample complexity of uniform convergence over $\mathcal{A}$ is sufficient to ensure that with high probability, the expected revenue of $\hat{A}$ is close to the expected revenue of $A^*$. In other words, we can guarantee that if we learn the best auction over the sample, then it will achieve almost maximal revenue with respect to the best auction in that class.

We are now ready to present our main results. For a fixed class of auctions and domain $X$ over the bidders' valuation functions, let $rev_A$ be the corresponding revenue function of an auction $A$ in that class, where $rev_A : X \to [0,U]$ for some $U \in \R$.

\begin{theorem}\label{thm:main_sample_AMA}
The sample complexity of uniform convergence over the classes of $n$-bidder, $m$-item AMAs, VVCAs, and $\lambda$-Auctions is
\[N = \widetilde{O} \left(\left[\frac{U}{\epsilon}n^m\sqrt{m} \left( U+ n^{m/2}\right)\right]^2\right).\]
Moreover, for $\lambda$-Auctions, $N = \Omega\left(n^m\right)$ and for VVCAs, $N= \Omega\left(2^m\right)$.
\end{theorem}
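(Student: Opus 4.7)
My plan is to establish the upper bound via a Rademacher-complexity argument that exploits the piecewise structure of the revenue function, and to prove the lower bounds by exhibiting families of valuation profiles that the auction parameters pseudo-shatter. Throughout, fix a class $\mathcal{A}\in\{\text{AMAs, VVCAs, }\lambda\text{-auctions}\}$, write $\mathcal{F}=\{rev_A:A\in\mathcal{A}\}$, and note that by the standard symmetrization inequality it suffices to bound the expected empirical Rademacher complexity of $\mathcal{F}$. Parametrize an AMA by $\theta=(w,\lambda)\in\R^D$ with $D\le n+n^m$; analogously $D=n\cdot 2^m$ for VVCAs and $D=n^m$ for $\lambda$-auctions. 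Revenue is the output of a two-stage optimization: the welfare-maximizing allocation $o^{\star}(\theta,\vec v)=\arg\max_o\bigl(\sum_i w_i v_i(o)+\lambda_o\bigr)$, then for each bidder $i$ the allocation $o^{-i}(\theta,\vec v)$ chosen in her absence, and bidder $i$'s VCG-style payment $\bigl[(\sum_{j\ne i}w_j v_j(o^{-i})+\lambda_{o^{-i}})-(\sum_{j\ne i}w_j v_j(o^{\star})+\lambda_{o^{\star}})\bigr]/w_i$.

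The key structural fact I will exploit is that on each sample $\vec v^k$, both $o^{\star}$ and each $o^{-i}$ are constant on cells of a hyperplane arrangement in $\R^D$ formed by one hyperplane per ordered pair of allocations per sample; across all $N$ samples this produces at most $(O(Nn^{2m}))^D$ cells. Inside a single cell, the revenue on each sample becomes an explicit expression---a sum of ratios of affine forms in $\theta$---whose gradient (after normalizing bidder weights away from zero) can be bounded by a polynomial whose magnitude drives the $U+n^{m/2}$ factor in the final bound. Covering each cell at an appropriately chosen scale $\eta$, taking a union over cells, and applying Massart's finite-class lemma with an optimization over $\eta$ should yield a Rademacher bound of order $U\,n^m\sqrt m\,(U+n^{m/2})/\sqrt N$, which plugged into the standard uniform-convergence theorem gives the stated sample complexity. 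The main obstacle is the per-cell Lipschitz bound: the payment formula embeds a nested argmax and divides by $w_i$, so controlling the gradient of $rev_\theta(\vec v)$ on a cell requires a uniform lower bound on the weights together with a careful accounting of how a max of $n^m$ affine forms in $\theta$ depends on $\theta$ once its argmax has been fixed. For VVCAs and $\lambda$-auctions the same framework applies once one verifies that the hyperplane arrangement, restricted to the affine subspace of parameters defining each subfamily, admits the same cell-count bound with $D$ correspondingly reduced.

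For the lower bounds, I build shattering families. For the $\Omega(n^m)$ bound on $\lambda$-auctions, I construct, for each allocation $o\in[n]^m$, a valuation profile $\vec v^{(o)}$ whose welfares $\sum_i v^{(o)}_i(o')+\lambda_{o'}$ are tied between $o$ and a fixed default allocation $o_0$ in a way controlled solely by $\lambda_o$, so that sweeping $\lambda_o$ across its threshold flips the selected allocation on $\vec v^{(o)}$ and shifts its revenue by a fixed amount, independently of the other boosts. The resulting $n^m$ profiles can therefore be pseudo-shattered by varying the boost coordinates one at a time, and the classical lower bound on sample complexity in terms of pseudo-dimension yields $N=\Omega(n^m)$. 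For VVCAs, an analogous construction uses $2^m$ bundle-indexed profiles per bidder, reflecting that a VVCA has $n\cdot 2^m$ independently tunable virtual-valuation coefficients.
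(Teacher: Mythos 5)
Your upper-bound route is viable and genuinely different from the paper's: you decompose the parameter space into cells of a hyperplane arrangement (one linear comparison per pair of allocations, per excluded bidder, per sample), note that revenue is a fixed ratio-of-affine-forms on each cell, and combine a per-cell cover with Massart's lemma. The paper instead splits $rev_A$ into $n+1$ pieces --- the weighted welfares $\frac{1}{w_j}\max_{\vec o}\{\sum_{\ell\neq j}w_\ell v_\ell(o_\ell)+\lambda(\vec o)\}$ and the strategy-proofness correction term, written as $\sum_i f_{A,i}\cdot\mathbbm{1}_{\vec o_i=\vec o^*_A}$ --- and bounds each piece via linear-class Rademacher bounds, a product lemma, and the VC dimension of intersections of halfspaces. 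Your approach, if executed (the per-cell Lipschitz constant and the handling of the $1/w_i$ factors are routine given $w_i\geq H_{\underline w}$), would in fact give a cover of size roughly $\exp\bigl(\widetilde O((n+1)^m)\bigr)$ and hence a Rademacher bound of order $U\,n^{m/2}\sqrt{m\log(nN)}/\sqrt N$, which is at least as strong as the stated $\widetilde O$ bound, so no objection there beyond the sketchiness.

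The genuine gap is in the lower bounds. First, your central claim --- that sweeping $\lambda_o$ past a threshold flips the allocation against a default $o_0$ and ``shifts its revenue by a fixed amount, independently of the other boosts'' --- is precisely the hard part and is not established. Revenue is a sum of payments, each involving $\max_{\vec o'}\bigl[\sum_{j\neq i}v_j(o'_j)+\lambda_{o'}\bigr]$, so every boost coordinate being varied to shatter the \emph{other} profiles enters every payment on $\vec v^{(o)}$; you must design the valuations and the admissible boost values so that these maxima are pinned (or one-sidedly bounded) regardless of which subset $H$ is being realized. The paper achieves this not by flipping the allocation at all, but by keeping the winner fixed (welfare $m$ versus at most $m-1+1-\gamma$ elsewhere) and using a two-valued boost assignment ($0$ on in-$H$ allocations, $1-\gamma$ everywhere else) so that each bidder-removed optimum is exactly $m-|\tilde o_{\ell,i}|+1-\gamma$; the revenue gap then flows through the payment formula. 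Second, a ``classical lower bound in terms of pseudo-dimension'' does not by itself lower-bound uniform-convergence sample complexity for real-valued classes --- pseudo-shattering can occur at arbitrarily small scale; you need shattering with a constant-size gap (fat-shattering at scale $\Omega(1)$), and then a direct argument on the uniform distribution over the shattered set, which is exactly what the paper does: for every $H\subseteq V$ there is an auction with revenue at least $2-2\gamma$ on $H$ and $0$ off $H$, so the auction fitting a polynomial-size sample has empirical revenue near $2-2\gamma$ but expected revenue near $0$. Third, the VVCA case is not ``analogous'': VVCA boosts must decompose as $\lambda(\vec o_j)=\sum_i c_{i,b}$ with $c_{i,b}$ depending only on bidder $i$'s own bundle, so the $\lambda$-auction construction cannot be reused; the paper needs a different two-bidder construction (profiles indexed by the $2^m-2$ proper bundles, with the grand-bundle coefficients $c_{i,[m]}$ creating the payment gap), and your one-sentence plan does not engage with this constraint.
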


We prove Theorem~\ref{thm:main_sample_AMA} by splitting the complex AMA revenue function into $n+1$ simpler and economically coherent pieces: the maximum weighted social welfare without any one bidder’s participation and the amount of revenue subtracted out to ensure the resulting auction is \emph{strategy-proof} (defined in Section~\ref{sec:prelim}. We analyze these simpler functions using Rademacher complexity, a tool from learning theory, and combine these analyses using compositional properties of Rademacher complexity to bound the sample complexity of the function class as a whole.

\begin{theorem}\label{thm:main_sample_MBARP}
The sample complexity of uniform convergence over the class of $n$-bidder, $m$-item MBARPs with item-specific reserve prices is \[N = O\left(\left(\frac{U}{\epsilon}\right)^2\left( m^3 \log n \log\frac{U}{\epsilon} + \log \frac{1}{\delta}\right)\right).\]
\end{theorem}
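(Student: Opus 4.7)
The plan is to bound the pseudo-dimension of the class of MBARP revenue functions by $O(m^3 \log n)$ and then invoke the standard pseudo-dimension to uniform convergence conversion, which contributes the $(U/\epsilon)^2$, $\log(U/\epsilon)$, and $\log(1/\delta)$ factors exactly as in the stated bound.

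An MBARP with item-specific reserves is parameterized by a small number of real numbers---a grand-bundle boost associated with each bidder plus $m$ item-specific reserves, so the parameter space sits in $\R^{n+m}$. To bound the pseudo-dimension, I would fix an arbitrary sample $\sample = \{\vec{v}^1, \ldots, \vec{v}^N\}$ together with witness thresholds $z_1, \ldots, z_N$ and count the number of distinct sign patterns $(\mathrm{sign}(rev_A(\vec{v}^i) - z_i))_{i=1}^N$ realizable as the parameters of $A$ vary. If this count is strictly less than $2^N$, the sample is not shatterable and a bound on $N$ follows.

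The core step is to partition the parameter space into cells in which, simultaneously for every sample $\vec{v}^i$, both the MBARP's winning allocation and each bidder's VCG-style counterfactual ``without bidder $j$'' allocation are fixed. Since the boosted welfare of each fixed allocation is linear in the parameters, each cell boundary is a hyperplane in $\R^{n+m}$ corresponding to a comparison between two candidate allocations; adding one threshold-crossing hyperplane per sample (the revenue is linear in the parameters within each cell) completes the arrangement. Standard hyperplane-arrangement bounds then upper-bound the number of cells by a polynomial in $N$, $n$, and $m$ whose degree depends on the effective number of parameters and on the number of relevant candidate allocations per sample. Requiring that $2^N$ sign patterns fit inside this arrangement and solving for $N$ yields a pseudo-dimension bound of the form $O(m^3 \log n)$.

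The main obstacle is taming the allocation--payment interaction: each of the $n$ bidders' payments depends on its own counterfactual allocation, and naively enumerating over all $(n+1)^m$ allocations for all $n+1$ allocation sub-problems gives a much weaker exponent in $n$. Tightening the count requires exploiting the restricted MBARP structure---only grand-bundle allocations receive a boost, and the best non-grand-bundle (``split'') allocation for a given reserve vector is determined item-by-item once the reserve vector is fixed, so the set of candidate allocations that can actually win as the parameters vary is far smaller than $(n+1)^m$. Turning this combinatorial observation into a hyperplane count whose exponents multiply together to at most $m^3 \log n$ (rather than the looser bounds one obtains for general AMAs in Theorem~\ref{thm:main_sample_AMA}) is the crux of the argument and the step where the improvement over the generic AMA bound is realized.
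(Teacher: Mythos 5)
Your overall route matches the paper's: bound the pseudo-dimension of the MBARP revenue class and feed it into the standard pseudo-dimension uniform-convergence theorem (Theorem~\ref{thm:pseudo}), with the pseudo-dimension bound obtained by partitioning the parameter space into cells cut out by linear comparisons between candidate allocations (for the auction itself and for each ``without Bidder $i$'' sub-problem), observing that the revenue is linear in the parameters within each cell, adding one threshold hyperplane per cell, and counting regions of the resulting hyperplane arrangement. However, there is a concrete error in your setup: an MBARP has a \emph{single} grand-bundle boost $c$ (an MBA is parameterized by one constant $c$, applied whenever \emph{some} bidder receives the grand bundle, not a boost per bidder), plus the $m$ item reserves, so the parameter space is $\R^{m+1}$, not $\R^{n+m}$. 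This matters quantitatively: in the arrangement count the number of cells scales like $(\text{number of hyperplanes})^{d}$ with $d$ the parameter dimension, so the shattering inequality gives a pseudo-dimension of order $d$ times a logarithm. With $d=m+1$ the paper gets $2^N \leq (m+1)(N\alpha)^{m+1}$ and hence $N = O(m\log\alpha) = O\left(m^3\log n\right)$; with your $d = n+m$ you would pick up a term linear in $n$ and could not recover the stated bound.

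Your identification of the crux is also off in two respects. The claim that the best non-grand-bundle allocation is ``determined item-by-item once the reserve vector is fixed'' is not correct for combinatorial valuations; the observation the paper actually uses is that for each subset $T\subseteq[m]$ of sold items, the welfare-maximizing way to assign $T$ among the bidders is parameter-independent, because the reserves contribute only $\sum_{j\notin T} r_j$, which is common to every allocation selling exactly $T$. This yields $2^m$ candidate allocations per sub-problem, each allocation region being an intersection of $2^m-1$ halfspaces in $(c,r_1,\dots,r_m)$. Moreover, this candidate-set reduction is not where the improvement over the generic AMA analysis comes from: the number of hyperplanes enters the final bound only logarithmically, so even a naive enumeration over all $(n+1)^m$ allocations per sub-problem would still give $N = O\left(m^3\log n\right)$ once the parameter space is correctly taken to be $(m+1)$-dimensional. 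The real drivers of the polynomial bound are the low-dimensional parameterization and the piecewise-linear structure of $rev_{\vec{v}}(c,r_1,\dots,r_m)$, which is exactly what fails for the richer classes treated in Theorem~\ref{thm:main_sample_AMA}.
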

\begin{theorem}\label{thm:main_sample_MBAn}
The sample complexity of uniform convergence over the class of $n$-bidder, $m$-item MBAs is \[N = O\left(\left(\frac{U}{\epsilon}\right)^2\left( \log\frac{U}{\epsilon} + \log \frac{1}{\delta}\right)\right).\]

\end{theorem}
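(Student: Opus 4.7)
My plan is to exploit the sharp structural restriction of MBAs---that boosts only apply to the $n$ ``grand-bundle'' allocations---to show the pseudo-dimension of the MBA revenue function class is $O(1)$, and then invoke the standard pseudo-dimension sample complexity conversion. The shape of the target bound (no $n$ or $m$ dependence, $\log(U/\epsilon)$ factor) is exactly what pseudo-dimension $O(1)$ on $[0,U]$-valued functions delivers, so the entire task reduces to a structural analysis of the revenue as a function of the boost parameter(s).

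First I would unfold the MBA definition: an MBA is a Groves mechanism specified by a bounded vector of grand-bundle boosts (or a single parameter $c$ in the symmetric case). The revenue admits the closed form
\[\mathrm{rev}_c(\vec{v}) = \sum_{j=1}^n h_j(\vec{v}_{-j};c) - (n-1)\sum_k v_k(A^*(c)) - n \cdot b(A^*(c)),\]
where $A^*(c)$ is the boosted welfare-maximizing allocation and $h_j(\vec{v}_{-j};c)$ is the maximum boosted welfare without bidder $j$. I would then fix any $\vec{v}$ and analyze $c \mapsto \mathrm{rev}_c(\vec{v})$. The critical observation is that $A^*(c)$ takes only two possible values: the VCG allocation when $c$ is small, or the bundle to the bidder $i^*$ with the highest valuation for the grand bundle when $c$ is large. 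Within each regime the revenue is piecewise linear in $c$, and I would carefully show the total piece count is $O(1)$, independent of $n$ and $m$.

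Next, the piecewise-linear structure yields the pseudo-dimension bound. For any $N$ samples $\vec{v}^1,\dots,\vec{v}^N$ and thresholds $z_1,\dots,z_N$, the labeling $(\mathbf{1}[\mathrm{rev}_c(\vec{v}^i) \geq z_i])_{i=1}^N$ can take at most $O(N)$ distinct values as $c$ ranges over its domain. Shattering $N$ points would therefore require $2^N = O(N)$, forcing the pseudo-dimension of the revenue class to be $O(1)$. Plugging this into the standard uniform convergence bound for $[0,U]$-valued function classes of pseudo-dimension $d$,
\[N = O\!\left(\tfrac{U^2}{\epsilon^2}\!\left(d\log\tfrac{U}{\epsilon} + \log\tfrac{1}{\delta}\right)\right),\]
and substituting $d = O(1)$, yields precisely the theorem.

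I expect the main obstacle to be the piecewise accounting for $\mathrm{rev}_c(\vec{v})$. A naive count of the breakpoints of the $n$ pivot terms $h_j(\vec{v}_{-j};c)$ gives $\Omega(n)$ pieces, which would inject a spurious $\log n$ factor into the pseudo-dimension. Closing this gap requires either exploiting a cancellation in which the breakpoints of the $h_j$'s coincide with the single phase-transition point or cancel against the $-n \cdot b(A^*(c))$ summand, or substituting a more direct covering/Rademacher argument that sidesteps the pseudo-dimension route. I anticipate this combinatorial bookkeeping to be the technical heart of the proof.
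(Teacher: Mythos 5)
Your overall route---analyze $c \mapsto rev_{\vec{v}}(c)$ for a fixed valuation vector, bound the pseudo-dimension by a constant, and plug into the standard pseudo-dimension uniform convergence bound---is exactly the paper's route. But there is a genuine gap at the step you yourself flag as the technical heart, and the specific claim you lean on is false as stated: the piece count of $rev_{\vec{v}}(c)$ is \emph{not} $O(1)$. While the actual allocation $\vec{o}^*(c)$ does take only two values (VCG allocation for small $c$, grand bundle to the top grand-bundle bidder for large $c$), each of the $n$ pivot terms $h_j(\vec{v}_{-j};c)$ has its own breakpoint $c_j$ where the without-$j$ allocation switches to awarding the grand bundle, and these breakpoints are generically distinct. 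So $rev_{\vec{v}}(c)$ is piecewise linear with up to $n+1$ pieces, and the interval-counting argument as you set it up gives $2^N \leq O(nN)$, i.e.\ pseudo-dimension $O(\log n)$ and a spurious $\log n$ in the sample complexity. You correctly anticipate this, but the proposal does not close it---it only lists candidate fixes (breakpoint coincidence, cancellation, or switching to a covering argument) without carrying any of them out, so the stated bound is not yet proved.

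The paper's resolution is not a finer piece count but a \emph{shape} (unimodality) argument, which is the missing lemma you need. Writing $rev_{\vec{v}}(c) = \sum_{i=1}^n f_{i,\vec{v}}(c) + g_{\vec{v}}(c)$ with $f_{i,\vec{v}}$ the boosted without-$i$ welfare and $g_{\vec{v}}$ the $(1-n)\sum_i v_i(o_i^*) - n\lambda(\vec{o}^*)$ term, one shows each $f_{i,\vec{v}}$ is \emph{continuous} (the two linear pieces meet at $c_i$) with slope in $\{0,1\}$, so $\sum_i f_{i,\vec{v}}$ is continuous and non-decreasing with slope in $[0,n]$; meanwhile $g_{\vec{v}}$ has slope $0$ before a single point $c^*$, a downward jump there, and slope $-n$ afterward. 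Hence $rev_{\vec{v}}(c)$ is non-decreasing on $[0,c^*]$ and non-increasing on $(c^*,\infty)$ regardless of how many linear pieces it has, so for any witness $z$ the set $\{c : rev_{\vec{v}}(c) \geq z\}$ is an interval, each sample contributes at most two thresholds, $N$ samples yield at most $2N+1$ labelings, and $2^N \leq 2N+1$ forces pseudo-dimension at most $2$ (the paper also exhibits a shattered pair, so it is exactly $2$). With this lemma in place, the rest of your argument (constant pseudo-dimension plus the standard $[0,U]$-range conversion) goes through and matches the paper.
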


To prove Theorems~\ref{thm:main_sample_MBARP} and \ref{thm:main_sample_MBAn}, we characterize the mapping from the MBA (and MBARP) parameter space to the revenue of the associated auctions on an arbitrary bidding instance. We then use these structural insights to prove bounds on the pseudo-dimension of these revenue functions, another learning-theoretic tool which allows us to derive strong sample complexity bounds.

In our bounds, we observe the usual dependence on $U$, which is necessary when analyzing the sample complexity of learning over real-valued functions because it measures the extent to which a single example can influence the average function value over the sample.

We note that it might not always be \emph{computationally} feasible to solve for the best auction over $\sample$ for the given auction family. Rather, we may only be able to design an auction $A$ within the family that has average revenue over $\sample$ that is within a $(1+\alpha)$ multiplicative factor of the revenue-maximizing auction over $\sample$ within the family. Nonetheless, in Theorem~\ref{thm:approx_bounds} we prove that with slightly more samples, we can ensure that the expected revenue of $A$ is close to being with a $(1+\alpha)$ multiplicative factor of the expected revenue of the optimal auction within the family with respect to the real---unknown---distribution $\mathcal{D}$. We prove a similar bound for an additive factor approximation as well. Formally, we prove the following result, which holds very generally for any function class $\mathcal{H}$ with domain $X$ and for any arbitrary loss function $\ell:\mathcal{H} \times X \to [-c,c]$ for some $c \in \R$.

\begin{theorem}\label{thm:approx_bounds}
Let $\sample = \left\{x_1, \dots, x_N\right\}$ be a sample drawn from $\mathcal{D}$ and $\epsilon, \delta \in (0,1)$ be given. Suppose that $N$ is sufficiently large to ensure that with probability at least $1-\delta/2$, for any $h \in \mathcal{H}$, $\underset{x \sim \mathcal{D}}{\mathbb{E}}\left[\ell\left(h, x\right)\right] -  \frac{1}{N}\sum_{i = 1}^N \ell\left(h, x_i\right) < \epsilon$.

Suppose $h^* \in \mathcal{H}$ is a function that minimizes expected loss with respect to the distribution, $\hat{h}$ is a function that minimizes average loss over the sample $\sample$, and $\tilde{h} \in \mathcal{H}$ is a function such that the average loss of $\hat{h}$ over $\sample$ is within an additive $\rho$ factor of the average loss of $\tilde{h}$ over $\sample.$ In other words, $\frac{1}{N}\sum_{i = 1}^N \ell\left(\tilde{h}, x_i\right) -\frac{1}{N}\sum_{i = 1}^N \ell\left(\hat{h}, x_i\right) \leq \rho$ for some $\rho > 0$. Then with probability at least $1-\delta$, \[\underset{x \sim \mathcal{D}}{\mathbb{E}}\left[\ell\left(\tilde{h}, x\right)\right] - \underset{x \sim \mathcal{D}}{\mathbb{E}}\left[\ell\left(h^*, x\right)\right] \leq \epsilon + c\sqrt{\frac{\ln(4/\delta)}{2N}} + \rho.\]

Meanwhile, if $\frac{1}{N}\sum_{i = 1}^N \ell\left(\tilde{h}, x_i\right) \leq (1+\alpha)\frac{1}{N}\sum_{i = 1}^N \ell\left(\hat{h}, x_i\right)$ for some $\alpha \in [0,1)$, then \[\underset{x \sim \mathcal{D}}{\mathbb{E}}\left[\ell\left(\tilde{h}, x\right)\right] - \underset{x \sim \mathcal{D}}{\mathbb{E}}\left[\ell\left(h^*, x\right)\right] \leq \epsilon + (1+\alpha)\left(c\sqrt{\frac{\ln(4/\delta)}{2N}}\right) + \alpha \underset{x \sim \mathcal{D}}{\mathbb{E}}\left[\ell\left(h^*, x\right)\right].\]

Moreover, both bounds are tight in the worst case.
\end{theorem}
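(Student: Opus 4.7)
Both the additive and multiplicative bounds reduce to chaining a small number of high-probability events with the empirical (near-)optimality condition on $\tilde{h}$. I would invoke the uniform convergence hypothesis, which gives the one-sided bound $\mathbb{E}_{x\sim\mathcal{D}}[\ell(h,x)] - \frac{1}{N}\sum_i \ell(h,x_i) < \epsilon$ for every $h \in \mathcal{H}$ with failure probability at most $\delta/2$, and separately apply Hoeffding's inequality to the \emph{single fixed} function $h^*$, which is not random and for which $\ell(h^*,x) \in [-c,c]$. This yields $\frac{1}{N}\sum_i \ell(h^*,x_i) - \mathbb{E}_{x\sim\mathcal{D}}[\ell(h^*,x)] \leq c\sqrt{\ln(4/\delta)/(2N)}$ with failure probability at most $\delta/2$. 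A union bound makes both events hold simultaneously with probability at least $1-\delta$, and the rest of the proof takes place on that event.

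\textbf{Additive bound.} On the good event I would simply chain the four one-line inequalities
\begin{align*}
\underset{x\sim\mathcal{D}}{\mathbb{E}}[\ell(\tilde{h},x)] &\leq \tfrac{1}{N}\textstyle\sum_i \ell(\tilde{h},x_i) + \epsilon \quad \text{(uniform convergence applied to } \tilde{h}\text{)} \\
 &\leq \tfrac{1}{N}\textstyle\sum_i \ell(\hat{h},x_i) + \rho + \epsilon \quad \text{(approximation hypothesis on } \tilde{h}\text{)} \\
 &\leq \tfrac{1}{N}\textstyle\sum_i \ell(h^*,x_i) + \rho + \epsilon \quad \text{(empirical optimality of } \hat{h}\text{)} \\
 &\leq \underset{x\sim\mathcal{D}}{\mathbb{E}}[\ell(h^*,x)] + c\sqrt{\ln(4/\delta)/(2N)} + \rho + \epsilon \quad \text{(Hoeffding on } h^*\text{)},
\end{align*}
and rearrange. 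The multiplicative bound is obtained from the identical chain, except at the second step I would replace the additive slack $\rho$ by the multiplicative slack $\alpha \cdot \frac{1}{N}\sum_i \ell(\hat{h},x_i)$; substituting Hoeffding into the $(1+\alpha)\frac{1}{N}\sum_i \ell(h^*,x_i)$ term then produces exactly the $\alpha\, \mathbb{E}[\ell(h^*,x)] + (1+\alpha)c\sqrt{\ln(4/\delta)/(2N)} + \epsilon$ expression claimed in the statement.

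\textbf{Tightness.} For the worst-case tightness, I would exhibit a small, explicit class on which all three slack sources (uniform convergence, Hoeffding, and the approximation gap) are simultaneously near-saturated. The natural construction is a two-function class $\{h^*, \tilde{h}\}$ with $\hat{h} = \tilde{h}$, together with a product distribution on $x$ chosen so that (i) the sample average of $\ell(\tilde{h},\cdot)$ deviates downward from its mean by essentially $\epsilon$, making uniform convergence tight at $\tilde{h}$; (ii) the sample average of $\ell(h^*,\cdot)$ deviates upward from its mean by essentially $c\sqrt{\ln(4/\delta)/(2N)}$, making Hoeffding tight; and (iii) $\rho$ (respectively the factor $1{+}\alpha$) is realized by construction as the empirical gap between $\tilde{h}$ and $\hat{h}$. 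Because the three events concern disjoint coordinates of the pair $(\ell(h^*,\cdot),\ell(\tilde{h},\cdot))$, I can arrange them to co-occur with positive probability, establishing that each of the three additive terms in the final bound is individually unimprovable in the worst case.

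\textbf{Main obstacle.} The argument itself is essentially a careful bookkeeping of which inequality is applied to which hypothesis; the only genuine subtlety is respecting the asymmetry of the two directions: uniform convergence must be applied to the data-dependent $\tilde{h}$ (hence the need for a uniform bound over $\mathcal{H}$), while Hoeffding suffices for $h^*$ because it is distribution-determined and therefore independent of $\sample$. Keeping this asymmetry clean — so that no hidden union bound over $\mathcal{H}$ is needed on the $h^*$ side, and so that the final constants match the stated bound — is the main point to get right.
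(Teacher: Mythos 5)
Your proof follows essentially the same route as the paper's: the identical four-step chain (uniform convergence applied to the data-dependent $\tilde{h}$, the empirical slack $\rho$ or factor $(1+\alpha)$, empirical optimality of $\hat{h}$, and a one-sided Hoeffding bound for the fixed $h^*$), combined via a union bound over the two $\delta/2$ events. Your tightness sketch is in fact more explicit than the paper's, which merely remarks that the chained inequalities can simultaneously be tight in the worst case, so there is no gap relative to the paper's own treatment.
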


For any class of auctions $\mathcal{A}$ and corresponding set of revenue functions $\mathcal{H}_{\mathcal{A}} = \left\{rev_A \ | \ A \in \mathcal{A}\right\}$, this result can be instantiated by setting the loss function such that $\ell\left(rev_A, \vec{v}\right) = -rev_A\left(\vec{v}\right)$, and therefore by minimizing loss, we are maximizing revenue. The proof of Theorem~\ref{thm:approx_bounds} can be found in Appendix~\ref{app:intro_app}.

\subsection{Additional Related Research}

In prior research, most analyses of the revenue achieved by the classes that make up the hierarchy of deterministic CAs have been empirical~\cite{Sandholm03:Automated,
Likhodedov04:Boosting,
Likhodedov05:Approximating,Tang12:Mixed,Sandholm15:Automated}. However, from a theoretical standpoint, Roberts, when introducing the class of AMAs~\cite{Roberts79:Characterization}, proved that they are the
only \emph{ex post} strategy-proof mechanisms over unrestricted
domains of valuations\footnote{A mechanism is \emph{ex post} strategy-proof if truthful bidding is an \emph{ex post Nash equilibrium} in which all bidders always receive nonnegative utility. By \emph{ex post Nash equilibrium}, we mean that for each player, no matter the valuations of the other players but given that they are bidding truthfully, that player will maximize her utility if she bids truthfully as well.}. Lavi et al. went on to prove that under certain
natural assumptions, every incentive compatible CA
is almost\footnote{A mechanism is an \emph{almost affine maximizer} if it is an affine
maximizer for sufficiently high valuations \cite{Lavi03:Towards}. Lavi et al. conjecture
that the ``almost'' qualifier is merely technical, and can be
removed in future research.} an affine maximizer.

In the intersection of learning theory and mechanism design, the sample complexity of revenue maximization has been studied primarily in the single-item or the more general single-dimensional settings~\cite{Elkind07:Designing,Cole14:Sample, Huang15:Making, Medina14:Learning, Morgenstern15:Pseudo, Roughgarden15:Ironing,Devanur16:Sample}, as well as some multi-dimensional settings which are reducible to the single-bidder setting \cite{Morgenstern16:Learning}. In contrast, the combinatorial settings that we study are much more complex since the revenue functions consist of multi-stage optimization procedures that cannot be reduced to a single-bidder setting. The complexity intrinsic to the multi-item setting is explored by Dughmi et al., who show that for a single unit-demand bidder, when the bidder's values for the items may be correlated, $\Omega(2^m)$ samples are required
to determine a constant-factor approximation to the optimal auction \cite{Dughmi14:Sampling}.

Learning theory tools such as pseudo-dimension and Rademacher complexity have been used to prove strong guarantees in auction settings \cite{Medina14:Learning,Morgenstern15:Pseudo,Morgenstern16:Learning}. These authors have analyzed certain classes of piecewise linear revenue functions and shown that few samples are needed to learn over these specific classes. In a similar direction, bounds on the sample complexity of welfare-optimal item pricings have been developed \cite{Feldman15:Combinatorial, Hsu16:Prices}.

Despite the inherent complexity of designing high-revenue CAs, Morgenstern and Roughgarden use linear separability as a tool to prove that certain simple classes of multi-parameter auctions have small sample complexity, such as sequential auctions with item and bundle pricings and second-price item auctions with reserve prices \cite{Morgenstern16:Learning}. In particular, they show that bounding the sample complexity of these sequential auctions can be reduced to the single-buyer setting. In contrast, the hierarchy we study consists of VCG-based mechanisms, as opposed to sequential auctions. These VCG-based revenue function classes are more versatile than item pricing auctions because they allow the mechanism designer many more degrees of freedom than the number of items. Moreover, even the simpler auction classes we consider pose a unique challenge because the parameters defining the auctions interact in non-intuitive ways with the multi-stage optimization procedures which define the revenue functions we work with, unlike item pricings, which are simple by design. Our function classes therefore require us to understand the specific form of the weighted VCG payment rule and its interaction with the parameter space. Thus, our context and techniques diverge from those in \cite{Morgenstern16:Learning}.

Earlier work of Balcan et al.\ addressed sample complexity results for revenue maximization in unrestricted supply settings \cite{BBHM}. The settings considered by Balcan et al. are significantly simpler to analyze since in the unrestricted supply settings, the hypothesis classes are straightforward to analyze from a learning theory perspective and the revenue function decomposes additively among bidders.

Finally, there is a wealth of work on characterizing the optimal CA for restricted settings and designing mechanisms which achieve high, if not optimal revenue in specific contexts. The simplicity of Myerson's optimal \emph{single-item} auction might lead one to hope that the optimal multi-item auction could be so elegantly characterizable~\cite{Myerson81:Optimal}. Recent work has made considerable progress toward this end~(e.g. \cite{Alaei13:Simple,Bhalgat13:Optimal,Bhattacharya10:Budget,
Cai12:Algorithmic,Cai12:Optimal,Cai13:Reducing,Daskalakis14:Complexity,
Kleinberg12:Matroid}) but there is still relatively little known about optimal multi-item auction design. The problem has also garnered significant interest from a more applied perspective, resulting in significant advances from the artificial intelligence and machine learning communities (e.g. \cite{Parkes00:Iterative,Lahaie11:Kernel,
Wurman00:AkBA,Parkes04:Approximately,Amin13:Learning,Mohri14:Optimal,Mohri15:Revenue}).

Revenue-maximizing mechanism design complements an active research area in theoretical computer science which strives to answer the question: can \emph{simple} mechanisms achieve near-optimal revenue? This question was posed by Hartline and Roughgarden, who left the precise definition of a simple mechanism open for interpretation \cite{Hartline09:Simple}. Recently, Morgenstern and Roughgarden proposed an auction class's pseudo-dimension as a formal means of defining simplicity \cite{Morgenstern15:Pseudo,Morgenstern16:Learning}. In particular, Morgenstern and Roughgarden complemented pseudo-dimension bounds with known approximation guarantees for the corresponding simple auction classes \cite{Morgenstern16:Learning}. See \cite{Morgenstern16:Learning} and references therein for descriptions of these guarantees.

\section{Preliminaries}\label{sec:prelim}
In the following section, we explain the basic mechanism design problem, fix notation, and then describe the hierarchy of combinatorial auction families we study.

\subsection{Mechanism design background} We consider the problem of selling $m$ heterogeneous goods to $n$ bidders. This means that there are $2^m$ different bundles of goods, $B = \left\{b_1, \dots, b_{2^m}\right\}$. Each bidder $i \in [n]$ is associated with a set-wise valuation function over the bundles, $v_i: B \to \R$. We assume that the bidders' valuations are drawn from a distribution $\mathcal{D}$.

Every auction is defined by an \emph{allocation function} and a \emph{payment function}. The allocation function determines which bidders receive which items based on their bids and the payment function determines how much the bidders need to pay based on their bids and the allocation. It is up to the mechanism designer to determine which allocation and payment functions should be used. In our context, the two functions are fixed based on the samples from $\mathcal{D}$ before the bidders submit their bids.

Each auction family that we consider has a design based on the classic \emph{Vickrey-Clarke-Groves  mechanism (VCG)}. The VCG mechanism, which we describe below, is the canonical \emph{strategy-proof} mechanism, which means that every bidder's dominant strategy is to bid truthfully. In other words, for every Bidder $i$,
no matter the bids made by the other bidders, Bidder $i$ maximizes her expected utility (her value for her allocation minus the price she pays) by bidding her true value. Therefore, we describe the VCG mechanism assuming that the bids equal the bidders' true valuations.

The VCG mechanism allocates the items such that the social welfare of the bidders, that is, the sum of each bidder's value for the items she wins, is maximized. Intuitively, each winning bidder then pays her bid minus a ``rebate'' equal to the increase in
welfare attributable to her presence in the auction. This form of the payment function is crucial to ensuring that the auction is strategy-proof. More concretely, the allocation of the VCG mechanism is the disjoint set of subsets $\left(b^*_1, \dots, b^*_n\right) \subseteq B$ that maximizes $\sum v_i\left(b_{i}^*\right)$. Meanwhile, let $\left(b_1^{-i}, \dots, b_n^{-i}\right)$ be the disjoint set of subsets that maximizes $\sum_{j \not=i} v_j\left(b_j^{-i}\right)$. Then Bidder $i$ must pay $\sum_{j \not=i} \left[v_j\left(b_j^{-i}\right) - v_j\left(b_j^{*}\right)\right]=v_i\left(b_{i}^*\right) - \left[\sum v_j\left(b_j^{*}\right)- \sum_{j \not=i} v_j\left(b_j^{-i}\right)\right]$. In the special case where there is one item for sale, the VCG mechanism is known as the second price, or Vickrey, auction, where the highest bidder wins the item and pays the second highest bid. We note that every auction in the classes we study is strategy-proof, so we may assume that the bids equal the bidders' valuations.

\subsubsection{Notation}

 We study auctions with $n$ bidders and $m$ items. We refer to the bundle of all $m$ items as the \emph{grand bundle}. In total, there are $(n+1)^m$ possible allocations, which we denote as the vectors $\mathcal{O} = \left\{\vec{o}_1,\dots, \vec{o}_{(n+1)^m}\right\}.$ Each allocation vector $\vec{o}_i$ can be written as $\left(o_{i,1}, \dots, o_{i,n}\right)$, where $o_{i,j} = b_{\ell} \in B$ denotes the bundle of items allocated to Bidder $j$ in allocation $\vec{o}_i$.  We use the notation $\vec{v}_1 = \left(v_1\left(b_1\right), \dots, v_1\left(b_{2^m}\right)\right)$ and $\vec{v} = \left(\vec{v}_1, \dots, \vec{v}_n\right)$ to denote a vector of bidder valuation functions. We say that $rev_A(\vec{v})$ is the revenue of an auction $A$ on the valuation vector $\vec{v}$. Denoting the payment of any one bidder under auction $A$ given valuation vector $\vec{v}$ as $p_{i,A}\left(\vec{v}\right)$, we have that $rev_A(\vec{v}) = \sum_{i = 1}^n p_{i,A}\left(\vec{v}\right)$.

 \smallskip

\subsubsection{Auction classes} We now give formal definitions of the CA families in the hierarchy we study. See Figure~\ref{fig:hierarchy} for the hierarchical organization of the auction classes, together with the papers which introduced each family.

\smallskip

\noindent \textbf{Affine maximizer auctions (AMAs).} An AMA $A$ is defined by a set of weights per bidder $\left(w_1,\dots,w_n\right) \subset \R_{> 0}$ and boosts per allocation $\left(\lambda\left(\vec{o}_1\right),\dots, \lambda\left(\vec{o}_{(n+1)^m}\right)\right) \subset \R$. An auction $A$ uniquely corresponds to a set of these parameters, so we write $A = \left(w_1,\dots,w_n,\lambda\left(\vec{o}_1\right),\dots, \lambda\left(\vec{o}_{(n+1)^m}\right)\right)$. To simplify notation, we write  $\lambda_i = \lambda\left(\vec{o}_i\right)$ interchangeably. These parameters allow the mechanism designer to multiplicatively boost any bidder's bids by their corresponding weight and to increase the likelihood that any one allocation is returned as the output of an auction. More concretely, the allocation of an AMA $A$, is $\vec{o}^* = \text{argmax}_{\vec{o}_i \in \mathcal{O}} \left\{\sum_{j = 1}^n w_jv_j\left(o_{i,j}\right) + \lambda\left(\vec{o}_i\right)\right\}.$ The payment function of $A$ has the same form as the VCG payment rule, with the parameters factored in to ensure that the auction remains strategy-proof. In particular, for all $j \in [n]$, the payments are $p_{j,A}\left(\vec{v}\right) = \frac{1}{w_j}\left[ \sum_{\ell \not= j} w_{\ell}v_{\ell}\left(o_{-j,\ell}\right) + \lambda\left(\vec{o}_{-j}\right) - \sum_{\ell \not= j} w_{\ell} v_{\ell}\left(o^*_\ell\right) -\lambda\left(\vec{o}^*\right)\right],$ where $\vec{o}_{-j} = \text{argmax}_{\vec{o}_i \in \mathcal{O}} \left\{ \sum_{\ell \not= j} w_{\ell}v_{\ell}\left(o_{i,\ell}\right) + \lambda\left(\vec{o}_i\right)\right\}.$ We assume that $H_{\underline{w}} \leq w_i \leq H_{\overline{w}}$, $\lambda_i \leq H_\lambda$, and $v_i\left(b_\ell\right) \leq H_v$ for some $H_{\underline{w}},H_{\overline{w}},H_\lambda, H_v \in \R_{\geq 0}$.

\smallskip

\noindent \textbf{Virtual valuation combinatorial auctions (VVCAs).}
VVCAs are a subset of AMAs. The defining characteristic of a VVCA is that each $\lambda\left(\vec{o}_j\right)$ is split into $n$ terms such that $\lambda\left(\vec{o}_j\right) = \sum_{i = 1}^n \lambda_i\left(\vec{o}_j\right)$ where $\lambda_i\left(\vec{o}_j\right) = c_{i,b}$ for all allocations $\vec{o}_j$ that give Bidder $i$ exactly bundle $b \in B$.

\smallskip

\noindent \textbf{$\lambda$-auctions.}
$\lambda$-auctions are the subclass of AMAs where $w_i = 1$ for all $i \in [n]$.

\smallskip

\noindent \textbf{Mixed bundling auctions (MBAs).}
The class of MBAs is parameterized by a constant $c \geq 0$ which can be seen as a discount for any bidder who receives the grand bundle. Formally, the $c$-MBA is the $\lambda$-auction with $\lambda(\vec{o}) = c$ if some bidder receives the grand bundle in allocation $\vec{o}$ and 0 otherwise.

\smallskip
\noindent \textbf{Mixed bundling auctions with reserve prices (MBARPs).}
MBARPs are identical to MBAs though with \emph{reserve prices.} In a single-item VCG auction (i.e. second price auction) with a reserve price, the item is only sold if the highest bidder's bid exceeds the reserve price, and the winner must pay the maximum of the second highest bid and the reserve price. To generalize this intuition to the multi-item case, we enlarge the set of agents to include the seller, who is now Bidder 0 and whose valuation for a set of items is the set's reserve price. Working in this expanded set of agents, the bidder weights are all 1 and the $\lambda$ terms are the same as in the standard MBA setup. Importantly, the seller makes no payments, no matter her allocation. More formally, given a vector of valuation functions $\vec{v}$, the MBARP allocation is $\vec{o}^* = \text{argmax}_{\vec{o} \in \mathcal{O}} \sum_{i = 0}^n v_i\left(o_i\right) + \lambda\left(\vec{o}\right).$ For each $i \in \{1, \dots, n\}$, Bidder $i$'s payment is \[p_{A,i}(\vec{v}) = \sum_{j \in \{0, \dots, n\}\setminus \{i\}} v_j\left(o_{-i,j}\right) + \lambda\left(\vec{o}_{-i}\right) - \sum_{j \in \{0, \dots, n\}\setminus \{i\}} v_j\left(o^*_j\right) - \lambda\left(\vec{o}^*\right),\] where \[\vec{o}_{-i} = \underset{\vec{o} \in \mathcal{O}}{\text{argmax}} \sum_{j \in \{0, \dots, n\}\setminus \{i\}} v_j\left(o_j\right) + \lambda\left(\vec{o}\right).\]


\subsection{Computational learning theory background}\label{sec:techniques}

To derive the upper bounds in Theorems~\ref{thm:main_sample_AMA} through \ref{thm:main_sample_MBAn}, we use two learning-theoretic tools which quantify the ``complexity'' of a class of functions: Rademacher complexity and pseudo-dimension. We define these concepts generally for a class of functions $\mathcal{H}$ with domain $X$ and distribution $\mathcal{D}$ over $X$. Further, we define $\ell$ to be an arbitrary loss function mapping $\mathcal{H} \times X$ to $[-c,c]$ for some $c \in \R$. To simplify notation, we let $\mathcal{F} \defeq \ell \circ \mathcal{H} \defeq \left\{x \mapsto \ell(h,x) \ | \ h \in \mathcal{H}\right\}$.

\subsubsection{Rademacher Complexity}
First, we formally define Rademacher complexity, which is somewhat technical, and then provide a more intuitive notion of the quantity that it measures.

\begin{definition}[Empirical Rademacher complexity] The \emph{empirical Rademacher complexity} of $\mathcal{F}$ with respect to the sample $\sample = \left\{x_1, \dots, x_N\right\}$ is defined as: $\widehat{\mathcal{R}}_\sample(\mathcal{F}) = \mathbb{E}_{\vec{\sigma}}\left[\sup_{f \in \mathcal{F}} \frac{1}{N} \sum_{i = 1}^N \sigma_i \cdot f\left(x_i\right) \right],$ where $\vec{\sigma} = (\sigma_1,\dots, \sigma_N)^\top$, with $\sigma_i$s independent uniform random variables taking values in $\{-1,1\}$. The random variables $\sigma_i$ are called \emph{Rademacher variables}.
\end{definition}

\begin{definition}[Rademacher complexity]
For any integer $N \geq 1$, the \emph{Rademacher complexity} of $\mathcal{F}$ is the expectation of the empirical Rademacher complexity over all samples of size $N$ drawn according to $\mathcal{D}$, i.e. $\mathcal{R}_N(\mathcal{F}) = \mathbb{E}_{\sample \sim \mathcal{D}^N}\left[\widehat{\mathcal{R}}_\sample(\mathcal{F})\right].$
\end{definition}

%

Intuitively, the supremum measures, for a given sample $\sample$ and Rademacher vector $\vec{\sigma}$, the maximum correlation between $f(x_i)$ and $\sigma_i$ over all $f \in \mathcal{F}$. Taking the expectation over $\vec{\sigma}$, we can then say that the empirical Rademacher complexity of $\mathcal{F}$ measures the ability of functions from $\mathcal{F}$ (when applied to a fixed sample $\sample$) to fit random noise. The Rademacher complexity of $\mathcal{F}$ therefore measures the expected noise-fitting-ability of $\mathcal{F}$ over all data sets $\sample \in X^N$ that could be drawn according to the distribution $\mathcal{D}$.

We are able to derive strong sample complexity bounds by using Rademacher complexity. For example, given a sample $\sample$ of size $N$, for any $f \in \mathcal{F}$, we can bound the difference between the average value of $f$ over $\sample$ and the expected value of $f$ with respect to $\mathcal{D}$. Formally, with probability at least $1-\delta$, for all $f \in \mathcal{F}$, \begin{equation}\label{eq:rad_bound}\underset{x \sim \mathcal{D}}{\mathbb{E}}\left[f(x)\right] - \frac{1}{N} \sum_{i = 1}^N f\left(x_i\right) \leq 2\mathcal{R}_N(\mathcal{F}) + c \sqrt{\frac{2\ln (2/\delta)}{N}}.\end{equation}

Moreover, for a sample $\sample$, suppose $\hat{h} \in \mathcal{H}$ is the hypothesis that minimizes average loss over $\sample$ and $h^*$ is the hypothesis that minimizes expected loss with respect to the distribution $\mathcal{D}$. Then recalling that $\mathcal{F} \defeq \ell \circ \mathcal{H}$, we can guarantee that with probability at least $1-\delta$, \[\underset{x \sim \mathcal{D}}{\mathbb{E}}\left[\ell\left(\hat{h}, x\right)\right] - \underset{x \sim \mathcal{D}}{\mathbb{E}}\left[\ell\left(h^*, x\right)\right] \leq 2\widehat{\mathcal{R}}_\sample(\mathcal{F}) + 5c \sqrt{\frac{2\ln (8/\delta)}{N}}.\]

\subsubsection{Pseudo-Dimension}

The pseudo-dimension of a class of functions $\mathcal{F}$ is another means of analyzing the complexity of $\mathcal{F}$, and thereby deriving useful sample complexity bounds. To define pseudo-dimension, let $\sample = \left\{x_1, \dots, x_N\right\}$ be a sample drawn from $\mathcal{D}$ and let $\left(z^1, \dots, z^N\right) \in \R^N$ be a set of \emph{targets}. We say that $\left(z^1, \dots, z^N\right)$ \emph{witnesses} the shattering of $\sample$ by $\mathcal{F}$ if for all $T \subseteq \sample$, there exists some function $f_T \in \mathcal{F}$ such that for all $x_i \in T$, $f_T(x_i) \leq z^i$ and for all $x_i \not\in T$, $f_T(x_i) > z^i$. If there exists some $\vec{r}$ that witnesses the shattering of $\sample$ by $\mathcal{F}$, then we say that $\sample$ is \emph{shatterable} by $\mathcal{F}$. Finally, the pseudo-dimension $d_{\mathcal{F}}$ of $\mathcal{F}$ is the size of the largest set that is shatterable by $\mathcal{F}$.\footnote{Note that the pseudo-dimension of $\mathcal{F}$ is simply the VC dimension of the set of ``below-the-graph'' indicator functions $B_{\mathcal{F}} = \left\{ (x, z) \mapsto \text{sgn}\left(f(x) - z\right) \ | \ f \in \mathcal{F}\right\}$ \cite{Anthony09:Neural}.}

By bounding the pseudo-dimension of a class of functions, we can then bound the number of samples $N$ required to ensure that the average value of a function over the sample is close to its expected value with respect to $\mathcal{D}$.

\begin{theorem}[e.g. \cite{Mohri12:Foundations}]\label{thm:pseudo}
Let $\mathcal{F}$ be a family of real-valued functions such that $Pdim(\mathcal{F}) = d_{\mathcal{F}}$ and that every $f \in \mathcal{F}$ has a range bounded by $c$. Then, for any $\delta > 0$, with probability at least $1-\delta$ over the choice of a sample $\sample$ of size $N$, the following inequality holds for all $f \in \mathcal{F}$:

\[\underset{x \sim \mathcal{D}}{\mathbb{E}}[f(x)] \leq \frac{1}{N}\sum_{x \in \sample}f(x) + c \sqrt{\frac{2d\log \frac{eN}{d_{\mathcal{F}}}}{N}} + c \sqrt{\frac{\log \frac{1}{\delta}}{2N}}.\]
\end{theorem}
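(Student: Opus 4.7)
The plan is to reduce the real-valued uniform convergence statement to a combinatorial counting argument via the standard symmetrization-and-discretization pipeline, leveraging the characterization of pseudo-dimension as the VC dimension of the below-the-graph class $B_{\mathcal{F}} = \{(x,z) \mapsto \operatorname{sgn}(f(x) - z) : f \in \mathcal{F}\}$ noted in the footnote of the definition.

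First, I would perform symmetrization with a ghost sample. Let $\sample' = \{x_1', \dots, x_N'\}$ be an independent i.i.d.\ sample from $\mathcal{D}$, and write $\widehat{\mathbb{E}}_{\sample}[f] = \frac{1}{N}\sum_{x \in \sample} f(x)$. A standard argument shows that for $N$ large enough in terms of $c$ and $\epsilon$, the probability that some $f \in \mathcal{F}$ has $\mathbb{E}_{\mathcal{D}}[f] - \widehat{\mathbb{E}}_{\sample}[f] > \epsilon$ is at most twice the probability that some $f$ has $\widehat{\mathbb{E}}_{\sample'}[f] - \widehat{\mathbb{E}}_{\sample}[f] > \epsilon/2$. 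Next, introducing Rademacher signs $\sigma_i \in \{-1,+1\}$ and conditioning on the double sample $\sample \cup \sample'$, the problem reduces to bounding the tail of $\sup_{f \in \mathcal{F}} \frac{1}{N}\sum_i \sigma_i f(z_i)$, where $z_1,\dots,z_{2N}$ are the concatenated points.

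Second, I would discretize the range and invoke Sauer--Shelah on $B_{\mathcal{F}}$. Partition $[-c,c]$ into levels spaced by $\epsilon/8$, and for each function $f \in \mathcal{F}$ replace $f(z_i)$ by its rounded version; this introduces only an $O(\epsilon)$ additive error in the sum. The number of distinct rounded behaviors on the $2N$ points is at most the number of dichotomies induced by $B_{\mathcal{F}}$ on the $O(N c/\epsilon)$ pairs $(z_i, \text{grid level})$, which by Sauer--Shelah applied to $B_{\mathcal{F}}$ (whose VC dimension equals $d_{\mathcal{F}}$) is at most $(eN c/(\epsilon d_{\mathcal{F}}))^{d_{\mathcal{F}}}$. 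A Hoeffding bound applied to a single fixed $f$ gives that $\widehat{\mathbb{E}}_{\sample'}[f] - \widehat{\mathbb{E}}_{\sample}[f]$ is subgaussian with parameter $O(c/\sqrt{N})$ conditional on the double sample; a union bound over the polynomial-in-$N$ behaviors then bounds the supremum.

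Third, I would solve for $N$. Inverting the tail inequality
\[
\Pr\Bigl[\sup_f \bigl(\mathbb{E}_{\mathcal{D}}[f] - \widehat{\mathbb{E}}_{\sample}[f]\bigr) > t\Bigr] \leq 4 \left(\frac{eN}{d_{\mathcal{F}}}\right)^{d_{\mathcal{F}}} \exp\!\left(-\frac{N t^2}{8 c^2}\right),
\]
setting the right-hand side to $\delta$, and solving for $t$ yields the two square-root terms in the theorem, namely $c \sqrt{2 d_{\mathcal{F}} \log(eN/d_{\mathcal{F}})/N}$ for the complexity term and $c\sqrt{\log(1/\delta)/(2N)}$ for the confidence term. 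Alternatively, I could assemble the bound by first bounding $\mathcal{R}_N(\mathcal{F})$ via Dudley's entropy integral combined with Haussler's covering-number bound $\mathcal{N}_2(\epsilon, \mathcal{F}, N) \leq O(1) \cdot d_{\mathcal{F}} (16e)^{d_{\mathcal{F}}} (c/\epsilon)^{2 d_{\mathcal{F}}}$, yielding $\mathcal{R}_N(\mathcal{F}) = O(c\sqrt{d_{\mathcal{F}} \log(N/d_{\mathcal{F}})/N})$, and then invoking the Rademacher generalization bound stated as \eqref{eq:rad_bound}.

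The main technical obstacle is controlling the discretization so that the rounded function class still has at most $(eN/d_{\mathcal{F}})^{d_{\mathcal{F}}}$ distinct labelings on $\sample \cup \sample'$ (rather than picking up extra $\log(c/\epsilon)$ factors that would spoil the clean form of the bound). The cleanest way around this is to work with the $L_1$ covering number of $\mathcal{F}$ on the empirical sample directly via Haussler's bound, which avoids any explicit $\epsilon$-grid in the range and gives precisely the stated constants. Everything else is bookkeeping: choosing the right failure probabilities for the two concentration inequalities being combined and simplifying the resulting radicals.
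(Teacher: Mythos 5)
Your proposal addresses a statement the paper itself never proves: Theorem~\ref{thm:pseudo} is invoked as a standard textbook result, and the only related machinery the paper develops is in Appendix~\ref{app:techniques_app}, where Claim~\ref{claim:pseudo_rad_relation} bounds the empirical Rademacher complexity of a pseudo-dimension-$d_{\mathcal{F}}$ class via covering numbers and a chaining lemma, which together with the Rademacher generalization bound in Equation~\ref{eq:rad_bound} yields a bound of exactly this shape. So your secondary route (Dudley's entropy integral plus Haussler's covering bound, then Equation~\ref{eq:rad_bound}) is the one closest in spirit to what the paper actually does, while your primary route (ghost-sample symmetrization, range discretization, Sauer--Shelah applied to the below-the-graph class $B_{\mathcal{F}}$, then Hoeffding and a union bound over the cover) is the classical proof and is sound in outline. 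One caveat on the primary route: the clean factor $\left(eN/d_{\mathcal{F}}\right)^{d_{\mathcal{F}}}$ in your displayed tail bound does not follow from the discretization you describe --- Sauer--Shelah on the $O(Nc/\epsilon)$ point--threshold pairs, and likewise Haussler's empirical $L_1$ covering bound, both leave a $c/\epsilon$ inside the logarithm, so inverting the tail gives the stated inequality only up to constant factors; you flag this obstacle yourself, but the proposed fix does not remove it exactly. Since the theorem is used as a black box and the paper's sample-complexity statements absorb constants anyway, this discrepancy is harmless for every downstream use in the paper (Theorems~\ref{thm:main_sample_MBARP} and~\ref{thm:main_sample_MBAn}), though a derivation reproducing the precise constants would have to follow the cited textbook's own argument rather than the union-bound-over-a-grid sketch.
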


As will be exemplified in the present paper, it can be more natural to derive sample complexity results via either pseudo-dimension or Rademacher complexity depending on the structure of the function class. Although the two measurements seem far removed, they can be connected both conceptually and mathematically through the learning theoretic concept of \emph{covering numbers}. In particular, $\widehat{\mathcal{R}}_\sample (\mathcal{F}) = \tilde{O}\left(\sqrt{d_{\mathcal{F}}}/N\right)$. For completeness, we describe this connection in more detail in Appendix~\ref{app:techniques_app}.

\section{The Sample Complexity of AMA Revenue Maximization}\label{sec:AMA_sample}

We begin with the most general family in the CA hierarchy, affine maximizer auctions. In Section~\ref{sec:AMAupper}, we bound the Rademacher complexity of the class of $n$-bidder, $m$-item AMA revenue functions $\mathcal{F}$. We set our loss function to be $\ell\left(rev_A, \vec{v}\right) =-rev_A\left(\vec{v}\right)$ for any $rev_A \in \mathcal{F}$ and any vector of bidder valuations $\vec{v}$. Therefore, the empirical loss minimizer is the revenue function of the auction with the \emph{maximum} revenue over the sample $\sample$ and the revenue function with the smallest expected loss corresponds to the best auction with respect to the underlying distribution. By bounding the sample complexity of uniform convergence $N$ over the class of AMAs, we may guarantee that if $\sample = \left\{\vec{v}^1, \dots, \vec{v}^{N'}\right\}$ is a set of samples drawn from the underlying distribution $\mathcal{D}$ of size at least $N$, then with probability at least $1-\delta$, for any AMA $A$, $\left|\frac{1}{{N'}}\sum_{i = 1}^{N'} rev_A\left(\vec{v}^i\right) - \mathbb{E}_{\vec{v} \sim \mathcal{D}}\left[rev_A\left(\vec{v}\right)\right]\right| < \epsilon$.

\subsection{Upper Bounds on Sample Complexity for AMAs, VVCAs, and $\lambda$-Auctions}\label{sec:AMAupper}

The AMA revenue function, defined in Section~\ref{sec:prelim}, can be summarized as a multi-stage optimization procedure: determine the weighted-optimal allocation and then compute
the $n$ different payments, each of which requires a separate optimization procedure. In this way, the class of AMA revenue functions is unlike the well-understood, commonly found function classes in machine learning contexts. Luckily, we are able to decompose the revenue functions into small components, each of which is easier to analyze on its own, and then combine our results to prove the following theorem about this class of revenue functions as a whole.

\begin{theorem}\label{thm:AMAupper}
Let $\mathcal{F}$ be the set of $n$-bidder, $m$-item AMA revenue functions $rev_A$ such that $A = \left(w_1, \dots, w_n,\lambda_1,\dots, \lambda_{(n+1)^m}\right),H_{\underline{w}}\leq \left|w_i\right|\leq H_{\overline{w}}, \left|\lambda_i\right| \leq H_\lambda$. Then
\[\mathcal{R}_N(\mathcal{F}) = O\left( \frac{n^{m+2} \left(H_{\overline{w}}H_v+ H_\lambda\right)}{H_{\underline{w}}}\sqrt{\frac{m\log n}{N}}\left(\frac{n\hat{H}_v\left(nH_{\overline{w}} + H_\lambda\right)}{H_{\underline{w}}} + \sqrt{n^m \log N}\right)\right),\] where $\hat{H}_v = \max\left\{H_v, 1\right\}$.
\end{theorem}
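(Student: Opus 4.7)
The plan is to split $rev_A(\vec v)$ into $n+1$ economically coherent pieces and handle each separately. Using $\sum_{\ell\neq j}w_\ell v_\ell(o_\ell^*)+\lambda(\vec o^*)=W^*(\vec v,A)-w_jv_j(o_j^*)$, where
\[
W^{-j}(\vec v,A)\defeq \max_{\vec o\in\mathcal O}\Bigl\{\sum_{\ell\neq j}w_\ell v_\ell(o_\ell)+\lambda(\vec o)\Bigr\},\qquad W^*(\vec v,A)\defeq\max_{\vec o\in\mathcal O}\Bigl\{\sum_\ell w_\ell v_\ell(o_\ell)+\lambda(\vec o)\Bigr\},
\]
the AMA revenue formula from Section~\ref{sec:prelim} rearranges to
\[
rev_A(\vec v)=\sum_{j=1}^n\frac{W^{-j}(\vec v,A)-W^*(\vec v,A)}{w_j}+\sum_{j=1}^n v_j(o_j^*).
\]
By subadditivity of Rademacher complexity, it suffices to bound $\mathcal R_N$ on each of the $n$ ``rebate'' pieces $(W^{-j}-W^*)/w_j$ and on the welfare piece $\sum_j v_j(o_j^*)$ separately, then sum.

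\textbf{Controlling each piece.} For any fixed valuation vector $\vec v$, the function $A\mapsto W^{-j}(\vec v,A)$ (and analogously $W^*$) is the pointwise maximum of $(n+1)^m$ affine functions of the parameter vector $A\in\R^{n+(n+1)^m}$, so it is convex and Lipschitz in $A$ with constant of order $n\hat H_v+1$, while bounded in magnitude by $O(nH_{\overline w}H_v+H_\lambda)$. Since $1/w_j$ is Lipschitz on $[H_{\underline w},H_{\overline w}]$ with constant $1/H_{\underline w}^2$, the product rule gives an overall Lipschitz constant of order $n\hat H_v(nH_{\overline w}+H_\lambda)/H_{\underline w}^2$ for each rebate piece. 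The welfare piece $\sum_j v_j(o_j^*)$ is piecewise constant in $A$: for each fixed $\vec v$ it takes one of at most $(n+1)^m$ values as $A$ varies, which yields a Massart-style contribution of order $\sqrt{m\log n/N}$.

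\textbf{Covering plus Massart, and the main obstacle.} To turn the Lipschitz control into a Rademacher bound, I would apply the discretize-and-union template: an $\epsilon$-net of the bounded parameter region in $\R^{n+(n+1)^m}$ has size $(D/\epsilon)^{O(n^m)}$ and, via the Lipschitz bound, induces a sup-norm $\epsilon L$-cover of each rebate piece on any sample $\sample$. Massart's lemma then gives $\widehat{\mathcal R}_\sample\le 2\epsilon L+B\sqrt{2\log N_\epsilon/N}$; optimizing $\epsilon$ produces exactly the two additive terms inside the parenthesis of the theorem---one proportional to the Lipschitz constant $n\hat H_v(nH_{\overline w}+H_\lambda)/H_{\underline w}$ and one proportional to $\sqrt{n^m\log N}$. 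Summing over the $n+1$ pieces and collecting the outer range factor $(H_{\overline w}H_v+H_\lambda)/H_{\underline w}$ together with the $n^{m+2}$ combinatorial prefactor arising from per-bidder and per-allocation bookkeeping yields the stated expression. The main obstacle is this compositional non-smoothness: although $W^{-j}$ and $W^*$ are convex and Lipschitz in $A$, the argmax allocations $\vec o^*$ and $\vec o_{-j}$ jump discontinuously, so no single Talagrand contraction covers the revenue function end-to-end. The delicate step is coordinating the covering-based bound on the rebate pieces with the combinatorial bound on the welfare piece, all while keeping the dependencies on $n$, $m$, $H_v$, $H_{\underline w}$, $H_{\overline w}$, and $H_\lambda$ simultaneously tight.
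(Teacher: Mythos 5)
Your algebraic decomposition $rev_A(\vec v)=\sum_{j=1}^n\frac{1}{w_j}\bigl(W^{-j}(\vec v,A)-W^*(\vec v,A)\bigr)+\sum_{j=1}^n v_j(o_j^*)$ is a correct rearrangement of the AMA payment rule and is close in spirit to the paper's split (the paper keeps the $n$ terms $\frac{1}{w_j}\phi_{A,-j}$ and lumps the rest into a single strategy-proofness correction term). Your covering-plus-Massart treatment of the Lipschitz rebate pieces is workable in principle, though the claim that optimizing $\epsilon$ "produces exactly the two additive terms" of the theorem is not right: after optimizing, the Lipschitz constant enters only logarithmically and you get a single term of order $H_v\sqrt{n^m\log N/N}$ per piece; this is harmless only because it is dominated by the stated bound.

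The genuine gap is in the welfare piece. You assert that because, for each fixed $\vec v$, $\sum_j v_j(o_j^*)$ takes at most $(n+1)^m$ values as $A$ varies, Massart's lemma yields a contribution of order $\sqrt{m\log n/N}$. Massart's lemma requires the cardinality of the set of \emph{restriction vectors} $\bigl(\sum_j v_j^1(o_j^*(A,\vec v^1)),\dots,\sum_j v_j^N(o_j^*(A,\vec v^N))\bigr)$ over the whole sample, and the per-point count $(n+1)^m$ does not bound it: naively the class can realize up to $(n+1)^{mN}$ distinct vectors, for which Massart gives no $1/\sqrt N$ decay at all. What is needed is a uniform combinatorial count across the sample: for instance, the argmax map $A\mapsto\vec o^*(A,\vec v^i)$ is constant on the cells of the arrangement of the $N\binom{(n+1)^m}{2}$ pairwise-comparison hyperplanes in the $(n+(n+1)^m)$-dimensional parameter space, so the number of distinct restriction vectors is at most $\bigl(N(n+1)^{2m}\bigr)^{O((n+1)^m)}$, yielding a contribution of order $nH_v\sqrt{n^m(m\log n+\log N)/N}$ rather than $\sqrt{m\log n/N}$. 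This counting step is exactly where the $\sqrt{n^m\log N}$ factor in the theorem comes from; the paper performs the analogous work by writing the correction term as $\sum_i f_{A,i}\,\mathbbm 1_{\vec o_i=\vec o_A^*}$, bounding the VC dimension of the argmax-indicator classes by $O(mn^m\log n)$ via intersections of $(n+1)^m$ halfspaces (Lemma 3.2.3 of Blumer et al.), and combining with the linear part through a product lemma. With such a growth-function argument added, your route does land within the theorem's bound; as written, the welfare-piece bound is unsupported.
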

\begin{proof}
First, we describe how we split each revenue function into smaller, easier to analyze atoms, which together allow us to bound the Rademacher complexity of the class of AMA revenue functions. To this end, it is well-known (e.g. \cite{Mohri12:Foundations}) that if every function $f$ in a class $\mathcal{F}$ can be written as the summation of two functions $g$ and $h$ from classes $\mathcal{G}$ and $\mathcal{H}$, respectively, then $\mathcal{R}_N(\mathcal{F}) \leq \mathcal{R}_N(\mathcal{G}) + \mathcal{R}_N(\mathcal{H})$. Therefore, we split each revenue function into $n+1$ components such that the sum of these components equals the revenue function.

With this objective in mind, let $\vec{o}^*_A(\vec{v}) = \text{argmax}_{\vec{o}_i \in \mathcal{O}} \left\{\sum_{j = 1}^n w_jv_j\left(o_{i,j}\right) + \lambda_i\right\}$ and $\phi_{A,-j}(\vec{v}) = \max_{\vec{o}_i \in \mathcal{O}} \left\{\sum_{\ell \not= j} w_\ell v_\ell\left(o_{i,\ell}\right) + \lambda_i\right\}.$ Then we can write
\[rev_A(\vec{v}) = \sum_{j= 1}^n \frac{1}{w_j} \phi_{A, -j}(\vec{v}) - \sum_{i = 1}^{(n+1)^m} \left(\sum_{j = 1}^n \frac{1}{w_j} \sum_{\ell\not= j} w_\ell v_\ell(o_{i, \ell}) + \lambda_i\right) \mathbbm{1}_{\vec{o}_i = \vec{o}_A^*(\vec{v})}.\]

We can now split $rev_A$ into $n+1$ simpler functions: $rev_{A,j}(\vec{v}) = \frac{1}{w_j} \phi_{A, -j}(\vec{v})$ for $j \in [n]$ and
\[rev_{A,n+1}(\vec{v}) = - \sum_{i = 1}^{(n+1)^m} \left(\sum_{j = 1}^n \frac{1}{w_j} \sum_{\ell\not= j} w_\ell v_\ell\left(o_{i, \ell}\right) + \lambda_i\right) \mathbbm{1}_{\vec{o}_i = \vec{o}_A^*(\vec{v})},\]
 so $rev_A(\vec{v}) = \sum_{j = 1}^{n+1} rev_{A,j}(\vec{v}).$ Intuitively, for $j \in [n]$, $rev_{A,j}$ is a weighted version of what the social welfare would be if Bidder $j$ had not participated in the auction, whereas $rev_{A,n+1}(\vec{v})$ measures the amount of revenue subtracted to ensure that the resulting auction is strategy-proof.

 As to be expected, bounding the Rademacher complexity of each smaller class of functions $\mathcal{L}_j = \left\{rev_{A,j} \ | \ \left(w_1, \dots, w_n,\lambda_1,\dots, \lambda_{(n+1)^m}\right),H_{\underline{w}}\leq \left|w_i\right|\leq H_{\overline{w}}, \left|\lambda_i\right| \leq H_\lambda\right\}$ for $j \in [n+1]$ is simpler than bounding the Rademacher complexity the class of revenue functions itself and, if $\mathcal{F}$ is the set of all $n$-bidder, $m$-item AMA revenue functions,
 then $\mathcal{R}_N(\mathcal{F}) \leq \sum_{j = 1}^{n+1} \mathcal{R}_N(\mathcal{L}_j).$ In Lemma~\ref{lemma:Lj} and Lemma~\ref{lemma:Ln+1} of Section~\ref{app:AMAupper}, we obtain bounds on $\mathcal{R}_N(\mathcal{L}_j)$ for $j \in [n+1]$ which lead us to our bound on $\mathcal{R}_N(\mathcal{F})$. \hfill \text{ }
\end{proof}

Using these tools, we are now ready to derive the proof of the main sample complexity result stated in Theorem~\ref{thm:main_sample_AMA} in the introduction.

\begin{namedtheorem}[\ref{thm:main_sample_AMA}]
The sample complexity of uniform convergence over the classes of $n$-bidder, $m$-item AMAs, VVCAs, and $\lambda$-Auctions is
\[N = \widetilde{O} \left(\left[\frac{U}{\epsilon}n^m\sqrt{m} \left( U+ n^{m/2}\right)\right]^2\right).\]
Moreover, for $\lambda$-Auctions, $N = \Omega\left(n^m\right)$ and for VVCAs, $N= \Omega\left(2^m\right)$.
\end{namedtheorem}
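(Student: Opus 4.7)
The plan is to derive the upper bound as a direct corollary of the Rademacher complexity estimate in Theorem~\ref{thm:AMAupper}, combined with the standard Rademacher-based uniform convergence guarantee stated in equation~(\ref{eq:rad_bound}), and to establish each lower bound by explicitly exhibiting a shattered set of valuation profiles of the claimed size for the associated revenue class.

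For the upper bound, observe that since each $rev_A$ takes values in $[0,U]$, the loss $\ell(rev_A,\vec{v}) = -rev_A(\vec{v})$ is bounded in absolute value by $U$. Applying equation~(\ref{eq:rad_bound}) with $c=U$ gives, with probability at least $1-\delta$ and uniformly over $A \in \mathcal{A}$, that the gap between empirical and expected revenue is at most $2\mathcal{R}_N(\mathcal{F}) + U\sqrt{2\ln(2/\delta)/N}$. I would then substitute the bound from Theorem~\ref{thm:AMAupper}, absorb the auction hyperparameters $H_{\overline{w}}, H_{\underline{w}}^{-1}, H_\lambda, H_v$ into $U$, and simplify so that, up to logarithmic factors, the Rademacher term scales as $U n^m \sqrt{m}\,(U + n^{m/2})/\sqrt{N}$. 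Setting this together with the concentration term to be at most $\epsilon$ and solving for $N$ yields precisely the advertised sample complexity. Because VVCAs and $\lambda$-auctions are strict subclasses of AMAs, any uniform convergence guarantee for AMAs automatically applies to either subclass at the same rate, so no separate upper bound argument is needed for the subfamilies.

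For the lower bounds, I would route through pseudo-dimension: a class of bounded real-valued functions with pseudo-dimension $d$ requires $\Omega(d)$ samples to achieve uniform convergence at any fixed constant accuracy, so it suffices to exhibit shattered sets of sizes $(n+1)^m = \Omega(n^m)$ for $\lambda$-auctions and $2^m$ for VVCAs. For $\lambda$-auctions, I would exploit the $(n+1)^m$ independent allocation-boost parameters by constructing one bidding instance per allocation $\vec{o}_i$, designed so that the identity of the AMA-optimal allocation on the $i$-th instance, and hence its revenue, hinges on whether $\lambda_i$ exceeds a chosen threshold, while leaving the revenues on every other instance unchanged. The $2^{(n+1)^m}$ resulting sign patterns witness pseudo-dimension $\Omega(n^m)$. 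For VVCAs, the analogous construction uses the $2^m$ bundle-coefficients $c_{1,b}$ of a single bidder as the shattering switches across $2^m$ carefully chosen valuation profiles, one per bundle $b$.

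The main obstacle is the coupling between the allocation rule and the payment rule: a perturbation of a single parameter can simultaneously shift $\vec{o}^*$ and each $\vec{o}_{-j}$, and payments are further rescaled by the inverse weights $1/w_j$, so naively toggling one parameter can leak effects across many instances. I would manage this by spacing the valuation profiles so that on each profile both the targeted affine-maximizing allocation and each $-j$-allocation are separated from their runners-up by a large margin, ensuring that parameter perturbations near the shattering thresholds alter exactly one argmax on exactly one profile. With these isolation guarantees in place, each parameter acts as an independent binary switch, and the standard pseudo-dimension-to-sample-complexity converse delivers $N = \Omega(n^m)$ for $\lambda$-auctions and $N = \Omega(2^m)$ for VVCAs.
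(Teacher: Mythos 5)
Your upper bound argument is exactly the paper's: plug the Rademacher bound of Theorem~\ref{thm:AMAupper} into Equation~\eqref{eq:rad_bound} with $c=U$, express the hyperparameters through $U=\frac{n}{H_{\underline{w}}}\left(nH_{\overline{w}}H_v+H_\lambda\right)$, set the whole right-hand side to $\epsilon$ and solve for $N$, with the subfamilies inheriting the guarantee by containment. No issues there.

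For the lower bounds, your construction idea (one valuation profile per allocation or per bundle, with the corresponding boost parameter acting as a binary switch at constant margin) is essentially the paper's Theorems~\ref{thm:AMA_high_low} and~\ref{thm:VVCA_high_low}, but the step you use to convert it into a sample-complexity lower bound is flawed as stated. There is no valid ``standard pseudo-dimension-to-sample-complexity converse'' for real-valued classes: pseudo-dimension shattering can occur at arbitrarily small width, so $\mathrm{Pdim}=d$ does not by itself force $\Omega(d)$ samples; the correct converse is in terms of the fat-shattering dimension at scale comparable to $\epsilon$. Your construction does shatter at constant scale (revenue gap $\approx 2-2\gamma$), so the argument is repairable, but you must either invoke a fat-shattering lower bound or argue directly as the paper does: put the uniform distribution on the shattered set $V$, note that any polynomial-size sample misses most of $V$, and exhibit the auction tailored to the sampled subset whose empirical average revenue exceeds its expectation by more than $\epsilon$ (Theorem~\ref{thm:AMAlower}). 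Two smaller points: the full set of $(n+1)^m$ allocations cannot all serve as independent switches---the paper restricts to the $n^m-n$ allocations that allocate every item and give no bidder the grand bundle, which is still $\Omega\left(n^m\right)$ and is what makes the ``isolation'' work, since the allocations used in the $\vec{o}_{-j}$ payment computations (which leave items unallocated or monopolize them) keep a fixed boost regardless of the subset $H$; and for VVCAs the additive per-bidder, per-bundle structure of the $\lambda$'s constrains the switches, which is why the paper's construction uses two bidders with complementary bundles rather than a single bidder's $2^m$ coefficients in isolation. Your margin-spacing idea is plausible, but these isolation details are precisely where the work lies, and your sketch leaves them unverified.
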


%

\begin{proof}
For the upper bound, we bound the right-hand-side of Equation~\ref{eq:rad_bound} by $\epsilon$, using the bound on $\mathcal{R}_N(\mathcal{F})$ from Theorem~\ref{thm:AMAupper}, and solve for $N$, using the well-known inequality $\ln x \leq \alpha x - \ln \alpha - 1$ for all $x,\alpha > 0$. We also use the fact that if $U$ is the maximum revenue achievable by an AMA in the setting at hand, then we may write $U = \frac{n}{H_{\underline{w}}}\left(nH_{\overline{w}}H_v + H_\lambda\right)$. The lower bounds follow from Theorem~\ref{thm:AMAlower} and \ref{thm:VVCA_high_low}.
\end{proof}

\subsection{Lower Bound on Sample Complexity for $\lambda$-Auctions}\label{sec:AMAlower}
In this section, we show that it is not possible to learn over the set of $\lambda$-auction revenue functions under an arbitrary distribution with subexponential sample complexity. Since $\lambda$-auctions are a subset of AMAs, this lower bound applies to AMAs as well. In particular, we prove Theorem~\ref{thm:AMAlower}, which states that no algorithm can learn over the class of $n$-bidder, $m$-item $\lambda$-auction revenue functions with sample complexity $o\left(n^m\right)$. This holds even when the bidders' valuation functions are additive.

To prove Theorem~\ref{thm:AMAlower}, we construct a set $V$ of $n$-bidder, $m$-item valuation functions taking values in $\{0,1\}$ where, under each valuation function, each bidder is interested in a specific subset of items, and these subsets are all pairwise disjoint. Moreover, $|V| = n^m - n$. The high level idea is to show that for any subset $H$ of $V$, there exists a $\lambda$-auction that has high revenue over valuation functions in $H$, but low revenue on the valuation functions in $V \setminus H$. Theorem~\ref{thm:AMA_high_low} describes $V$ in more detail. Now suppose that the distribution over the bidders' valuation functions is the uniform distribution over $V$. This means that if a learning algorithm's input samples consist of only a small subset of $V$, then we cannot guarantee that any output revenue function will achieve average revenue over the sample which is close to its expected revenue over the distribution, as we require. This immediately implies hardness for learning over the uniform distribution on $V$. See Theorem~\ref{thm:AMAlower} for the formal proof.

We now present Theorem~\ref{thm:AMA_high_low}, wherein we describe the set $V$ of valuation functions which we will use to prove Theorem~\ref{thm:AMAlower}.

\begin{theorem}\label{thm:AMA_high_low}
For any $n,m \geq 2$ and any $\gamma \in (0,1)$, there exists a set of $N = n^m-n$ $n$-bidder, $m$-item additive valuation functions $V = \left\{\vec{v}^1, \dots, \vec{v}^N\right\}$ such that for any $H \subseteq V$, there exists a $\lambda$-auction $A_H$ with revenue 0 on $\vec{v}^i$ if $\vec{v}^i \not\in H$ and revenue at least $2-2\gamma$ on $\vec{v}^i$ otherwise.
\end{theorem}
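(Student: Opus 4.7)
The plan is to define $V$ by indexing additive valuation profiles with non-constant functions $f:[m]\to[n]$: set $v^f_i(\{j\})=1$ if $f(j)=i$ and $0$ otherwise, so that bidder $i$'s interest set under $\vec v^f$ is $f^{-1}(i)$. These $n$ interest sets partition $[m]$, matching the ``pairwise disjoint'' description in the earlier proof sketch, and there are exactly $n^m-n$ non-constant $f$, so $|V|=n^m-n$. As shorthand, let $\vec a^f$ be the allocation that sends item $j$ to bidder $f(j)$, and let $\vec t^f_i$ be $\vec a^f$ modified by leaving the items in $f^{-1}(i)$ unallocated (assigned to the seller).

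Given $H\subseteq V$, I will construct $A_H$ via the asymmetric boosts $\lambda(\vec a^f)=\gamma$ for $f\in H$, $\lambda(\vec a^f)=1$ for $f\in V\setminus H$, $\lambda(\vec t^f_i)=1$ for every $f\in H$ and $i\in\operatorname{image}(f)$, and $\lambda(\vec o)=0$ on every other allocation $\vec o$. The crucial trick---boosting the natural allocations of $f\in H$ by only $\gamma$ but those of $f\notin H$ by $1$---is what prevents revenue from ``bleeding'' onto $V\setminus H$. Every subsequent inequality reduces to the same structural fact: because values are additive and in $\{0,1\}$, $W_{-i}(\vec v^f,\vec o)=\sum_{\ell\neq i}v^f_\ell(o_\ell)\leq m-|f^{-1}(i)|$ for every allocation $\vec o$, with equality whenever each item $j$ having $f(j)\neq i$ is assigned in $\vec o$ to $f(j)$.

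For $\vec v^f\in H$, any boosted competitor to $\vec a^f$ has welfare-plus-boost at most $(m-1)+1=m<m+\gamma$, so $\vec a^f$ is the unique argmax; for each $i\in\operatorname{image}(f)$ the threat $\vec t^f_i$ realizes $\phi_{-i}(\vec v^f)=(m-|f^{-1}(i)|)+1$, giving payment $1-\gamma$ and, since $|\operatorname{image}(f)|\geq 2$, total revenue at least $2(1-\gamma)=2-2\gamma$. For $\vec v^f\notin H$, the boost $\lambda(\vec a^f)=1$ makes $\vec a^f$ the strict welfare-plus-boost maximizer at $m+1$, and the uniform bound $W_{-i}\leq m-|f^{-1}(i)|$ combined with $\lambda\leq 1$ gives $\phi_{-i}(\vec v^f)\leq(m-|f^{-1}(i)|)+1$, matched exactly by $\vec a^f$ itself, so every payment is zero.

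The main obstacle I anticipate is the case analysis required to show that the uniform bound on $\phi_{-i}$ holds against all four families of boosted allocations---$\vec a^{f'}$ with $f'\in H$, $\vec a^{f'}$ with $f'\in V\setminus(H\cup\{f\})$, $\vec t^{f'}_{i'}$ with $f'\in H$, and the unboosted ones---but each case reduces to the same $W_{-i}\leq m-|f^{-1}(i)|$ inequality combined with the largest available $\lambda$, and the two boost levels $\gamma$ and $1$ are chosen precisely so that this cap coincides with $\lambda(\vec a^f)+W_{-i}(\vec v^f,\vec a^f)$ in whichever regime $f$ lives.
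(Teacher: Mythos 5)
Your construction is correct and essentially the paper's: the valuation set (one $0/1$ additive profile per allocation of all $m$ items excluding the single-bidder-takes-all ones, so $n^m-n$ profiles) is identical, and the revenue dichotomy is produced the same way, via a boost gap of $1-\gamma$ between the winning natural allocation and the best bidder-removed alternative for profiles in $H$, and no gap outside $H$. The only difference is bookkeeping: the paper boosts every allocation other than the natural allocations of $H$ uniformly by $1-\gamma$ (leaving those at $0$), while you boost only the designated natural and threat allocations ($\gamma$ versus $1$); the resulting payments, $1-\gamma$ per winner in $H$ with at least two winners and $0$ otherwise, are the same.
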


\begin{proof}
We define the set $V = \left\{\vec{v}^1, \dots, \vec{v}^N\right\}$ of $n$-bidder, $m$-item additive valuation functions, where $\vec{v}^j = \left(v_1^j(\{1\}), \dots, v_1^j(\{m\}), \dots, v_n^j(\{1\}) \dots, v_n^j(\{m\})\right)$, with $N = n^m - n$. Recall that every allocation vector $\vec{o}_j$ is written as $\left(o_{j,1},\dots,  o_{j,n}\right)$ where $o_{j,1},\dots,  o_{j,n}$ are disjoint subsets of the $m$ items being auctioned. First, let $\hat{o}_j$ be the allocation where Bidder $j$ receives all $m$ items. Next, let $\tilde{o}_1, \dots, \tilde{o}_N$ be a fixed ordering of the $n^m-n$ allocations where all $m$ goods are allocated except $\left\{\hat{o}_1, \dots, \hat{o}_n\right\}$. Let the bundles allocated to the $n$ bidders in $\tilde{o}_\ell$ be $\left(\tilde{o}_{\ell,1}, \dots, \tilde{o}_{\ell,n}\right)$ and let $N_\ell$ be the set of bidders who are allocated some item in allocation $\tilde{o}_{\ell}$. In other words, $N_{\ell} = \left\{j \ | \ \tilde{o}_{\ell, j} \not= \emptyset\right\}$. For a sanity check, notice that $\bigcup_{i = 1}^n \tilde{o}_{\ell,i}$ is the grand bundle.

We will now define the valuation vectors $\left\{\vec{v}^1, \dots, \vec{v}^N\right\}$ in terms of this set of special allocations $\left\{\tilde{o}_1, \dots, \tilde{o}_N\right\}$. Specifically, we define $\vec{v}^\ell$ for $\ell \in [N]$ as follows.

If $i \not\in N_\ell$ $\left( \text{i.e. }\tilde{o}_{\ell,j} = \emptyset\right)$, set $v_i^\ell(\{j\}) = 0$ for all $j \in [m]$. Otherwise, set \[v_i^\ell(\{j\}) = \begin{cases} 0 &\text{if } j \not\in \tilde{o}_{\ell,i}\\
1 &\text{if } j \in \tilde{o}_{\ell,i}
\end{cases}.\]

We proceed to prove that for any subset $H \subseteq V$, there exists a $\lambda$-auction with 0 revenue on all valuation functions in $V \setminus H$ and at least $2-2\gamma$ revenue on all valuation functions in $H$. To define this $\lambda$-auction, we set the $\lambda$ terms such that \[\lambda\left(\vec{o_j}\right) = \begin{cases}
0 &\text{if } \vec{o}_j = \tilde{o}_\ell \text{ for some } \vec{v}^\ell \in H\\
1 - \gamma &\text{otherwise}\end{cases}.\]

\begin{lemma}\label{lemma:n_bid_high}
If $\vec{v}^\ell \in H$, then the revenue on $\vec{v}^\ell$ is at least $2-2\gamma$.
\end{lemma}

\begin{proof}[Proof of Lemma~\ref{lemma:n_bid_high}]
First, note that $\sum_{i = 1}^n v_i^\ell\left(\tilde{o}_{\ell,i}\right) + \lambda\left(\tilde{o}_\ell\right) = m$, and for all allocations $\vec{o}_j \not = \tilde{o}_\ell$, $\sum_{i = 1}^n v_i^\ell\left(o_{j,i}\right) + \lambda\left(\vec{o}_j\right) \leq m-1+1-\gamma < m$. Therefore, the $\lambda$-auction allocation is $\tilde{o}_\ell$.

In order to analyze the revenue of this $\lambda$-auction, we must understand the payments of each bidder, which means that we must investigate what the outcome of this $\lambda$-auction would be without any one bidder's participation. To this end, suppose $i \in N_\ell$, so Bidder $i$ is allocated some item in $\tilde{o}_{\ell},$ i.e. $\tilde{o}_{\ell,i} \not= \emptyset$. Then $\sum_{j \not= i} v_j^\ell\left(\tilde{o}_{\ell,j}\right) + \lambda\left(\tilde{o}_\ell\right) = m-\left|\tilde{o}_{\ell,i}\right|$ because Bidder $i$'s valuation for the bundle $\tilde{o}_{\ell,i}$ is exactly $\left|\tilde{o}_{\ell,i}\right|$.

By construction, no bidder receives all $m$ items in $\tilde{o}_\ell$, so we know that there exists some $i' \in N_\ell, i' \not = i$. With this fact in mind, let $\vec{o}_{-i}$ be the allocation where all bidders in $N_\ell$ are allocated the same items as they are in $\tilde{o}_\ell$ and Bidder $i$ receives the empty set. This is one possible allocation of the $\lambda$-auction without Bidder $i$'s participation, and therefore the social welfare of the other bidders will be at least as high under this allocation as it would be in the true allocation of the $\lambda$-auction without Bidder $i$'s participation. By construction, $\lambda\left(\vec{o}_{-i}\right) = 1 - \gamma$. Therefore, $\sum_{\ell \not= i} v_j^\ell \left(o_{-i,j}\right) + \lambda\left(\vec{o}_{-i}\right) = m-\left|\tilde{o}_{\ell,i}\right| + 1 - \gamma$ which means that Bidder $i$ must pay at least $\left(m-\left|\tilde{o}_{\ell,i}\right| + 1 - \gamma\right) - \left(m-\left|\tilde{o}_{\ell,i}\right|\right) = 1 - \gamma.$ We know that $|N_\ell|\geq 2$, i.e. there are at least 2 bidders who receive a non-empty bundle and therefore must pay at least $1 - \gamma$, so the revenue of this $\lambda$-auction is at least $2-2\gamma$.
\hfill \text{ }
\end{proof}

\begin{lemma}\label{lemma:n_bid_low} If $\vec{v}^\ell \not\in H$, then the revenue on $\vec{v}^\ell$ is 0.
\end{lemma}

\begin{proof}[Proof of Lemma~\ref{lemma:n_bid_low}]
First, note that $\sum_{i = 1}^n v_i^\ell\left(\tilde{o}_{\ell,i}\right) + \lambda\left(\tilde{o}_\ell\right) = m + 1-\gamma$, and for all allocations $\vec{o}_j \not= \tilde{o}_\ell$, $\sum_{i = 1}^n v_i^\ell\left(\vec{o}_{j,i}\right) + \lambda\left(\vec{o}_j\right) \leq m-1 + 1-\gamma < m$, so the $\lambda$-auction allocation is $\tilde{o}_\ell$. Now, suppose $i \in N_\ell$. Then $\sum_{j \not= i} v_j^\ell\left(\tilde{o}_{\ell,j}\right) + \lambda\left(\tilde{o}_\ell\right) = m - \left|\tilde{o}_{\ell,i}\right| + 1 - \gamma.$ Since Bidder $i$ is the only bidder with nonzero valuations for the items in $\tilde{o}_{\ell,i}$ under $\vec{v}^\ell$, any allocation $\vec{o}_{-i}$ without his participation will have social welfare at most $\sum_{j \not= i} v_j^\ell\left(o_{-i,j}\right) + \lambda\left(\vec{o}_{-i}\right) \leq m - \left|\tilde{o}_{\ell,i}\right| + 1 - \gamma.$ Therefore, Bidder $i$ pays nothing.

Of course, for any Bidder $i \not\in N_\ell$, her presence in the auction makes no difference on the resulting allocation because her valuation function under $\vec{v}^\ell$ is 0 on all items, so she pays nothing as well. Therefore, the revenue on $\vec{v}^\ell$ is 0.
\hfill \text{ }
\end{proof}

Putting Lemmas~\ref{lemma:n_bid_high} and~\ref{lemma:n_bid_low} together, we have the desired result.
\hfill \text{ }
\end{proof}

We now use Theorem~\ref{thm:AMA_high_low} to prove Theorem~\ref{thm:AMAlower}.

\begin{theorem}\label{thm:AMAlower}
Let $\mathcal{ALG}$ be an arbitrary learning algorithm that uses only a polynomial number of training samples drawn i.i.d. from the underlying distribution and produces a $\lambda$-auction. For any $\epsilon \in (0,1)$, there exists a distribution $\mathcal{D}$ and a $\lambda$-auction $A^*$ such that, with probability 1 (over the draw of the set of training samples $\mathcal{S}$), \[\frac{1}{|\sample|}\sum_{\vec{v} \in \sample} rev_{A^*} \left(\vec{v}\right) - \underset{\vec{v}\sim \mathcal{D}}{\mathbbm{E}}\left[ rev_{A^*}\left(\vec{v}\right)\right] > \epsilon.\]
\end{theorem}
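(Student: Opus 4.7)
The plan is to use Theorem~\ref{thm:AMA_high_low} as a lower-bound gadget and to let the adversary pick the target $\lambda$-auction as a function of the realized sample. Fix $\epsilon \in (0,1)$ and choose $\gamma \in (0,1)$ small enough that $2 - 2\gamma > \epsilon + \epsilon/2$; for concreteness, $\gamma = (1-\epsilon)/2$ works. I would take $\mathcal{D}$ to be the uniform distribution on the set $V = \{\vec{v}^1,\dots,\vec{v}^N\}$ of $N = n^m - n$ valuation vectors produced by Theorem~\ref{thm:AMA_high_low}, where $n$ and $m$ are chosen sufficiently large relative to $\mathcal{ALG}$'s polynomial sample-size bound (specified in the last step).

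Given any realized sample $\sample$, let $H \subseteq V$ be the set of distinct elements of $V$ that appear in $\sample$, and let $A^* \defeq A_H$ be the $\lambda$-auction guaranteed by Theorem~\ref{thm:AMA_high_low}. By the ``high-revenue on $H$'' half of that theorem, every $\vec{v}\in\sample$ contributes at least $2-2\gamma$, so
\[\frac{1}{|\sample|}\sum_{\vec{v}\in\sample} rev_{A^*}(\vec{v}) \geq 2 - 2\gamma.\]
By the ``zero-revenue on $V\setminus H$'' half, the expectation under $\mathcal{D}$ collapses to a sum over $H$, and so
\[\underset{\vec{v}\sim\mathcal{D}}{\mathbb{E}}[rev_{A^*}(\vec{v})] \;=\; \frac{1}{|V|}\sum_{\vec{v}^\ell \in H} rev_{A^*}(\vec{v}^\ell) \;\leq\; \frac{m\,|H|}{n^m - n},\]
where I use the pointwise upper bound $rev_{A^*}(\vec{v}^\ell)\leq m$: revenue of any individually rational $\lambda$-auction is bounded by social welfare, and the additive $\{0,1\}$-valued valuation vectors of Theorem~\ref{thm:AMA_high_low} have total welfare exactly $m$ (each item is valued at $1$ by the unique bidder who is scheduled to receive it in $\tilde{o}_\ell$).

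Since $|H|\leq|\sample|$ is polynomial in $n,m$ by hypothesis, I would choose $n,m$ large enough that $m\,|\sample|/(n^m-n) < \epsilon/2$; because $n^m-n$ is exponential in $m$ (for $n\geq 2$) while $m\,|\sample|$ is polynomial, any sufficiently large $m$ suffices. Subtracting the two bounds gives an empirical-minus-expected gap of at least $(2-2\gamma)-\epsilon/2 > \epsilon$, and this holds for \emph{every} realization of $\sample$, hence with probability one. The one conceptual point that needs care is the reading of the theorem: $A^*$ must be permitted to depend on $\sample$, since for any $\lambda$-auction fixed in advance the ordinary law of large numbers would preclude a persistent $\epsilon$-gap. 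Aside from that, the argument is essentially bookkeeping atop Theorem~\ref{thm:AMA_high_low}, which does the real combinatorial work of producing a single $\lambda$-auction that concentrates revenue on any adversarially chosen subset of $V$.
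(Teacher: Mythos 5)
Your proposal is correct and takes essentially the same route as the paper's proof: let $\mathcal{D}$ be uniform on the set $V$ from Theorem~\ref{thm:AMA_high_low}, choose $A^* = A_H$ for $H$ the set of sampled valuation vectors (so $A^*$ depends on $\sample$, exactly as in the paper), and compare the empirical revenue of at least $2-2\gamma$ with the expected revenue, which is negligible because a polynomial-size sample covers an exponentially small fraction of $V$. The only differences are bookkeeping: the paper sets $\gamma = 1-\epsilon$ and bounds the expectation via the measure of the sampled points rather than via your pointwise bound $rev_{A^*}\left(\vec{v}^\ell\right) \leq m$.
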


\begin{proof}
Let $\gamma = 1 - \epsilon$ and let $V$ be the set of valuation functions proven to exist in Theorem~\ref{thm:AMA_high_low} corresponding to $\gamma$ (i.e. for any $H \subseteq V$, there exists a $\lambda$-auction $A_H$ with revenue 0 on $\vec{v}$ if $\vec{v} \in H$ and revenue at least $2-2\gamma$ on $\vec{v}$ otherwise). Let $\mathcal{D}$ be the uniform distribution on $V$.

Suppose that $\mathcal{ALG}$ uses a set $\mathcal{S}$ of $\ell \leq n^c$ samples, where $c$ is a constant. Of course, $\sample \subseteq V$, so let $A^*$ be the $\lambda$-auction with 0 revenue on every valuation function not in the sample and revenue at least $2 - 2\gamma$ on every valuation function in the sample. We know that $A^*$ exists due to Theorem~\ref{thm:AMA_high_low}.

Notice that the average empirical revenue of $A^*$ on $\sample$ is at least $2 - 2\gamma$. Meanwhile, the probability, on a random draw $\vec{v} \sim \mathcal{D}$ that $rev_{A^*}\left(\vec{v}\right)$ is 0 is exactly the probability that $\vec{v} \not\in \sample$. Given that the set of training examples has measure $\frac{n^c}{n^m - n}<\frac{1}{2},$ we have that \begin{align*}
\frac{1}{|\sample|}\sum_{\vec{v} \in \sample} rev_{A^*} \left(\vec{v}\right) - \underset{\vec{v}\sim \mathcal{D}}{\mathbbm{E}}\left[ rev_{A^*}\left(\vec{v}\right)\right] &\geq 2-2\gamma - (2-2\gamma)\underset{\vec{v}\sim\mathcal{D}}{\mathbbm{P}}\left[\vec{v} \in \sample\right]\\
&> 2-2\gamma - (1-\gamma)\\
&= 1-\gamma\\
&= \epsilon,
\end{align*}
as desired.
\end{proof}

\subsection{Lower Bound on Sample Complexity for VVCAs}\label{app:VVCAlower}

In this section, we prove that it is not possible to learn over the set of VVCA revenue function under and arbitrary distribution with subexponential sample complexity. In particular, we prove that no algorithm can learn over the class of $n$-bidder, $m$-item VVCA revenue functions with sample complexity $o\left(2^m\right)$. This holds even when the bidders' valuation functions are additive.

The format of this proof similar to that of Theorem~\ref{thm:AMAlower}. Namely, we construct a set $V$ of $n$-bidder, $m$-item valuation functions such that $|V| = 2^m - 2$. We then show that for any subset $H$ of $V$, there exists a VVCA that has high revenue over valuation functions in $H$, but low revenue on the valuation functions in $V \setminus H$. The set $V$ is described in more detail in Theorem~\ref{thm:VVCA_high_low}. As described in Theorem~\ref{thm:AMAlower}, this immediately implies hardness for learning over the uniform distribution on $V$. Given the parallel proof structure, we present Theorem~\ref{thm:VVCA_high_low} and refer the reader to Theorem~\ref{thm:AMAlower} to see how it implies hardness for learning.

\begin{theorem}\label{thm:VVCA_high_low}
For any $m \geq 2$ and any $\gamma \in (0,1)$, there exists a set of $N = 2^m-2$ 2-bidder additive valuation functions $V = \{\vec{v}^1, \dots, \vec{v}^N\}$ such that for any $H \subseteq V$, there exists a VVCA with revenue 0 on $\vec{v}^i$ if $\vec{v}^i \in V$ and revenue $1-\gamma$ on $\vec{v}^i$ if $\vec{v}^i \not\in V$.
\end{theorem}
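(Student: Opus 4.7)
The plan is to specialize the construction of Theorem~\ref{thm:AMA_high_low} to $n=2$ and to exploit the observation that in any full allocation of $[m]$ between two bidders, bidder $1$'s bundle already determines bidder $2$'s, so the one-dimensional family of bidder-$1$ boosts $\{c_{1,S}\}_{S \subseteq [m]}$ has enough degrees of freedom to mimic the discriminating power of the unrestricted $\lambda$-auction boosts used there. (Reading the statement alongside Theorem~\ref{thm:AMA_high_low}, I take the intended conclusion to be revenue $0$ on each $\vec{v}^\ell \notin H$ and revenue $1-\gamma$ on each $\vec{v}^\ell \in H$; the printed version appears to contain a typo that swaps the roles of $H$ and $V$.) Accordingly, I would take $V = \{\vec{v}^1, \dots, \vec{v}^{2^m-2}\}$ to be the family produced by Theorem~\ref{thm:AMA_high_low} with $n=2$: one profile per ordered non-trivial partition $\tilde{o}_\ell = (S_1^\ell, S_2^\ell)$ of $[m]$, with bidder $i$ valuing each item in $S_i^\ell$ at $1$ and every other item at $0$.

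Given $H \subseteq V$, I would define the VVCA by setting $w_1 = w_2 = 1$, $c_{2,T} = 0$ for every $T \subseteq [m]$, and
\[ c_{1,S} = \begin{cases} 0 & \text{if } S = S_1^\ell \text{ for some } \vec{v}^\ell \in H, \\ \delta & \text{otherwise}, \end{cases} \qquad \text{where } \delta = (1-\gamma)/2. \]
Because $\lambda(\vec{o}) = c_{1,o_1} + c_{2,o_2}$, this is a legitimate VVCA. Since $\delta < 1$, the same one-item-swap argument as in Lemma~\ref{lemma:n_bid_high} shows that the argmax allocation on every $\vec{v}^\ell$ is $\tilde{o}_\ell$, regardless of whether $\vec{v}^\ell \in H$ (any other full allocation loses at least one unit of welfare while gaining at most $\delta$ in boost, and partial allocations are strictly worse). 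Evaluating the two VCG counterfactuals, I would observe that $\max[v_2^\ell(o_2) + c_{1,o_1}] = (m - |S_1^\ell|) + \delta$ (take $o_2 = S_2^\ell$, $o_1 = \emptyset$) and $\max[v_1^\ell(o_1) + c_{1,o_1}] = |S_1^\ell| + \delta$ (take $o_1 = [m]$). Subtracting the corresponding counterfactual values at $\tilde{o}_\ell$ yields a payment of $\delta - c_{1,S_1^\ell}$ from each bidder, so total revenue is $2(\delta - c_{1,S_1^\ell})$, which equals $1-\gamma$ if $\vec{v}^\ell \in H$ (since then $c_{1,S_1^\ell} = 0$) and $0$ otherwise (since then $c_{1,S_1^\ell} = \delta$).

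The main obstacle is conceptual rather than computational: the VVCA restriction $\lambda(\vec{o}) = \sum_i c_{i,o_i}$ is strictly more rigid than the arbitrary $\lambda$-auction boosts used in Theorem~\ref{thm:AMA_high_low}, so it is not obvious \emph{a priori} that the same exponentially large family of valuation profiles can still be separately controlled by a VVCA. What makes the above work is essentially two-bidder specific: because bidder $1$'s bundle pins down bidder $2$'s in every full allocation of $[m]$, the $2^m$ parameters $\{c_{1,S}\}_{S \subseteq [m]}$ already provide exactly enough independent knobs to flip the revenue on each of the $2^m-2$ profiles individually. This is consistent with the fact that the VVCA lower bound stated in Theorem~\ref{thm:main_sample_AMA} is $\Omega(2^m)$ rather than $\Omega(n^m)$.
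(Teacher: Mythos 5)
Your proposal is correct and follows essentially the same route as the paper: the same two-bidder specialization of the valuation family from Theorem~\ref{thm:AMA_high_low} (one $0/1$ additive profile per nontrivial ordered partition of $[m]$), unit weights, and boosts toggled between $0$ and $(1-\gamma)/2$ according to membership in $H$, with the same reading of the statement's $H$/$V$ typo as in Lemmas~\ref{lem:VVCA_high} and~\ref{lem:VVCA_low}. The only difference is cosmetic: you place all the boost mass on bidder $1$'s parameters ($c_{2,T}\equiv 0$) rather than splitting it across both bidders as the paper does, which slightly streamlines the payment computation but does not change the argument.
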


\begin{proof}
We define the set $V = \{\vec{v}^1, \dots, \vec{v}^N\}$ of 2-bidder valuation functions, where \newline $\vec{v}^j = (v_1^j(\{1\}), \dots, v_1^j(\{m\}), v_2^j(\{1\}) \dots, v_2^j(\{m\}))$, with $N = 2^m - 2$. Recall that every allocation vector $\vec{o}_j$ can be written as $(o_{j,1}, o_{j,2})$ where $o_{j,1}$ and $o_{j,2}$ are disjoint subsets of the $m$ items being auctioned. In order to define the valuation functions in $V$, we define $\tilde{b}_1, \dots, \tilde{b}_N$ to be a arbitrary, fixed ordering of all subsets of $[m]$ except the empty set and the grand bundle. In other words, $\tilde{b}_1, \dots, \tilde{b}_N$ is an ordering of $2^{[m]}\setminus \{\emptyset, [m]\}$. We will define each valuation function in $V$ in terms of this ordering. In particular, let $\tilde{o}_\ell = (\tilde{b}_\ell^c, \tilde{b}_\ell)$ be the allocation where Bidder 1 receives $\tilde{b}_\ell^c$ and Bidder 2 receives $\tilde{b}_\ell$. Finally, let $\vec{v}^\ell$ for $\ell \in [N]$ be defined as follows.

\[v_1^\ell(\{i\}) = \begin{cases} 1 &\text{if } i \in \tilde{b}_\ell^c\\
0 &\text{otherwise}
\end{cases}\] and
\[v_2^\ell(\{i\}) = \begin{cases} 1 &\text{if } i \in \tilde{b}_\ell\\
0 &\text{otherwise}
\end{cases}.\]

Clearly, if $w_1=w_2=1$ and $\lambda_1(\vec{o}) = \lambda_2(\vec{o}) = 0$ for all $\vec{o} \in \mathcal{O}$, then the VVCA allocation on any $\vec{v}^\ell \in S$ is the one in which Bidder 2 receives $\tilde{b}_\ell^c$ and Bidder 1 receives $\tilde{b}_\ell$. This has a social welfare of $m$, whereas any other allocation has a social welfare at most $m-1$.

We claim that for any $H \subseteq V$, there exists a VVCA with revenue 0 on $\vec{v}^i$ if $\vec{v}^i \in H$ and revenue $1-\gamma$ on $\vec{v}^i$ if $\vec{v}^i \not\in H$. The VVCA has bidder weights $w_1 = w_2 = 1$, and for all $\vec{v}^\ell \in H$, we set $\lambda_1(\tilde{o}_\ell) = c_{1,\tilde{b}_\ell^c} = c_{2, \tilde{b}_\ell} = \lambda_2(\tilde{o}_\ell) = 0$. Otherwise, we set $\lambda_i(\vec{o}) = (1 - \gamma)/2$ for each $i \in \{1,2\}$.

\begin{lemma}\label{lem:VVCA_high} If $\vec{v}^\ell \in H$, then the revenue on $\vec{v}^\ell$ is $1-\gamma$.
\end{lemma}

\begin{proof}[Proof of Lemma~\ref{lem:VVCA_high}]
First, note that $v_1(\tilde{b}_\ell^c) + v_2(\tilde{b}_\ell) +\lambda_1(\tilde{o}_\ell) + \lambda_2(\tilde{o}_\ell) = m$, and for all allocations $\vec{o}_j \not = \tilde{o}_\ell$, $v_1(o_{j,1}) + v_2(o_{j,2}) + \lambda_1(\vec{o}_j) + \lambda_2(\vec{o}_j) \leq m-1 + 1-\gamma$. Therefore, the VVCA allocation is $\tilde{o}_\ell$. However, this is neither Bidder 1 nor Bidder 2's favorite weighted allocation, since $v_1(\tilde{b}_\ell^c) + \lambda_1(\tilde{o}_\ell) = |\tilde{b}_\ell^c| < v_1([m]) + c_{1,[m]} = |\tilde{b}_\ell^c| + (1-\gamma) /2$ and $v_2(\tilde{b}_\ell) + \lambda_2(\tilde{o}_\ell) = |\tilde{b}_\ell|< v_2([m]) + c_{2,[m]} = |\tilde{b}_\ell| + (1-\gamma) /2$. This follows from the fact that $\tilde{b}_\ell \not = [m]$ and $\tilde{b}_\ell^c \not = [m]$ for all $\ell \in [N]$, it must be that $\lambda_1([m]) = \lambda_2([m]) = (1 - \gamma)/2.$

Since $|\tilde{b}_\ell^c|$ and $|\tilde{b}_\ell|$ are Bidder 1 and 2's highest valuations for any allocation, respectively, and because $(1-\gamma)/2$ is the highest value of any $\lambda$ term, $v_1([m]) + c_{1,[m]}$ and $v_2([m]) + c_{2,[m]}$ are the maximum weighted valuation that either bidder has for any allocation under this VVCA. Therefore, the revenue of this VVCA on $\vec{v}_\ell$ is $|\tilde{b}_\ell| + |\tilde{b}_\ell^c| + 1-\gamma - |\tilde{b}_\ell| - |\tilde{b}_\ell^c| = 1-\gamma$.
\hfill \text{ }
\end{proof}

\begin{lemma}\label{lem:VVCA_low} If $\vec{v}^\ell \not \in H$, then the revenue on that valuation function pair is 0.
\end{lemma}

\begin{proof}[Proof of Lemma~\ref{lem:VVCA_low}]
First, note that $v_1(\tilde{b}_\ell^c) + v_2(\tilde{b}_\ell) +\lambda_1(\tilde{o}_\ell) + \lambda_2(\tilde{o}_\ell) = m+1-\gamma$, and for all allocations $\vec{o}_j \not= \tilde{o}_\ell$, $v_1(o_{j,1}) + v_2(o_{j,2}) + \lambda_1(\vec{o}_j) + \lambda_2(\vec{o}_j) \leq m-1 + 1-\gamma < m + 1-\gamma$, so the AMA allocation is $\tilde{o}_\ell$. Moreover, $v_1(\tilde{b}_\ell^c) + \lambda_1(\tilde{o}_\ell) = |\tilde{b}_\ell^c| + (1-\gamma)/2 \geq v_1(o_{j,1}) + \lambda_1(\vec{o}_j)$ and  $v_2(\tilde{b}_\ell) + \lambda_2(\tilde{o}_\ell) = |\tilde{b}_\ell| + (1-\gamma)/2 \geq v_2(o_{j,2}) + \lambda_2(\vec{o}_j)$ for all allocations $\vec{o}_j \in \mathcal{O}$. Therefore, both bidders receive one of their favorite weighted allocations, so the revenue is 0.
\hfill \text{ }
\end{proof}
\hfill \text{ }
\end{proof}

\section{Sample Complexity of MBA Revenue Maximization}\label{sec:MBA_sample}

Fortunately, these negative sample complexity results are not the end of the story. We do achieve polynomial sample complexity upper bounds for the important classes of mixed bundling auctions (MBAs) and mixed bundling auctions with reserve prices (MBARPs). We derive these sample complexity bounds by analyzing the pseudo-dimensions of these classes of auctions. In this section, we present our results in increasing complexity, beginning with the class of $n$-bidder, $m$-item MBAs, which we show has a pseudo-dimension of 2. We build on the proof of this result to show that the class of $n$-bidder, $m$-item MBARPs has a pseudo-dimension of $O\left(m^3 \log n\right)$.

We note that when we analyze the class of MBARPs, we assume additive reserve prices, rather than bundle reserve prices. In other words, each item has its own reserve price, and the reserve price of a bundle is the sum of its components' reserve prices, as opposed to each bundle having its own reserve price. We have good reason to make this restriction; in Section~\ref{sec:bundle_lower}, we prove that an exponential number of samples are required to learn over the class of MBARPs with bundle reserve prices.

 Before we prove our sample complexity results, we fix some notation. For any $c$-MBA, let $rev_c\left(\vec{v}\right)$ be its revenue on $\vec{v}$, which is determined in the exact same way as the general AMA revenue function with the $\lambda$ terms set as described in Section~\ref{sec:prelim}.

We will use the following result regarding the structure of $rev_c\left(\vec{v}\right)$ in order to derive our pseudo-dimension results.

\begin{lemma}\label{lem:rev_struct}
There exists $c^* \in [0,\infty)$ such that $rev_{\vec{v}}(c)$ is non-decreasing on the interval $[0,c^*]$ and non-increasing on the interval $(c^*, \infty)$.
\end{lemma}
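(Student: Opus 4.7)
The plan is to decompose $rev_{\vec{v}}(c)$ using the MBA payment rule and then track how the two components that depend on $c$ change as the boost varies. For a fixed valuation profile $\vec{v}$, let $SW(\vec{o}) = \sum_i v_i(o_i)$, let $\vec{o}^*(c) = \text{argmax}_{\vec{o}} \{SW(\vec{o}) + \lambda(\vec{o})\}$ be the boosted-welfare optimal allocation, and let $\phi_{-j}(c) = \max_{\vec{o}} \{\sum_{\ell \ne j} v_\ell(o_\ell) + \lambda(\vec{o})\}$ denote the analogous max without bidder $j$. Summing the MBA payment formula (with all weights equal to $1$) over the $n$ bidders yields
\begin{equation*}
rev_{\vec{v}}(c) \;=\; \sum_{j=1}^n \phi_{-j}(c) \;-\; (n-1)\,SW(\vec{o}^*(c)) \;-\; n\,\lambda(\vec{o}^*(c)).
\end{equation*}
The key structural observation is that each $\phi_{-j}$ is a pointwise maximum of affine functions of $c$ whose slopes lie in $\{0,1\}$ (slope $1$ exactly on grand-bundle allocations, where $\lambda(\vec{o}) = c$), so $\phi_{-j}$ is convex, non-decreasing, and piecewise-linear in $c$ with slope in $\{0,1\}$ on each linear piece.

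Next, I would pin down the threshold by tracking $\vec{o}^*(c)$. Let $S^* = \max_{\vec{o}} SW(\vec{o})$ and $G = \max_i v_i(\{1,\dots,m\})$; since the efficient allocation dominates any grand-bundle allocation in unboosted welfare, $S^* \geq G$. For $c \leq S^* - G$, the boosted welfare of an efficient non-grand-bundle allocation ($= S^*$) meets or exceeds that of the best grand-bundle allocation ($= G + c$), so we may take $\vec{o}^*(c)$ to be a fixed efficient allocation, giving $SW(\vec{o}^*(c)) = S^*$ and $\lambda(\vec{o}^*(c)) = 0$. For $c > S^* - G$, the grand-bundle allocation to the bidder attaining $G$ strictly beats every non-grand-bundle allocation, giving $SW(\vec{o}^*(c)) = G$ and $\lambda(\vec{o}^*(c)) = c$. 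Set $c^* = S^* - G \geq 0$.

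Substituting into the decomposition, on $[0, c^*]$ we get $rev_{\vec{v}}(c) = \sum_j \phi_{-j}(c) - (n-1)S^*$, a sum of non-decreasing functions minus a constant, hence non-decreasing in $c$. On $(c^*, \infty)$ we get $rev_{\vec{v}}(c) = \sum_j \phi_{-j}(c) - (n-1)G - n c$; on each linear piece the slope is $\sum_j \phi'_{-j}(c) - n \leq n \cdot 1 - n = 0$, so the function is non-increasing. The main thing to watch out for is that $rev_{\vec{v}}$ can drop discontinuously by $S^* - G$ just to the right of $c^*$, because $\vec{o}^*$ switches regimes; this is compatible with the lemma, since its intervals $[0, c^*]$ and $(c^*, \infty)$ absorb any such jump by tie-breaking toward the efficient allocation at $c^*$.
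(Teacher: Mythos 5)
Your proof is correct and follows essentially the same route as the paper: you use the same decomposition of $rev_{\vec{v}}(c)$ into the $n$ ``without bidder $j$'' maxima (the paper's $f_{j,\vec{v}}$, your $\phi_{-j}$) plus the allocation-dependent term (the paper's $g_{\vec{v}}$), identify the single regime switch to the grand bundle at $c^* = S^* - G$, and do the same slope bookkeeping (slopes in $[0,n]$ before, an extra $-n$ after). The only difference is cosmetic: you get monotonicity and continuity of $\phi_{-j}$ from the max-of-affine-functions structure rather than the paper's explicit two-case formula.
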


\begin{proof}
We will show that $rev_{\vec{v}}$ can be decomposed into simple components, each of which can be easily analyzed on its own, and by combining these analyses, we prove the lemma statement. To this end, recall that under the VCG mechanism, each winning bidder pays her bid minus a ``rebate'' equal to the increase in
welfare attributable to her presence in the auction. In a $c$-MBA, each winning bidder pays the boosted version of this amount. In other words, suppose $\vec{o}^*$ is the resulting allocation of a certain $c$-MBA $A$ and $\vec{o}_{-i}$ is the boosted social-welfare maximizing allocation without Bidder $i$'s participation. More explicitly, $\vec{o}^* = \max_{\vec{o}_j} \left\{\sum_{i = 1}^n v_i\left(o_{j,i}\right) + \lambda\left(\vec{o}_j\right)\right\}$ and $\vec{o}_{-i} = \max_{\vec{o}_j} \left\{\sum_{k \not=i} v_k\left(o_{j,k}\right) + \lambda\left(\vec{o}_j\right)\right\}$, where $\lambda\left(\vec{o}_j\right)$ is set according to the MBA allocation boosting rule for all $\vec{o}_j$. Then Bidder $i$ pays \[p_{i, \vec{v}}\left(c\right) = v_i\left(o^*_i\right) - \left[\sum_{j = 1}^n v_j\left(o^*_j\right) + \lambda\left(\vec{o}^*\right) - \left(\sum_{j \not= i} v_j\left(o_{-i,j}\right) + \lambda\left(\vec{o}_{-i}\right)\right) \right],\] where $c$ is the parameter of the $c$-MBA, factored into the $\lambda$ terms. This means that \[rev_{\vec{v}}(c) = \sum_{i = 1}^n p_{i, \vec{v}}\left(c\right) = (1 - n)\sum_{i = 1}^n v_i\left(o_i^*\right) - n\lambda\left(\vec{o}^*\right) + \sum_{i = 1}^n \sum_{j \not= i} v_j\left(o_{-i,j}\right) + \lambda\left(\vec{o}_{-i}\right).\]

The revenue function can be split into $n+1$ functions:

\[f_{i,\vec{v}}(c) = \sum_{j \not= i} v_j\left(o_{-i,j}\right) + \lambda\left(\vec{o}_{-i}\right) \text{ for }i \in \{1, \dots, n\}\] and \[g_{\vec{v}}(c) = (1 - n)\sum_{i = 1}^n v_i\left(o_i^*\right) - n\lambda\left(\vec{v}^*\right).\] We claim that $f_{i,\vec{v}}(c)$ is continuous for all $i$, whereas $g_{\vec{v}}(c)$ has at most one discontinuity. This means that $rev_{\vec{v}}(c) = \sum_{i = 1}^n f_{i,\vec{v}}(c) + g_{\vec{v}}(c)$ has at most one discontinuity as well. Moreover, the slope of $\sum_{i = 1}^n f_{i,\vec{v}}(c)$ is between zero and $n$, whereas the slope of $g_{\vec{v}}(c)$ is zero until its discontinuity, and then is $-n$. Therefore, the slope of $rev_{\vec{v}}(c)$ is at least zero before its discontinuity and at most zero after its discontinuity. This is enough to prove the lemma statement.

To see why these properties are true for the functions $f_{i,\vec{v}}(c)$, first let $\vec{o}_{-i}^1$ be the VCG allocation without Bidder $i$'s participation. In other words, $\vec{o}_{-i}^1 = \max_{\vec{o}_j} \left\{\sum_{k \not=i} v_k\left(o_{j,k}\right)\right\}$. If one bidder is allocated the grand bundle in outcome $\vec{o}_{-i}^1$, then this allocation will only be more valuable as $c$ grows, so $\vec{o}_{-i}^1 = \max_{\vec{o}_j} \left\{\sum_{k \not=i} v_k\left(o_{j,k}\right) + \lambda\left(\vec{o}_j\right)\right\}$ for all values of $c$, which means that $f_{i,\vec{v}}(c) = \sum_{j \not= i} v_j\left(o_{-i,j}^1\right) + \lambda\left(\vec{o}_{-i}^1\right) = \sum_{j \not= i} v_j\left(o_{-i,j}^1\right) + c$ for all values of $c$ as well. Clearly, in this case, $f_{i,\vec{v}}(c)$ is increasing and continuous. Otherwise, there exists some value $c_i$ such that \begin{align*}\sum_{j \not= i} v_j\left(o_{-i,j}^1\right) + \lambda\left(\vec{o}_{-i}^1\right) = \sum_{j \not= i} v_j\left(o_{-i,j}^1\right) &\geq \max_{k \not= i} \left\{v_k\left([m]\right)\right\} + c   &\text{ if }c \leq c_i\\
\sum_{j \not= i} v_j\left(o_{-i,j}^1\right) &< \max_{k \not= i} \left\{v_k\left([m]\right)\right\} + c   &\text{ if }c > c_i.\end{align*} This means that $\vec{o}_{-i}^1$ is the allocation of the $c$-MBA without Bidder $i$'s participation for $c \leq c_i$, and the allocation of the $c$-MBA without Bidder $i$'s participation for $c > c_i$ is the one where the highest bidder for the grand bundle (excluding Bidder $i$) wins the grand bundle. Therefore, \[f_{i,\vec{v}}(c) = \begin{cases} \sum_{j \not= i} v_j\left(o_{-i,j}^1\right) & \text{if } c \leq c_i\\ \max_{k \not= i} \left\{v_k\left([m]\right)\right\} + c &\text{if } c > c_i.\end{cases}\] Notice that $\sum_{j \not= i} v_j\left(o_{-i,j}^1\right) = \max_{k \not= i} \left\{v_k\left([m]\right)\right\} + c_i$, so $f_{i,\vec{v}}(c)$ is continuous. Finally, it is clear that the slope of each $f_{i,\vec{v}}(c)$ is between 0 and 1, so the slope of $\sum_{i = 1}^n f_{i,\vec{v}}(c)$ is between 0 and $n$.

Similarly, let $\vec{o}^1$ be the allocation of the VCG mechanism run on $\vec{v}$. Then there exists some $c^*$ such that $\vec{o}^1$ is the allocation of the $c$-MBA for $c \leq c^*$ and the allocation of the $c$-MBA for $c > c_i$ is the one where the highest bidder for the grand bundle wins the grand bundle. More explicitly, \begin{align*}\sum_{i=1}^n v_i\left(o_{i}^1\right) + \lambda\left(\vec{o}^1\right) = \sum_{i=1}^n v_i\left(o_{i}^1\right) &\geq \max\left\{v_k\left([m]\right)\right\} + c   &\text{ if }c \leq c^*\\
\sum_{i=1}^n v_i\left(o_{i}^1\right) &< \max \left\{v_k\left([m]\right)\right\} + c   &\text{ if }c > c^*.\end{align*} Therefore, \[g_{\vec{v}}(c) = \begin{cases} (1-n)\sum_{i=1}^n v_i\left(o_{i}^1\right) & \text{if } c \leq c^*\\ (1-n)\max \left\{v_k\left([m]\right)\right\} - nc &\text{if } c > c^*.\end{cases}\] Therefore, $g_{\vec{v}}(c)$ has at most one discontinuity, which falls at $c^*$. Moreover, the slope of $g_{\vec{v}}(c)$ is 0 for $c < c^*$ and $-n$ for $c > c^*$. As described, these properties of $f_{i,\vec{v}}(c)$ and $g_{\vec{v}}(c)$ are enough to show that the lemma statement holds.

\end{proof}

The form of $rev_{\vec{v}}(c)$ as described in Lemma~\ref{lem:rev_struct} is depicted in Figure~\ref{fig:revGraphn}.
\begin{figure}[h]
  \centering
  \includegraphics[scale=.3]{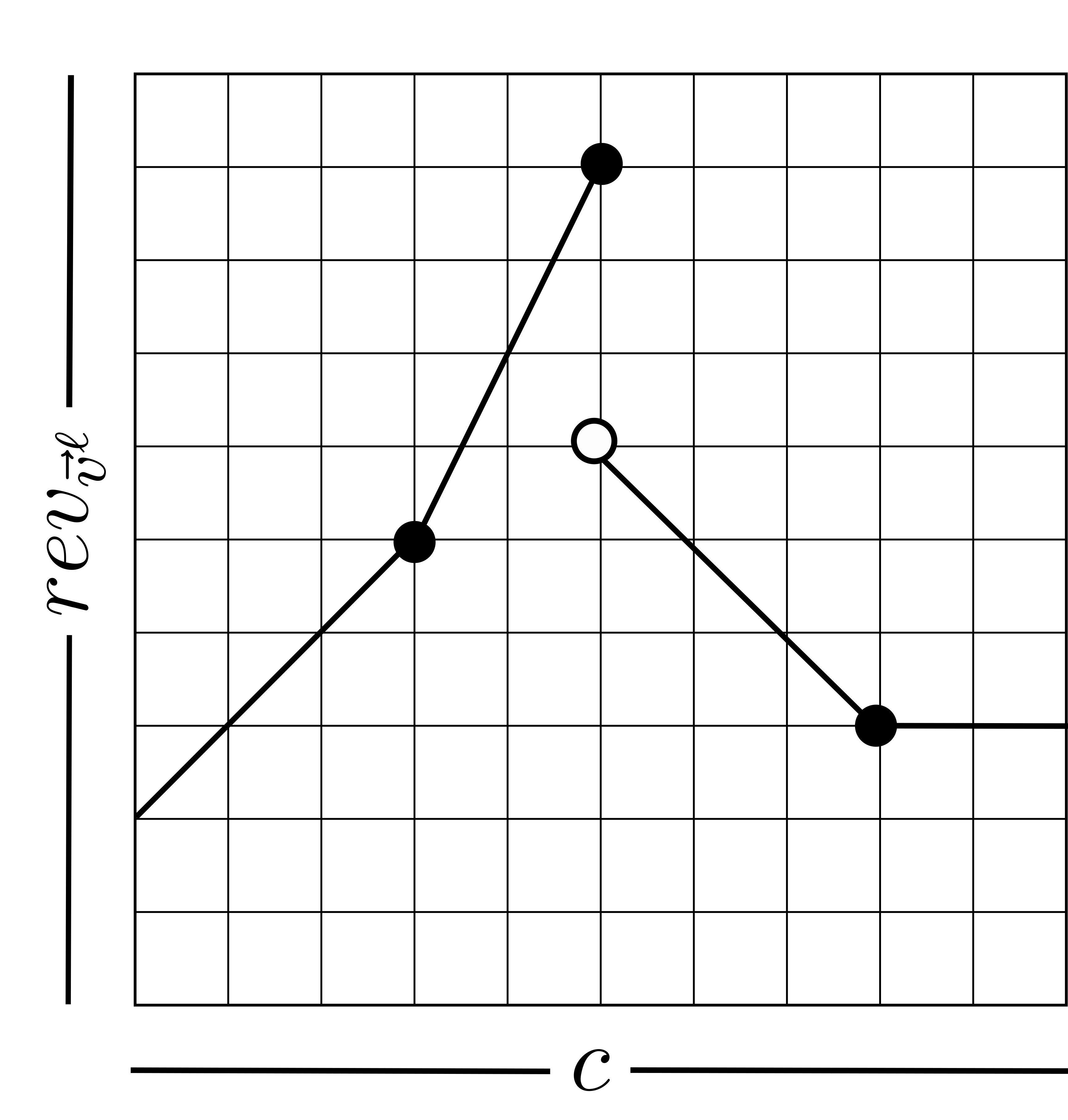}
  \captionof{figure}{Example of $rev_{\vec{v}^\ell}(c)$.}
  \label{fig:revGraphn}
\end{figure}

\begin{theorem}\label{thm:MBAn}

The pseudo-dimension of the class of $n$-bidder, $m$-item MBAs is 2.

\end{theorem}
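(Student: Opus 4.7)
The plan is to establish the two bounds separately: the upper bound (pseudo-dimension at most $2$) and the lower bound (pseudo-dimension at least $2$), both leveraging Lemma~\ref{lem:rev_struct}, which asserts that $c \mapsto rev_c(\vec{v})$ is unimodal on $[0, \infty)$ (non-decreasing up to some $c^*$, then non-increasing).

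For the upper bound, fix any three valuation profiles $\vec{v}^1, \vec{v}^2, \vec{v}^3$ and any targets $z^1, z^2, z^3 \in \R$. For each $i$, the unimodality of $c \mapsto rev_c(\vec{v}^i)$ forces its strict superlevel set $S_i \defeq \{c \geq 0 : rev_c(\vec{v}^i) > z^i\}$ to be a single interval (possibly empty, bounded, half-infinite, or all of $[0, \infty)$). Each $S_i$ contributes at most two endpoints in $[0, \infty)$, so the three together create at most six breakpoints and partition $[0, \infty)$ into at most seven cells. On each cell the three-bit membership pattern $(\mathbbm{1}_{c \in S_1}, \mathbbm{1}_{c \in S_2}, \mathbbm{1}_{c \in S_3})$ is constant, so at most seven of the eight possible sign patterns can be realized by any choice of $c$. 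Shattering three points requires all eight patterns to appear, which is a contradiction. Hence no three profiles can be shattered, and the pseudo-dimension is at most $2$.

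For the lower bound, I would exhibit an explicit shatterable pair of profiles. A clean construction works already for $n = m = 2$ (and extends to arbitrary $n, m$ by appending dummy bidders and items valued at zero). Take additive $\vec{v}^\ell$ defined by $v^\ell_1(\{1\}) = v^\ell_2(\{2\}) = a_\ell$ and $v^\ell_1(\{2\}) = v^\ell_2(\{1\}) = 0$, for $\ell \in \{1, 2\}$ with $a_1 < a_2$ close together. A direct application of the MBA payment rule shows that $rev_c(\vec{v}^\ell) = 2c$ when $c \in [0, a_\ell]$ (split allocation wins and each bidder's payment equals $c$) and $rev_c(\vec{v}^\ell) = a_\ell$ when $c > a_\ell$ (grand-bundle allocation wins and the sole winner pays $a_\ell$). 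Choosing thresholds $z^\ell \in (a_\ell, 2 a_\ell)$ makes the superlevel sets $I_1, I_2$ two bounded intervals that overlap without either containing the other and whose union does not cover $[0, \infty)$, so all four sign patterns $(\mathbbm{1}_{c \in I_1}, \mathbbm{1}_{c \in I_2})$ arise as $c$ varies over the resulting four cells.

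The main obstacle is the cell-counting step in the upper bound: I would need to verify that degenerate configurations (where some $S_i$ is empty, all of $[0, \infty)$, or shares endpoints with another $S_j$) only reduce the number of breakpoints and therefore the number of realizable patterns, so the seven-cell bound is preserved in every case. The lower bound construction is mostly routine once the formulas for $rev_c(\vec{v}^\ell)$ in the two allocation regimes are verified and the thresholds $z^\ell$ and gap $a_2 - a_1$ are chosen small enough to guarantee the required overlap of $I_1$ and $I_2$.
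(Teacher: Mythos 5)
Your proposal is correct and follows essentially the same route as the paper: the upper bound uses Lemma~\ref{lem:rev_struct} to argue that each strict superlevel set $\{c : rev_{\vec{v}^i}(c) > z^i\}$ is an interval and then counts the cells cut out by the resulting breakpoints (the paper phrases this as $2^N \leq 2N+1$, which is your seven-versus-eight argument at $N=3$), and the lower bound exhibits an explicitly shattered pair of profiles. The only difference is cosmetic: your shattered pair uses two symmetric ``complementary single-item'' valuations padded with dummy bidders and items, whereas the paper uses bundle-size-based valuations with witnesses $3$ and $4$ and $c \in \{0, 1.5, 2.5, 2\}$; both constructions are valid.
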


\begin{proof}
First, we show that the pseudo-dimension of the class of $n$-bidder, $m$-item MBAs is at most 2. Let $\sample = \left\{\vec{v}^1, \dots, \vec{v}^N\right\}$ of size $N$ be a set of $n$-bidder valuation functions that can be shattered by a set $C$ of $2^N$ MBAs. This means that there exist $N$ witnesses $z^1, \dots, z^N$ such that each MBA in $C$ induces a binary labeling of the samples $\vec{v}^j$ of $\sample$ (whether the revenue of the MBA on $\vec{v}^j$ is at least $z^j$ or strictly less than $z^j$). Since $\sample$ is shatterable, we can thus label $\sample$ in every possible way using MBAs in $C$.

Now, fix one sample $\vec{v}^i \in \sample$ and consider $rev_{\vec{v}^i}(c)$. From Lemma~\ref{lem:rev_struct}, we know that there exists $c^*_i \in [0,\infty)$, such that $rev_{\vec{v}^i}(c)$ is non-decreasing on the interval $[0,c^*_i]$ and non-increasing on the interval $(c^*_i, \infty)$. Therefore, there exist two thresholds $t^1_i \in [0,c^*_i]$ and $t^2_i \in (c^*_i, \infty) \cup \{\infty\}$ such that $rev_{\vec{v}^i}(c)$ is below its threshold for $c \in [0,t^1_i)$, above its threshold for $c \in (t^1_i, t^2_i)$, and below its threshold for $c \in (t^2_i, \infty)$. Now, merge these thresholds for all $N$ samples on the real line and consider the interval $(t_1,t_2)$ between two adjacent thresholds. The binary labeling of the samples in $\sample$ on this interval is fixed. In other words, for any sample $\vec{v}^j \in \sample$, $rev_{\vec{v}^j}(c)$ is either at least $z^j$ or strictly less than $z^j$ for all $c \in (t_1,t_2)$. There are at most $2N+1$ intervals between adjacent thresholds, so at most $2N+1$ different binary labelings of $\sample$. Since we assumed $\sample$ is shatterable, it must be that $2^N \leq 2N+1$, so $N \leq 2.$

Finally, we show that the pseudo-dimension of the class of $n$-bidder, $m$-item MBAs is at least 2 by constructing a set $\sample = \left\{\vec{v}^1, \vec{v}^2\right\}$ that can be shattered by the set of MBAs. To construct this sample $\sample$, let \[v_1^1\left(b_i\right) = v_2^1\left(b_i\right) = \begin{cases} 0 &\text{if } \left|b_i\right| < \lfloor m/2 \rfloor\\
3 &\text{if } \lfloor m/2 \rfloor \leq \left|b_i\right| \end{cases} \text{ and } v_1^2\left(b_i\right) = v_2^2\left(b_i\right) = \begin{cases} 0 &\text{if } \left|b_i\right| < \lfloor m/2 \rfloor\\
3 &\text{if } \lfloor m/2 \rfloor \leq \left|b_i\right| < m\\
4 &\text{if } \left|b_i\right| = m. \end{cases}\] Finally, let Bidders 3 through $n$ have all-zero valuations in both $\vec{v}^1$ and $\vec{v}^2$.

Now, let $z^1 = 3$ and $z^2 = 4$. We define four MBAs parameterized by the coefficients $c_1 = 0, c_2 = 1.5, c_3 = 2.5, c_4  = 2.$ It is easy to check that this set of MBAs shatters $\sample$, witnessed by $z^1$ and $z^2$. For example, see Table~\ref{tab:shattering}.

\begin{table}\centering
{\begin{tabular}{lll}
\textbf{$c$ value} & \textbf{Revenue on $\vec{v}^1$} & \textbf{Revenue on $\vec{v}^2$} \\\hline
0                 & $0\leq z^1$                               & $2 \leq z^2$                               \\
1.5                & $3 \leq z^1$                               & $5 > z^2$                               \\
2.5                & $5 > z^1$                               & $4 \leq z^2$                               \\
2                  & $4 > z^1$                               & $6 > z^2$
\end{tabular}}
\caption{Example of a shattered set of size 2\label{tab:shattering}}
\end{table}
\end{proof}

We may now use this result to prove Theorem~\ref{thm:main_sample_MBAn}.

\begin{namedtheorem}[\ref{thm:main_sample_MBAn}] The sample complexity of uniform convergence over the class of $n$-bidder, $m$-item MBAs is \[N = O\left(\left(\frac{U}{\epsilon}\right)^2\left(\log\frac{U}{\epsilon} + \log \frac{1}{\delta}\right)\right).\]
\end{namedtheorem}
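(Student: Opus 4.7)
The plan is to derive this sample complexity bound as a direct consequence of the pseudo-dimension bound in Theorem~\ref{thm:MBAn} combined with the standard learning-theoretic guarantee in Theorem~\ref{thm:pseudo}. Let $\mathcal{F}$ be the class of $n$-bidder, $m$-item MBA revenue functions; by Theorem~\ref{thm:MBAn}, $Pdim(\mathcal{F}) = 2$, and by assumption every $rev_A \in \mathcal{F}$ maps into $[0,U]$, so the range bound is $c = U$.

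First I would invoke Theorem~\ref{thm:pseudo} with $d_{\mathcal{F}} = 2$ and $c = U$ to conclude that with probability at least $1 - \delta/2$, uniformly for every $rev_A \in \mathcal{F}$,
\[
\mathbb{E}_{\vec{v} \sim \mathcal{D}}[rev_A(\vec{v})] - \tfrac{1}{N}\sum_{i=1}^N rev_A(\vec{v}^i) \;\leq\; U\sqrt{\tfrac{4 \log(eN/2)}{N}} + U\sqrt{\tfrac{\log(2/\delta)}{2N}}.
\]
Since the definition of uniform convergence requires a two-sided bound, I would next apply the same guarantee to the negated class $-\mathcal{F} = \{-rev_A : rev_A \in \mathcal{F}\}$, which has the same pseudo-dimension (negation does not change the shattering structure since we can negate the witness targets accordingly) and the same range magnitude. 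A union bound over the two applications, each with confidence $1 - \delta/2$, yields the two-sided inequality $|\tfrac{1}{N}\sum_i rev_A(\vec{v}^i) - \mathbb{E}[rev_A(\vec{v})]| \leq \epsilon$ with probability at least $1 - \delta$, provided the right-hand side above is at most $\epsilon$.

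Next I would solve for $N$ by bounding each of the two terms by $\epsilon/2$. The second term gives $N \geq \frac{2U^2 \log(2/\delta)}{\epsilon^2} = O((U/\epsilon)^2 \log(1/\delta))$. The first yields an implicit inequality of the form $N \geq \frac{16 U^2}{\epsilon^2} \log(eN/2)$, which I would resolve using the standard inequality $\ln x \leq \alpha x - \ln \alpha - 1$ (as in the proof of Theorem~\ref{thm:main_sample_AMA}) with an appropriately chosen $\alpha$ of order $\epsilon^2 / U^2$; this gives $N = O((U/\epsilon)^2 \log(U/\epsilon))$. Combining the two requirements yields the claimed bound
\[
N \;=\; O\!\left(\left(\tfrac{U}{\epsilon}\right)^{2}\left(\log \tfrac{U}{\epsilon} + \log \tfrac{1}{\delta}\right)\right).
\]

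There is no real obstacle here beyond the mechanical application of existing tools: the hard work was done in Theorem~\ref{thm:MBAn}, where Lemma~\ref{lem:rev_struct} on the unimodal structure of $rev_{\vec{v}}(c)$ pinned down the pseudo-dimension at $2$ independently of $n$ and $m$. The only minor subtlety is the two-sided version (handled by the union bound on $\mathcal{F}$ and $-\mathcal{F}$) and the implicit-inequality step, both of which are routine.
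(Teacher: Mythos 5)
Your proposal is correct and follows exactly the paper's route: the paper proves this theorem in one line by combining Theorem~\ref{thm:pseudo} (with range bound $U$) and the pseudo-dimension bound of $2$ from Theorem~\ref{thm:MBAn}, which is precisely what you do, with the two-sided union bound and the resolution of the implicit inequality in $N$ carried out explicitly.
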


\begin{proof}
This follows from Theorem~\ref{thm:pseudo} and Theorem~\ref{thm:MBAn}.
\end{proof}

\subsection{Mixed Bundling Auctions with Reserve Prices (MBARPs)}
In Section~\ref{sec:bundle_lower}, we show that exponentially-many samples are required to learn an optimal setting of the MBA parameter $c$ and reserve prices if we allow for bundle-specific reserve prices. Therefore, we restrict our attention to item-specific reserve prices. In this case, each MBARP is parameterized by $m+1$ values $\left(c, r_1, \dots, r_m\right)$, where $r_i$ is the reserve price for the $i^{th}$ good. For a fixed valuation function vector $\vec{v} = \left(v_1\left(b_1\right), \dots, v_1\left(b_{2^m}\right), \dots, v_n\left(b_1\right), \dots, v_n\left(b_{2^m}\right)\right)$, we can analyze the MBARP revenue function on $\vec{v}$ as a mapping $rev_{\vec{v}}: \R^{m+1} \to \R$, where $rev_{\vec{v}}\left(c, r_1, \dots, r_m\right)$ is the revenue of the MBARP parameterized by $\left(c, r_1, \dots, r_m\right)$ on $\vec{v}$.

\begin{theorem}\label{thm:MBARPupper}
The psuedo-dimension of the class of $n$-bidder, $m$-item MBARPs with item-specific reserve prices is $O\left(m^3 \log n\right)$.
\end{theorem}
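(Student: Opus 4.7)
The plan is to bound the pseudo-dimension by a standard region-counting argument in the parameter space $\mathbb{R}^{m+1}$. Given a candidate shattered sample $\{\vec{v}^1,\dots,\vec{v}^N\}$ with witnesses $z^1,\dots,z^N$, I would bound the number of distinct binary labelings that can be produced by ranging $(c,r_1,\dots,r_m)$ over $\mathbb{R}^{m+1}$; if this count is strictly less than $2^N$, the sample cannot be shattered, and solving the resulting inequality for $N$ will give the claimed pseudo-dimension.

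The structural engine is the observation that, for any fixed valuation vector $\vec{v}$, the revenue $rev_{\vec{v}}(c,r_1,\dots,r_m)$ is piecewise linear in the MBARP parameters. Indeed, the optimal allocation $\vec{o}^*$ and each without-bidder allocation $\vec{o}_{-i}$ are argmaxes of weighted welfare expressions in which the seller's valuation $v_0(S)=\sum_{k\in S}r_k$ is linear in the $r_k$'s, the real bidders' valuations are constants, and $\lambda(\vec{o})\in\{0,c\}$ is at worst linear in $c$. Therefore a pairwise tie between any two allocations cuts out a hyperplane in $\mathbb{R}^{m+1}$. Over the $(n{+}1)^m$ candidate allocations this gives $\binom{(n+1)^m}{2}=O(n^{2m})$ hyperplanes for $\vec{o}^*$, and the same for each of the $n$ without-bidder subproblems, for a total of $O(n^{2m+1})$ hyperplanes per sample. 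Crucially, inside any cell of this arrangement, all the allocations $\vec{o}^*$ and $\vec{o}_{-i}$ are frozen, and plugging into the MBARP payment formula shows that $rev_{\vec{v}}(c,r_1,\dots,r_m)$ reduces to an affine function of the parameters.

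The final step unions the per-sample arrangements across all $N$ samples, giving $O(Nn^{2m+1})$ allocation-boundary hyperplanes in $\mathbb{R}^{m+1}$; a classical arrangement bound yields $O((Nn^{2m+1})^{m+1})$ cells. Within each such cell, each $rev_{\vec{v}^i}(\cdot)-z^i$ is a single affine function, so the $N$ witness comparisons further subdivide the cell into at most $O(N^{m+1})$ sign-constant subregions. Multiplying, the total number of distinct labelings is $O\!\left((Nn^{2m+1})^{m+1}\cdot N^{m+1}\right)$; demanding this be at least $2^N$, taking logarithms, and solving the resulting recursion (using $\log N \lesssim N/m$) gives $N=O(m^3\log n)$, which is the stated bound.

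The principal obstacle is the joint structural claim that a single arrangement of hyperplanes in $\mathbb{R}^{m+1}$ simultaneously freezes $\vec{o}^*$ and every $\vec{o}_{-i}$, so that revenue collapses to an affine function of the parameters cell-by-cell. The $\lambda$ boost makes this delicate because it behaves as a step in $c$ (active only when some bidder receives the grand bundle), so some comparison hyperplanes are $c$-independent while others involve $c$ linearly; one must verify, carefully and uniformly over which side of the $\lambda$-switch a cell lies on, that the payment difference $V_{-i}(\vec{o}_{-i})-V_{-i}(\vec{o}^*)$ is indeed affine in $(c,r_1,\dots,r_m)$. Once this is in hand, the polynomial bookkeeping of hyperplanes and cells is routine.
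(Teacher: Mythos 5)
Your proposal is correct and takes essentially the same route as the paper: a cell decomposition of the $(c,r_1,\dots,r_m)$ parameter space on which the allocation $\vec{o}^*$ and every $\vec{o}_{-i}$ are frozen (so that $rev_{\vec{v}}$ is affine on each cell), followed by a hyperplane-arrangement count that also incorporates the witness comparisons and solves $2^N \leq \mathrm{poly}$ for $N$. The differences are only bookkeeping---the paper groups allocations by the set of allocated items, reducing each subproblem to $2^m$ representative allocations (comparisons within a group are parameter-independent) instead of your $\binom{(n+1)^m}{2}$ pairwise ties, which affects only logarithmic factors---and your closing worry is unnecessary, since for any \emph{fixed} allocation $\lambda(\vec{o})$ is identically $0$ or identically $c$, so every comparison and every payment term is exactly affine in the parameters.
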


\begin{proof}
Let $\sample = \left\{\vec{v}^1, \dots, \vec{v}^N\right\}$ of size $N$ be a set of $n$-bidder valuation function samples that can be shattered by a set $C$ of $2^N$ MBARPs. This means that there exist $N$ witnesses $z^1, \dots, z^N$ such that each MBARP in $C$ induces a binary labeling of the samples $\vec{v}^j$ in $\sample$ (whether the revenue of the MBARP on $\vec{v}^j$ is greater than $z^j$ or at most $z^j$). Since $\sample$ is shatterable, we can thus label $\sample$ in every possible way using MBARPs in $C$.

This proof is similar to the proof of Theorem~\ref{thm:MBAn}, where we split the real line into a set of intervals $\mathcal{I}$ such that for any $I \in \mathcal{I}$, the binary labeling of $\sample$ by the $c$-MBA revenue function was fixed for all $c \in I$. In the case of MBARPs, however, the domain is $\R^{m+1}$, so we cannot split the domain into intervals in the same way. Instead, we show that we can split the domain into cells such that the binary labeling of $\sample$ by the MBARP revenue function is fixed as we range over parameters in a single cell. In this way, we show that $N = O\left(m^3 \log n\right)$.

Now, fix $\vec{v}^t \in \sample$. First,  for each $T \subseteq[m]$, let $\mathcal{O}_T$ be the set of allocations where exactly the elements of $T$ are allocated, and let \[\vec{o}^T = \underset{\vec{o} \in \mathcal{O}_T}{\text{argmax}} \left\{\sum_{i = 1}^n v_i^t\left(o_i\right)\right\}.\] Notice that regardless of the reserve prices, if $T$ comprises of the items allocated in the allocation of an MBARP, then $\vec{o}^T$ will be the allocation. After all, if $\left(r_1, \dots, r_m\right)$ are the reserve prices of an arbitrary MBARP, then it will always be the case that \[\sum_{i = 1}^n v_i^t\left(o_i^T\right) + \sum_{j \not \in T} r_j \geq \sum_{i = 1}^n v_i^t\left(o_i'\right) + \sum_{j \not \in T} r_j\] for any allocation $\vec{o}' \in \mathcal{O}_T$ by definition of $\vec{o}^T.$

Now, consider an MBARP parameterized by $\left(c, r_1, \dots, r_m\right)$. The allocation will be \[\vec{o}^T = \text{argmax}\left\{\sum_{i = 1}^n v_i^t\left(o_i^{[m]}\right) + c,\left\{ \sum_{i = 1}^n v_i^t\left(o_i^T\right) + \sum_{j \not\in T} r_j \right\}_{T \not= [m]}\right\}.\]

For any $T \subseteq [m]$, let $R_T^{\vec{v}^t}$ be the subset of $\R^{m+1}$ such that if an MBARP is parameterized by $\left(c, r_1, \dots, r_m\right) \in R_T^{\vec{v}^t}$, then the allocation of the MBARP on $\vec{v}^t$ is $\vec{o}^T$. This means that if $T \not= [m]$ \begin{align*}
\sum_{i = 1}^n v_i^t\left(o_i^T\right) + \sum_{j \not \in T} r_j &\geq \sum_{i = 1}^n v_i^t\left(o_i^{T'}\right) + \sum_{j \not \in T'} r_j & \forall T' \not\in\{ T, [m]\} \text{ and}\\
\sum_{i = 1}^n v_i^t\left(o_i^T\right) + \sum_{j \not \in T} r_j &\geq \sum_{i = 1}^n v_i^t\left(o_i^{[m]}\right) +
c. \end{align*} In other words, $\left(c, r_1, \dots, r_m\right) \in R_T^{\vec{v}^t}$ if and only if it falls in the intersection of $2^m-1$ halfspaces:
\begin{align*}
\sum_{j \not \in T} r_j - \sum_{j \not \in T'} r_j &\geq \sum_{i = 1}^n v_i^t\left(o_i^{T'}\right) -  v_i^t\left(o_i^T\right) & \forall T' \not\in\{ T, [m]\}\\
\sum_{j \not \in T} r_j - c &\geq \sum_{i = 1}^n v_i^t\left(o_i^{[m]}\right)  -  v_i^t\left(o_i^T\right). \end{align*}

Similarly, if $T = [m]$, it is not hard to see that we can write $R_T^{\vec{v}^t}$ as the intersection of $2^m-1$ halfspaces.

We can also analyze the allocation of an MBARP parameterized by $\left(c, r_1, \dots, r_m\right)$ without Bidder $i$'s participation for any $i \in [n]$, which we need to do in order to analyze the revenue function. To this end, for all $T \subseteq [m]$, let $\mathcal{O}_{T_{-i}}$ be the set of all allocations where exactly the elements of $T$ are allocated to all of the bidders except $i$, and let \[\vec{o}^{T_{-i}} = \underset{\vec{o} \in \mathcal{O}_{T_{-i}}}{\text{argmax}} \left\{\sum_{j \not= i} v_j^t\left(o_j\right)\right\}.\] Again, regardless of the reserve prices, if $T$ consists of the items allocated by an MBARP without Bidder $i$'s participation, then $\vec{o}^{T_{-i}}$ will be the allocation. Now, for an MBARP parameterized by $(c, r_1, \dots, r_m)$ without Bidder $i$'s participation, the allocation will be \[\vec{o}^{T_{-i}} = \text{argmax}\left\{\sum_{j \not = i} v_j^t\left(o_j^{[m]_{-i}}\right) + c,\left\{ \sum_{j \not = i} v_j^t\left(o_j^{T_{-i}}\right) + \sum_{\ell \not\in T} r_\ell \right\}_{T \not= [m]}\right\}.\]

For any $T \subseteq [m]$, let $R_{T_{-i}}^{\vec{v}^t}$ be the subset of $\R^{m+1}$ such that if an MBARP is parameterized by $\left(c, r_1, \dots, r_m\right) \in R_{T_{-i}}^{\vec{v}^t}$, then the allocation of the MBARP without Bidder $i$'s partitipation on $\vec{v}$ is $\vec{o}^{T_{-i}}$. This means that if $T \not= [m]$, then 
just as before, $\left(c, r_1, \dots, r_m\right) \in R_{T_{-i}}^{\vec{v}^t}$ if and only if it falls in the intersection of $2^m-1$ halfspaces:
\begin{align*}
\sum_{\ell \not \in T} r_\ell - \sum_{\ell \not \in T'} r_\ell &\geq \sum_{j \not= i} v_j\left(o_j^{T'_{-i}}\right) - \sum_{j \not= i} v_j\left(o_j^{T_{-i}}\right) & \forall T' \not\in\{ T, [m]\}\\
\sum_{\ell \not \in T} r_\ell - c &\geq \sum_{j \not= i} v_j\left(o_j^{[m]_{-i}}\right) - \sum_{j \not= i} v_j\left(o_j^{T_{-i}}\right). \end{align*}

Similarly, if $T = [m]$, we can write $R_{T_{-i}}^{\vec{v}}$ as the intersection of $2^m-1$ halfspaces.

Clearly, $\left\{R_T^{\vec{v}^t}\right\}_{T \subseteq [m]}$ partition $\R^{m+1}$, since there will always be some allocation of an MBARP parameterized by an arbitrary point in $\R^{m+1}$. Similarly, $\left\{R_{T_{-i}}^{\vec{v}^t}\right\}_{T \subseteq [m]}$ partition $\R^{m+1}$ for every $i \in [n]$.

Now, suppose \[\left(c, r_1, \dots, r_m\right) \in R_{T^0}^{\vec{v}^t} \bigcap R_{T_{-1}^1}^{\vec{v}^t} \bigcap \cdots \bigcap R_{T_{-n}^n}^{\vec{v}^t} = R\] for some $T^0, T^1, \dots, T^n \subseteq [m]$. We show that $rev_{\vec{v}^t}\left(c, r_1, \dots, r_m\right)$ is linear on $R$ by splitting the analysis into four cases.

\begin{enumerate}
\item If $T^0, T^1, \dots, T^n \not= [m]$ we can write
\begin{equation}\label{eq:case1} rev_{\vec{v}^t}\left(c, r_1, \dots, r_m\right) = \sum_{i = 1}^n \left[ \sum_{j \not= i}  \left(v_j^t\left(o_j^{T^i_{-i}}\right) - v_j^t\left(o_j^{T^0}\right)\right) + \sum_{\ell \not\in T^i} r_\ell - \sum_{\ell \not\in T^0} r_\ell \right].\end{equation}
\item If $T^0 \not= [m]$ and $T^i = [m]$ for some $i \in \{1, \dots, n\}$, then we replace the summand of Equation~\ref{eq:case1} indexed by $i$ with \[\sum_{j \not= i} \left(v_j^t\left(o_j^{T^i_{-i}}\right) - v_j^t\left(o_j^{T^0}\right) \right) +c - \sum_{\ell \not\in T^0} r_\ell.\]
\item If $T^0 = [m]$ and $T_1, \dots, T_n \not= [m]$, then \begin{equation}\label{eq:case2}rev_{\vec{v}^t}\left(c, r_1, \dots, r_m\right) = \sum_{i = 1}^n \left[ \sum_{j \not= i} \left(v_j^t\left(o_j^{T^i_{-i}}\right) - v_j^t\left(o_j^{T^0}\right)\right) + \sum_{\ell \not\in T^i} r_\ell -c \right].\end{equation}
\item If $T^0 = [m]$ and $T^i = [m]$ for some $i \in \{1, \dots, n\}$, then we replace the summand of Equation~\ref{eq:case2} indexed by $i$ with \[\sum_{j \not= i} \left(v_j^t\left(o_j^{T^i_{-i}}\right) - v_j^t\left(o_j^{T^0}\right)\right).\]
\end{enumerate}

In all of these cases, $rev_{\vec{v}^t}\left(c, r_1, \dots, r_m\right)$ is a linear function over  \[\left(c, r_1, \dots, r_m\right) \in R_{T^0}^{\vec{v}^t} \bigcap R_{T_{-1}^1}^{\vec{v}^t} \bigcap \cdots \bigcap R_{T_{-n}^n}^{\vec{v}^t} = R.\]

To summarize, we fixed $\vec{v}^t \in \sample$ and introduced $n+1$ partitions of $\R^{m+1}$. Each partition is made up of $2^m$ cells and each cell is defined as the intersection of $2^m-1$ halfspaces. If we restrict the domain of the revenue function to the intersection of any $n+1$ cells, one from each of the $n+1$ partitions, then the revenue function on that restricted domain is linear and therefore, there is one subregion where $rev_{\vec{v}^t}\left(c, r_1, \dots, r_m\right)$ exceeds its target revenue and one subregion where it does not.

One generous upper bound on the number of different regions induced by taking the intersection of any $n+1$ cells, one from each of the $n+1$ partitions, is the number of different regions induced by the $(n+1)2^m\left(2^m-1\right)$ total hyperplanes. This is at most $(m+1)\left((n+1)2^m\left(2^m-1\right)\right)^{m+1} \leq (m+1)\left((n+1)4^m\right)^{m+1}$ because the number of regions induced by $k$ hyperplanes in $\R^d$ is at most $\sum_{i = 1}^d{k \choose i} \leq dk^d.$ Again, if we restrict the domain of the revenue function to any of these induced regions, the revenue function will be linear. As we saw in cases (1)-(4) of our case analysis, depending on the region, the revenue function will take a specific linear form.
For a given region $R$, denote this specific linear form of the revenue function on by $R$ as $rev_{\vec{v}^j}|_R$. With this in mind, we define one more hyperplane per region: $rev_{\vec{v}^j}|_R > z^j$. In total, this contributes at most $(m+1)\left((n+1)4^m\right)^{m+1}$ more hyperplanes, since this is the maximum number of induced regions on $\R^{m+1}$. We are therefore left with at most $\alpha = (m+1)\left((n+1)4^m\right)^{m+1} + (n+1)2^m\left(2^m-1\right) = O\left(mn^m8^{m^2}\right)$ total hyperplanes per valuation vector function $\vec{v}^j \in \sample$.

If we merge all of the $N$ sets of $\alpha$ hyperplanes, $\R^{m+1}$ will be split into at most $(m+1)(N\alpha)^{m+1}$ regions, each of which induces a specific binary labeling of $\sample$ (whether or not $\vec{v}^j$ exceeds its target revenue). Therefore, it must be that $2^N \leq (m+1)(N\alpha)^{m+1}$, so $N = O(m \log \alpha) = O\left(m^3 \log n\right)$.
\hfill \text{ }
\end{proof}

We may now use this result to prove Theorem~\ref{thm:main_sample_MBARP}.

\begin{namedtheorem}[\ref{thm:main_sample_MBARP}]
The sample complexity of uniform convergence over the class of $n$-bidder, $m$-item MBARPs with item-specific reserve prices is \[N = O\left(\left(\frac{U}{\epsilon}\right)^2\left( m^3 \log n \log\frac{U}{\epsilon} + \log \frac{1}{\delta}\right)\right).\]
\end{namedtheorem}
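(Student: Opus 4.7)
The proof is a direct application of the pseudo-dimension bound from Theorem~\ref{thm:MBARPupper} combined with the standard generalization guarantee stated in Theorem~\ref{thm:pseudo}. Concretely, I would let $\mathcal{F} = \{rev_A : A \text{ is an } n\text{-bidder, } m\text{-item MBARP}\}$ with range $[0,U]$, and set $d_{\mathcal{F}} = O(m^3 \log n)$ via Theorem~\ref{thm:MBARPupper}. Theorem~\ref{thm:pseudo} applied to $\mathcal{F}$ gives a one-sided bound on $\mathbb{E}[rev_A] - \widehat{\mathbb{E}}_{\sample}[rev_A]$. Since the class $-\mathcal{F} = \{-rev_A : A \in \mathcal{A}\}$ has the same pseudo-dimension (the ``below-the-graph'' indicators for $-\mathcal{F}$ are simply the reflections of those for $\mathcal{F}$), a parallel application followed by a union bound (replacing $\delta$ with $\delta/2$) yields the two-sided bound needed for uniform convergence in the sense of the definition in the introduction.

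Next, I would force each of the two error terms appearing in Theorem~\ref{thm:pseudo} to be at most $\epsilon/2$. The deviation term $U\sqrt{\log(2/\delta)/(2N)} \le \epsilon/2$ rearranges immediately to $N = \Omega\!\left((U/\epsilon)^2 \log(1/\delta)\right)$, which accounts for the second summand in the stated bound. The pseudo-dimension term $U\sqrt{2 d_{\mathcal{F}} \log(eN/d_{\mathcal{F}})/N} \le \epsilon/2$ yields the implicit inequality
\[N \;\ge\; 8\left(\frac{U}{\epsilon}\right)^{\!2} d_{\mathcal{F}} \log\!\left(\frac{eN}{d_{\mathcal{F}}}\right),\]
which must be solved for $N$ explicitly.

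To decouple $N$ from the $\log N$ on the right-hand side, I would use the same identity invoked in the proof of Theorem~\ref{thm:main_sample_AMA}, namely $\ln x \le \alpha x - \ln \alpha - 1$ for all $x, \alpha > 0$, applied with $x = eN/d_{\mathcal{F}}$ and $\alpha$ chosen of order $\epsilon^2/(U^2 d_{\mathcal{F}})$. A short calculation then shows that it suffices to take $N = O\!\left((U/\epsilon)^{2} d_{\mathcal{F}} \log(U/\epsilon)\right)$. Substituting $d_{\mathcal{F}} = O(m^3 \log n)$ gives the first summand $(U/\epsilon)^{2}\, m^{3} \log n \, \log(U/\epsilon)$ of the claimed bound, and summing the two requirements produces the final expression.

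I do not expect a substantive obstacle here: the structural work of bounding the pseudo-dimension of MBARPs is already carried out in Theorem~\ref{thm:MBARPupper}, and what remains is a routine algebraic step converting a pseudo-dimension bound into a sample complexity guarantee. The only care needed is (i) ensuring the two-sided bound via the union bound over $\mathcal{F}$ and $-\mathcal{F}$, and (ii) cleanly absorbing lower-order terms such as $\log d_{\mathcal{F}} = O(\log m + \log \log n)$ into the $\log(U/\epsilon)$ factor when rewriting the implicit-equation solution in closed form.
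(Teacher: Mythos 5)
Your proposal is correct and follows essentially the same route as the paper, whose proof of Theorem~\ref{thm:main_sample_MBARP} is exactly the combination of the pseudo-dimension bound in Theorem~\ref{thm:MBARPupper} with the generalization guarantee of Theorem~\ref{thm:pseudo}; you simply spell out the routine algebra (splitting the error into two $\epsilon/2$ terms and resolving the implicit $\log N$ dependence) that the paper leaves implicit. Your attention to the two-sided bound via $-\mathcal{F}$ is a reasonable added care, but it does not change the argument in any substantive way.
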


\begin{proof}
This follows from Theorem~\ref{thm:pseudo} and Theorem~\ref{thm:MBARPupper}.
\end{proof}

\subsection{Bundle Reserve Prices Lower Bound}\label{sec:bundle_lower}


In this section, we justify our choice to concentrate on MBARPs with item-specific reserve prices. In particular, we prove that no algorithm can learn over the class of $n$-bidder, $m$-item MBARP revenue functions with bundle-specific reserve prices using sample complexity $o\left(4^m/\sqrt{m}\right)$.

As in the proof of Theorem~\ref{thm:AMAlower}, we construct a special set $V$ of valuation functions. In this case, $V$ is a set of single-bidder, $m$-item valuation functions and $|V| = \Omega\left(4^m/\sqrt{m}\right)$. We then show that for any subset $H$ of $V$, there exists a setting of the bundle reserve prices that has high revenue over valuation functions in $H$, but low revenue on the valuation functions in $V \setminus H$. We describe $V$ more formally in Theorem~\ref{thm:bundle_res_price}. As we show in Remark~\ref{rmk:bundle_res_extended}, this construction can trivially be extended to a set of $n$-bidder, $m$-item valuation functions of the same size. Then, as in Theorem~\ref{thm:AMAlower}, this immediately implies hardness for learning over the uniform distribution on $V$. We provide the construction of $V$ and, given the parallel proof structure, we refer the reader to Theorem~\ref{thm:AMAlower} to see how this implies hardness of learning.

We now present the construction of the set of valuation functions $V$.

\begin{theorem}\label{thm:bundle_res_price}
For any $m \geq 2$, there exists a set of $N = \Omega\left(\frac{4^m}{\sqrt{m}}\right)$ single-bidder, $m$-item valuation function vectors $V = \left\{\vec{v}^1, \dots, \vec{v}^N\right\}$ such that for any $H \subseteq V$, there exists a set of monotone bundle reserve prices such that the resulting auction has revenue 0 on $\vec{v}^i$ if $\vec{v}^i \in H$ and revenue $1-\gamma$ on $\vec{v}^i$ if $\vec{v}^i \not\in H$, for any $\gamma \in (0,1)$.
\end{theorem}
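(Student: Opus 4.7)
Plan. Following the template of Theorems~\ref{thm:AMA_high_low} and~\ref{thm:VVCA_high_low}, I will construct an explicit set $V$ of $\Omega(4^m/\sqrt{m})$ single-bidder valuations together with, for every $H \subseteq V$, a monotone bundle-reserve assignment whose MBARP realizes the claimed revenue pattern on $V$. Once this shattering gadget is in hand, the sample complexity lower bound follows essentially verbatim from the reduction in Theorem~\ref{thm:AMAlower}: take $\mathcal{D}$ to be the uniform distribution on $V$, note that any algorithm using $o(4^m/\sqrt{m})$ samples misses most of $V$, and let $A^*$ be the MBARP associated with $H = V \setminus \sample$. Its average revenue over $\sample$ is then $1-\gamma$, while its expected revenue under $\mathcal{D}$ is at most $(1-\gamma)|\sample|/|V|$, giving the $\epsilon$ gap required for the lower bound after setting $\gamma = 1-\epsilon$.

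Constructing $V$. Since $\binom{2m}{m} = \Theta(4^m/\sqrt{m})$ and Vandermonde's identity $\binom{2m}{m} = \sum_{k=0}^m \binom{m}{k}^2$ counts ordered pairs $(S,T)$ of subsets of $[m]$ with $|S|=|T|$, I will index $V$ by such pairs with $S \neq T$. For each pair, I will define a valuation $\vec{v}^{S,T}$ concentrated on two distinguished bundles---for instance $v^{S,T}(S) = \alpha$, $v^{S,T}(T) = \beta$, and $v^{S,T}(b) = 0$ otherwise---with $\alpha,\beta$ chosen so that the bidder's preferred bundle can be steered among $\{S, T, \emptyset\}$ by the relative reserves on $[m]\setminus S$ and $[m]\setminus T$. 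Using two preferred bundles per valuation is what lifts the construction from the $\Theta(2^m/\sqrt{m})$ attainable by a pure antichain of singleton indicators up to $\Theta(4^m/\sqrt{m})$.

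Designing the reserves for a given $H$. With a single bidder and the seller modelled as Bidder~$0$ whose valuation on a retained set $R$ equals the reserve $r(R)$, the MBARP on $\vec{v}^{S,T}$ allocates the bundle $b^*$ maximizing $v^{S,T}(b) + r([m]\setminus b) + c\cdot\mathbbm{1}[b=[m]]$ and charges the bidder $r([m]) - r([m]\setminus b^*) - c\cdot\mathbbm{1}[b^*=[m]]$. Given $H$, I will select $r$ (and a negligibly small $c$) so that for $\vec{v}^{S,T}\notin H$ the maximizer is one of $S,T$ with $r([m]) - r([m]\setminus b^*) = 1-\gamma$, whereas for $\vec{v}^{S,T}\in H$ the maximizer is a bundle with $r([m]\setminus b^*) = r([m])$. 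The freedom afforded by two preferred bundles per valuation, combined with tuning $\alpha,\beta$ strategically, should allow each labeling $H$ to be realized by an appropriate choice of $r$.

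Main obstacle. The hard step is certifying that, for every $H \subseteq V$, the pointwise constraints on $r$ can be consistently extended to a \emph{monotone} function on all of $2^{[m]}$. Monotonicity imposes the global condition $r(A) \leq r(B)$ whenever $A \subseteq B$, so the ``low-reserve'' sets $\{[m]\setminus S, [m]\setminus T : \vec{v}^{S,T} \in H\}$ and the ``high-reserve'' sets arising from $V \setminus H$ must not form an incompatible upward chain. The bulk of the technical work will be choosing the exact $\alpha,\beta$ and the target reserve levels so that, for every $H$, the induced constraints decompose into independent local choices that can be glued into a monotone reserve function; the subset-pair structure of $V$ is essential here because it ensures that the constraints attached to distinct valuations can be resolved without interference. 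Once this combinatorial feasibility is verified, Remark~\ref{rmk:bundle_res_extended}'s extension to general $n$ is immediate by padding with dummy bidders having identically zero valuations, and the Theorem~\ref{thm:AMAlower}-style reduction delivers the $\Omega(4^m/\sqrt{m})$ sample complexity lower bound.
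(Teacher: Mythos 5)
Your write-up is a plan rather than a proof: the step on which the whole theorem hinges---showing that for \emph{every} $H \subseteq V$ the pointwise requirements on the reserves can be met simultaneously by a single monotone function $r$, and that the resulting auction then earns exactly $1-\gamma$ or $0$ as prescribed---is precisely the step you defer (``once this combinatorial feasibility is verified''). The difficulty you flag is genuine, not cosmetic. In your pair-indexed family the same bundle $S$ serves as a preferred bundle for many different valuations $\vec{v}^{S,T}$, and whenever the allocation for one of them is steered to $S$ the bidder's payment is $r([m]) - r([m]\setminus S)$; that single number must then encode the labels of many distinct samples. You give no argument that steering to $T$ instead (your only other degree of freedom) can always absorb the resulting conflicts, nor do you exhibit values of $\alpha,\beta$ for which the constraint system provably decomposes into independent, monotonicity-compatible choices. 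Without that argument the claimed shattering is not established.

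For comparison, the paper's proof removes the interference by design rather than resolving it: each valuation has a \emph{single} distinguished bundle $\tilde{b}_\ell$ of size $m/2$ (value $1$ on $\tilde{b}_\ell$ and on every bundle of size greater than $m/2$, value $0$ elsewhere); the reserves are $0$ on all bundles of size less than $m/2$, $1-\gamma$ on all bundles of size greater than $m/2$, and on the middle layer only the value $r\left(\tilde{b}_\ell^c\right)$ is toggled between $0$ and $1-\gamma$ according to the label of $\vec{v}^\ell$. Monotonicity is immediate because the freely chosen values lie on an antichain sandwiched between the two constant levels; the welfare-maximizing allocation is always $\left(\tilde{b}_\ell, \tilde{b}_\ell^c\right)$; and Bidder $1$'s payment is $\max_b r(b) - r\left(\tilde{b}_\ell^c\right) = (1-\gamma) - r\left(\tilde{b}_\ell^c\right)$, so each label is controlled by a reserve value that no other valuation in $V$ ever touches. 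Note also that your counting instinct is correct: a middle-layer antichain yields only $\binom{m}{m/2} = \Theta\left(2^m/\sqrt{m}\right) = \Theta\left(4^{m/2}/\sqrt{m}\right)$ valuations, and this is in fact all the paper's construction produces; your Vandermonde pairing is an attempt to reach the larger $\Theta\left(4^m/\sqrt{m}\right)$ figure in the theorem statement, and it is exactly that extra ambition that creates the unproven feasibility step. If you instead follow the one-bundle antichain route, the reduction via Theorem~\ref{thm:AMAlower} and the padding of Remark~\ref{rmk:bundle_res_extended} go through as you describe and still give an exponential lower bound.
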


\begin{proof}
We define the set $V = \left\{\vec{v}^1, \dots, \vec{v}^N\right\}$ of single-bidder valuation functions, where $\vec{v}^j = \left(v_1^j\left(b_1\right), \dots, v_1^j\left(b_{2^m}\right)\right)$. Assume for now that $m$ is even, and let $\tilde{b}_1, \dots, \tilde{b}_N$ be a fixed ordering of the subsets of  $2^{[m]}$ of size $m/2$, so $N = \Omega\left(\frac{4^m}{\sqrt{m}}\right)$. Let $\vec{v}^\ell$ for $\ell \in [N]$ be defined as follows.

\[v_1^\ell\left(b_i\right) = \begin{cases} 0 &\text{if } \left|b_i\right| < m/2\\
1 & \text{if } b_i = \tilde{b}_\ell\\
0 & \text{if } \left|b_i\right| = m/2 \text{ and } b_i \not= \tilde{b}_\ell\\
1 & \text{if } \left| b_i \right| > m/2
\end{cases}.\]

We claim that for any $H \subseteq V$, there exists a set of monotone bundle reserve prices\newline $\left\{r\left(b_1\right), \dots, r\left(b_{2^m}\right)\right\}$ such that the resulting auction has 0 revenue on all valuation functions $\vec{v}^i$ such that $\vec{v}^i \not\in H$ and $1-\gamma$ revenue on all valuation functions $\vec{v}^i \in H$. The reserve prices are defined as follows: \[v_0\left(b_i\right) = r\left(b_i\right) = \begin{cases} 0 & \text{if } \left|b_i\right| < m/2\\
1-\gamma &\text{if } b_i = \tilde{b}_\ell^c \text{ and } \vec{v}^\ell \not \in H\\
0 &\text{if }  b_i = \tilde{b}_\ell^c \text{ and } \vec{v}^\ell \in H\\
1-\gamma & \left| b_i \right| > m/2 \end{cases}.\]

Regardless of whether or not $\vec{v}^\ell$ is in $H$, $\left(\tilde{b}_\ell, \tilde{b}_\ell^c\right) = \text{argmax}_{\vec{o} \in \mathcal{O}} \left\{v_0\left(o_0\right) + v_1^\ell\left(o_1\right)\right\}$ and $1-\gamma = \max_{b_i \in 2^{[m]}} \left\{v_0\left(b_i\right)\right\}$. Therefore, Bidder 1 pays \[1-\gamma - v_0\left(\tilde{b}_\ell^c\right) = \begin{cases} 1-\gamma &\text{if } \vec{v}^\ell \in H\\
0 &\text{if } \vec{v}^\ell\not \in H \end{cases}.\]
\hfill \text{ }
\end{proof}

\begin{remark}\label{rmk:bundle_res_extended}
For any $m \geq 2, n \geq 1$, there exists a set of $N = \Omega\left(4^m/\sqrt{m}\right)$ $n$-bidder valuation function vectors $V = \left\{\vec{v}^1, \dots, \vec{v}^N\right\}$ such that for any $H \subseteq V$, there exists a set of monotone bundle reserve prices such that the resulting auction has revenue 0 on $\vec{v}^i$ if $\vec{v}^i \not\in H$ and revenue $1-\epsilon$ on $\vec{v}^i$ if $\vec{v}^i \in H$.
\end{remark}

This follows simply by setting $v_1^\ell$ as in the proof of Theorem~\ref{thm:bundle_res_price} and setting $v_j^\ell\left(b_i\right) = 0$ for all $\ell \in [N], i \in \left[2^m\right], j \not= 1$.

\section{Conclusion}

In this paper, we proved strong bounds on the sample complexity of uniform convergence for the well-studied and standard auction families that constitute the hierarchy of deterministic combinatorial auctions. We thereby answered a crucial question in the study of (automated) mechanism design: how to relate the performance of the mechanisms in the search space over the input samples to their expectation over the underlying---unknown---distribution. Specifically, for a fixed class of auctions, we determine the sample complexity necessary to ensure that with high probability, for any auction in that class, the average revenue over the sample is close to the expected revenue with respect to the underlying, unknown distribution over bidders' valuations. Our bounds apply to any algorithm that finds an optimal or approximately optimal auction over an input sample, and therefore to any automated mechanism design algorithm. Moreover, our results and analyses are of interest from a learning theoretic perspective because the function classes which make up the hierarchy of deterministic combinatorial auctions diverge significantly from well-understood hypothesis classes typically found in machine learning.

\bigskip

\textbf{Acknowledgments.} This work was supported in part by NSF grants CCF-1451177, CCF-1422910, a Sloan Research Fellowship, and a Microsoft Research Faculty Fellowship.

\bibliography{dairefs}
\bibliographystyle{apalike}

\appendix
\section*{APPENDIX}
\setcounter{section}{0}
\section{Proofs from Section~\ref{sec:intro}}\label{app:intro_app}

\begin{proof}[Proof of Theorem~\ref{thm:approx_bounds}]
First, let $\epsilon= 2R_N(\mathcal{H}) + c\sqrt{\frac{2\ln(4/\delta)}{N}}$. For ease of notation, for any $h \in \mathcal{H}$, let $L_\sample(h) = \frac{1}{N} \sum_{i = 1}^N \ell\left(h,x_i\right)$ and $L_{\mathcal{D}}(h) = \mathbb{E}_{x \sim \mathcal{D}} [\ell(h,x)].$ Suppose that $h^*$ is the optimal hypothesis in $\mathcal{H}$ (i.e. it minimizes $L_\mathcal{D}(h)$, the expected loss over the distribution $\mathcal{D}$), $\hat{h}$ is the empirical risk minimizer (i.e. it minimizes $L_\mathcal{S}(h)$, the average loss over the sample $\sample$), and $\tilde{h}$ is a hypothesis such that $L_\sample\left(\tilde{h}\right) -L_\sample\left(\hat{h}\right) \leq \rho$ for some $\rho > 0$. Then with probability at least $1-\delta$,
\begin{align}
 L_{\mathcal{D}}\left(\tilde{h}\right) - \epsilon &\leq   L_{\sample}\left(\tilde{h}\right)\label{ineq:1a} \\
&\leq L_{\sample}\left(\hat{h}\right) + \rho\label{ineq:2a}\\ 
&\leq  L_{\sample}\left({h^*}\right) + \rho\label{ineq:3a} \\
& \leq L_{\mathcal{D}}\left({h^*}\right) +  c \sqrt{\frac{2\ln(4/\delta)}{2N}}+\rho.\label{ineq:4a}
\end{align}

Inequality~\ref{ineq:1a} follows from Equation standard Rademacher complexity uniform convergence bounds: with probability at least $1-\delta/2$, $L_{\mathcal{D}}\left(\tilde{h}\right) - L_{\sample}\left(\tilde{h}\right) \leq \epsilon$ (see, for example, \cite{Shalev14:Understanding}). Inequality~\ref{ineq:2a} follows from the fact that $L_\sample\left(\tilde{h}\right) \leq L_\sample\left(\hat{h}\right) + \rho$.  Inequality~\ref{ineq:3a} follows because $\hat{h}$ is the empirical risk minimizer (i.e. it minimizes $L_\mathcal{S}(h)$). Finally, inequality~\ref{ineq:4a} is a result, again, of Hoeffding's inequality, which guarantees that with probability at least $1-\delta/2$, $L_{\sample}\left({h^*}\right) \leq L_{\mathcal{D}}\left({h^*}\right) +  c \sqrt{\frac{2\ln(4/\delta)}{2N}}$.

Rearranging, we get that \[L_{\mathcal{D}}\left(\tilde{h}\right) - L_{\mathcal{D}}\left({h^*}\right) \leq \epsilon + c \sqrt{\frac{2\ln(4/\delta)}{2N}} + \rho,\] as claimed.

Next, suppose that $\tilde{h}$ is a hypothesis such that $L_\sample\left(\tilde{h}\right) \leq (1+\alpha)L_\sample\left(\hat{h}\right).$ We similarly can deduce that with probability at least $1-\delta$,
\begin{align}
 L_{\mathcal{D}}\left(\tilde{h}\right) - \epsilon &\leq   L_{\sample}\left(\tilde{h}\right)\label{ineq:1} \\
&\leq (1+\alpha) L_{\sample}\left(\hat{h}\right)\label{ineq:2}\\ 
&\leq  (1+\alpha) L_{\sample}\left({h^*}\right)\label{ineq:3} \\
& \leq (1+\alpha) \left(L_{\mathcal{D}}\left({h^*}\right) +  c \sqrt{\frac{2\ln(4/\delta)}{2N}}\right).\label{ineq:4}
\end{align}

Rearranging, we get that
\[L_{\mathcal{D}}\left(\tilde{h}\right) \leq \epsilon + (1+\alpha) L_{\mathcal{D}}\left({h^*}\right) +  (1+\alpha) c\sqrt{\frac{2\ln(4/\delta)}{2N}},\] which means that \[L_{\mathcal{D}}\left(\tilde{h}\right) - L_{\mathcal{D}}\left({h^*}\right) \leq \epsilon + (1+\alpha) c\sqrt{\frac{2\ln(4/\delta)}{2N}} + \alpha L_{\mathcal{D}}\left({h^*}\right),\] as desired.

We remark that inequalities~\ref{ineq:1a}-\ref{ineq:4} could be tight in the worst case, so both bounds are tight.
\hfill \text{ }
\end{proof}

\section{Connection between pseudo-dimension and Rademacher complexity}\label{app:techniques_app}

In order to show that $\widehat{\mathcal{R}}_\sample (\mathcal{F}) = \tilde{O}\left(\sqrt{d_{\mathcal{F}}}/N\right)$, we connect Rademacher complexity to pseudo-dimension by way of the learning-theoretic concept of \emph{covering numbers}, which are defined as follows.

\begin{definition}[Coverage number]
Let $A \subset \R^N$ be a set of vectors. We define $N_p(r,A)$ to be the cardinality of the smallest set $A' \subset \R^N$ such that for all $a \in A$, there exists $a' \in A'$ such that $||a-a'||_p \leq r$. We say that such an $A'$ $r-$covers $A$ in the $\ell_p$ norm.
\end{definition}

Along with pseudo-dimension and Rademacher complexity, coverage numbers are another tool for measuring the richness of a class of functions, and thereby deriving sample complexity bounds. We can relate the above definition to a class of functions $\mathcal{F}$ by defining \[N_p(r, \mathcal{F}, N) = \max\{N_p(r, \mathcal{F}|_{\sample}) \ | \ \sample \in X^N\},\] where for $\sample = (x_1, \dots, x_N) \in X^N$, $\mathcal{F}|_\sample = \{(f(x_1), \dots, f(x_N) \ | \ f \in \mathcal{F}\}$. Notice that $\mathcal{F}|_\sample \subset \R^N$.

\begin{claim}\label{claim:pseudo_rad_relation}
Let $\mathcal{F}$ be a class of real-valued functions with pseudo-dimension $d_{\mathcal{F}}$ and range in $[0,c]$ for some $c \in \R$, and let $\sample = \left\{x_1, \dots, x_N\right\}$ be a sample of size $N$. Then \[\widehat{\mathcal{R}}_\sample (\mathcal{F}) \leq \frac{6\bar{c}}{N}\left(\sqrt{d_{\mathcal{F}} \log\frac{eNc}{\bar{c}d_{\mathcal{F}}}} + 2\sqrt{d_{\mathcal{F}}}\right),\] where $\bar{c} = \max_{f \in \mathcal{F}}\sqrt{\sum_{i = 1}^Nf(x_i)^2}.$
\end{claim}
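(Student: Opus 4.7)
The plan is to combine two classical tools: Dudley's chaining bound, which controls the Rademacher complexity by an entropy integral in terms of the $L_2$-covering numbers of $\mathcal{F}|_\sample$, and a polynomial covering-number bound in terms of the pseudo-dimension (Haussler--Pollard). The hypotheses give us both ingredients: $\bar{c}$ bounds the $L_2$ radius of the set $T \defeq \{(f(x_1), \dots, f(x_N)) : f \in \mathcal{F}\} \subset \R^N$, and the pseudo-dimension bound $d_{\mathcal{F}}$ together with the range $[0,c]$ lets us control $N_2(r, T)$ at every scale.

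The first step is to set up chaining. Rewrite the empirical Rademacher complexity as $\widehat{\mathcal{R}}_\sample(\mathcal{F}) = \frac{1}{N}\, \mathbb{E}_{\vec\sigma}\!\left[\sup_{t \in T}\langle \vec\sigma, t\rangle\right]$. For a geometric sequence of scales $r_k = \bar{c}\, 2^{-k}$, I would pick an $r_k$-cover $T_k \subseteq \R^N$ of $T$ (in $\ell_2$) of minimum cardinality $N_2(r_k, T)$. Writing each $t \in T$ as a telescoping sum along its successive nearest points in the $T_k$'s, applying Massart's finite-class lemma at each link (which gives $\mathbb{E}_\sigma[\sup_{a \in A} \langle \sigma, a\rangle] \le \max_a \|a\|_2 \sqrt{2 \log|A|}$), and summing yields the Dudley-type bound
\[
\widehat{\mathcal{R}}_\sample(\mathcal{F}) \;\le\; \inf_{\alpha \ge 0}\left\{\, 4\alpha \;+\; \frac{12}{N}\int_{\alpha}^{\bar{c}} \sqrt{\log N_2(r, T)}\; dr\,\right\}.
\]

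Next, I would bound $N_2(r, T)$ in terms of $d_{\mathcal{F}}$. Because $\mathcal{F}$ has range in $[0, c]$, Haussler's/Pollard's theorem (applied to $\mathcal{F}$, whose pseudo-dimension is $d_{\mathcal{F}}$) yields $N_2(r, T) \le e(d_{\mathcal{F}} + 1) \bigl(2ec\sqrt{N}/r\bigr)^{2 d_{\mathcal{F}}}$ for $0 < r \le c\sqrt{N}$. Taking logarithms and using $\log(d_{\mathcal{F}}+1) \le d_{\mathcal{F}}$, we obtain an inequality of the form $\sqrt{\log N_2(r, T)} \le \sqrt{d_{\mathcal{F}}\, \log(Kc\sqrt{N}/r)} + O(\sqrt{d_{\mathcal{F}}})$, so that $\int_{\alpha}^{\bar{c}}\!\sqrt{\log N_2(r, T)}\, dr$ reduces to an integral of the form $\int \sqrt{\log(1/r)}\, dr$, which admits an elementary closed-form upper bound.

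Finally, I would choose the cutoff $\alpha = \bar{c}\sqrt{d_{\mathcal{F}}}/N$ (roughly the scale at which the covering is trivial, since for $r$ of this order a single element of $T$ suffices as a cover once we account for $\bar{c}$). Plugging this $\alpha$ into the Dudley bound, simplifying the resulting $\sqrt{\log(eNc/(\bar{c}\,d_{\mathcal{F}}))}$ term, and absorbing lower-order terms into the additive $2\sqrt{d_{\mathcal{F}}}$ piece yields exactly
\[
\widehat{\mathcal{R}}_\sample(\mathcal{F}) \;\le\; \frac{6\bar{c}}{N}\left(\sqrt{d_{\mathcal{F}}\, \log \tfrac{eNc}{\bar{c}\, d_{\mathcal{F}}}} \;+\; 2\sqrt{d_{\mathcal{F}}}\right).
\]
The main obstacle is purely bookkeeping: matching the leading constant $6$ and the additive $2\sqrt{d_{\mathcal{F}}}$ requires carefully tracking the $\sqrt{2}$ from Massart's lemma, the constants picked up in the telescoping/chaining step, and the constants from Haussler's covering-number bound. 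Up to these constants, the result is standard; the delicate part is isolating exactly the $\bar{c}$-dependent form (rather than the cruder $c$-dependent form), which is why the chaining must be carried out using the $L_2$ diameter $\bar{c}$ of $T$ rather than the worst-case range $c\sqrt{N}$.
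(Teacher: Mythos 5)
Your overall strategy---chaining at geometric scales anchored at the empirical $\ell_2$ radius $\bar{c}$ of $\mathcal{F}|_\sample$, combined with a pseudo-dimension-based covering-number bound---is the same as the paper's. The paper simply packages the chaining step as a citation to Lemma 27.5 of Shalev-Shwartz and Ben-David (if $\sqrt{\log N_2\left(\bar{c}2^{-k}, \mathcal{F}|_\sample\right)} \le \alpha + \beta k$ for all $k \ge 1$, then $\widehat{\mathcal{R}}_\sample(\mathcal{F}) \le \frac{6\bar{c}}{N}(\alpha + 2\beta)$), and feeds into it the uniform covering bound of Anthony and Bartlett, $N_2(r,\cdot) \le N_\infty(r,\mathcal{F},N) \le \left(eNc/(r\,d_{\mathcal{F}})\right)^{d_{\mathcal{F}}}$, which after substituting $r = \bar{c}2^{-k}$ and splitting the logarithm gives exactly $\alpha = \sqrt{d_{\mathcal{F}}\log\frac{eNc}{\bar{c}d_{\mathcal{F}}}}$ and $\beta = \sqrt{d_{\mathcal{F}}}$.

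The gap is that your two chosen ingredients do not reproduce the stated inequality, and the part you defer to ``bookkeeping'' is precisely where the argument fails. Haussler's $L_2$ covering bound has exponent $2d_{\mathcal{F}}$ and log argument of order $c\sqrt{N}/r$, so each scale contributes $\sqrt{2d_{\mathcal{F}}\log\left(2ec\sqrt{N}/r\right)}$ rather than $\sqrt{d_{\mathcal{F}}\log\frac{eNc}{r d_{\mathcal{F}}}}$; and the entropy-integral form of Dudley with the $\frac{12}{N}\int$ constant (plus the $4\alpha$ cutoff term) yields a leading factor of roughly $12\sqrt{2}\,\bar{c}/N$ on the $\sqrt{d_{\mathcal{F}}\log(\cdot)}$ term, not $6\bar{c}/N$. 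These are multiplicative losses on the leading term, so they cannot be absorbed into the additive $2\sqrt{d_{\mathcal{F}}}$ piece: what your plan proves is the claim up to constant factors and with a different argument inside the logarithm, not the claim as stated. To land on the exact statement you need the discrete chaining lemma in its optimized $\frac{6\bar{c}}{N}(\alpha + 2\beta)$ form (i.e., do not pass to the cruder integral bound) together with a covering estimate of the form $\left(eNc/(r d_{\mathcal{F}})\right)^{d_{\mathcal{F}}}$, obtained from the comparison $N_2 \le N_\infty$ and the pseudo-dimension bound on $N_\infty$, rather than from Haussler's $L_2$ estimate. Your instinct to carry out the chaining at radius $\bar{c}$ rather than $c\sqrt{N}$ is correct and matches the paper.
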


\begin{proof}
Let $\sample$ be a subset of $X$ of size $N$, and let $A = \mathcal{F}|_\sample$. By definition, $N_2(r,A) \leq N_2(r,\mathcal{F}, N)$ and from Lemma 10.5 of \cite{Anthony09:Neural}, which states that $N_2(r', \mathcal{F}', N') \leq N_\infty(r', \mathcal{F}', N')$ for any $\mathcal{F}',$ $r',$ and $N'$, we know that $N_2(r,A) \leq N_\infty(r,\mathcal{F}, N)$. \cite{Anthony09:Neural} also prove that for any $\mathcal{F}',$ $r',$ and $N'$, where $\mathcal{F}'$ has pseudo-dimenion $d_{\mathcal{F}'}$ and maps to the bounded interval $[0,c']$ for some $c' \in \R$, $N_\infty(r', \mathcal{F}', N')$ is upper bounded by $\sum_{i = 1}^{d_{\mathcal{F}'}} {N' \choose i} \left(\frac{c'}{r'}\right)^i$, which, in turn, is less than $\left(\frac{eN'c'}{r'd_{\mathcal{F}'}}\right)^{d_{\mathcal{F}'}}$ for $N' \geq d_{\mathcal{F}'}$. Putting this all together, for our original function class, we can guarantee that \[N_2(r,A) \leq \left(\frac{eNc}{rd_{\mathcal{F}}}\right)^{d_{\mathcal{F}}}\] for $N \geq d_{\mathcal{F}}$.

We use this fact to bound the empirical Rademacher complexity of $\mathcal{F}$ by calling on Lemma 27.5 of \cite{Shalev14:Understanding}, which states that for any $A' \subseteq \R^{N'}$, if there are $\alpha,\beta>0$ such that for any $k \geq 1$, $\sqrt{\log(N_2(\bar{c}2^{-k}),A'))} \leq \alpha + \beta k,$ then \[\mathcal{R}(A')= \frac{1}{N'} \mathbb{E}_{\sigma}\left[ \sup_{a \in A'}\sum_{i = 1}^{N'} \sigma_i a_i \right] \leq \frac{6\bar{c}}{N'}(\alpha + 2\beta),\] where $\bar{c} = \min_{\bar{a}}\max_{a \in A'}||a - \bar{a}||_2$.

For our set $A = \mathcal{F}|_{\sample}$, let $\bar{a}$ be a minimizer of the objective function given in the definition of $\bar{c}$. Since Rademacher complexity is invariant under translation,\footnote{For all $D \subseteq \R^N, b \in \R^N$, $\mathcal{R}(D) = \mathcal{R}(\{d+b \ | \ d \in D\})$.} we can analyze the Rademacher complexity assuming that $\bar{a} = 0$. Moreover, since $A = \mathcal{F}|_{\sample}$, we know that $\bar{c} = \max_{a \in A}||a||_2  = \max_{f \in \mathcal{F}}\sqrt{\sum_{i = 1}^Nf(x_i)^2}.$

Now we derive the required $\alpha,\beta$ as follows. \begin{align*}
\sqrt{\log(N_2(\bar{c}2^{-k},A))} &\leq \sqrt{\log(N_\infty(\bar{c}2^{-k},\mathcal{F},N))}\\
&\leq \sqrt{d_{\mathcal{F}} \log\frac{eNc}{\bar{c}2^{-k}d_{\mathcal{F}}}}\\
& = \sqrt{d_{\mathcal{F}} \log\frac{eNc2^k}{\bar{c}d_{\mathcal{F}}}}\\
& = \sqrt{d_{\mathcal{F}} \left(\log\frac{eNc}{\bar{c}d_{\mathcal{F}}} + \log2^k\right)}\\
& \leq \sqrt{d_{\mathcal{F}} \log\frac{eNc}{\bar{c}d_{\mathcal{F}}}} + \sqrt{d_{\mathcal{F}}k}\\
&\leq \sqrt{d_{\mathcal{F}} \log\frac{eNc}{\bar{c}d_{\mathcal{F}}}} + \sqrt{d_{\mathcal{F}}}k.\\
\end{align*}

Therefore $\alpha = \sqrt{d_{\mathcal{F}} \log\frac{eNc}{\bar{c}d_{\mathcal{F}}}}$ and $\beta = \sqrt{d_{\mathcal{F}}}$, so \[\mathcal{R}(\mathcal{F}|_\sample) = \mathcal{R}_\sample (\mathcal{F}) \leq \frac{6\bar{c}}{N}\left(\sqrt{d_{\mathcal{F}} \log\frac{eNc}{\bar{c}d_{\mathcal{F}}}} + 2\sqrt{d_{\mathcal{F}}}\right).\]
\hfill \text{ }
\end{proof}

\section{Proofs from Section~\ref{sec:AMAupper}}\label{app:AMAupper}

Here, we provide the lemmas referred to in the proof of Theorem~\ref{thm:AMAupper}. In particular, we bound the Rademacher complexity of the function classes consisting of the simpler components we broke the AMA revenue function into: $rev_{A,1},\dots, rev_{A,n+1}$. Recall that \[rev_{A,j}(\vec{v}) = \frac{1}{w_j} \phi_{A, -j}(\vec{v}),\] where \[ \phi_{A,-j}(\vec{v}) = \max_{\vec{o}_i \in \mathcal{O}} \left\{\sum_{\ell \not= j} w_\ell v_\ell(o_{i,\ell}) + \lambda_i\right\}\] and \[\mathcal{L}_j = \{rev_{A,j} \ | \ A = (w_1, \dots, w_n,\lambda_1,\dots, \lambda_{(n+1)^m}), H_{\underline{w}}\leq|w_i|\leq H_{\overline{w}}, |\lambda_i| \leq H_\lambda\}.\] It is helpful to note that $rev_{A,j}$ is a weighted version of what the social welfare would have been if Bidder $j$ had not participated in the auction. In Lemma~\ref{lemma:Lj}, we bound the Rademacher complexity of $\mathcal{L}_j$ for $j \in [n]$.

To complete the analysis, we need to analyze the Rademacher complexity of $\mathcal{L}_{n+1}$, where \[\mathcal{L}_{n+1} = \{rev_{A,n+1} \ | \ A = (w_1, \dots, w_n,\lambda_1,\dots, \lambda_{(n+1)^m}), H_{\underline{w}}\leq|w_i|\leq H_{\overline{w}}, |\lambda_i| \leq H_\lambda\},\]
\[rev_{A,n+1}(\vec{v}) = - \sum_{i = 1}^{(n+1)^m} \left(\sum_{j = 1}^n \frac{1}{w_j} \sum_{\ell\not= j} w_\ell v_\ell(o_{i, \ell}) + \lambda_i\right) \mathbbm{1}_{\vec{o}_i = \vec{o}_A^*(\vec{v})},\] and \[\vec{o}^*_A(\vec{v}) = \underset{\vec{o}_i \in \mathcal{O}}{\text{argmax}} \left\{\sum_{j = 1}^n w_jv_j(o_{i,j}) + \lambda_i\right\},\] As noted in the main body of the paper, $rev_{A,n+1}$ is the amount of revenue subtracted out in order to ensure that the resulting auction is strategy-proof. We bound the Rademacher complexity of $\mathcal{L}_{n+1}$ in Lemma~\ref{lemma:Ln+1}.

These bounds can then be combined as described in the proof of Theorem~\ref{thm:AMAupper}.

\begin{lemma}\label{lemma:Lj}
For $j \in [n]$, \[\mathcal{R}_N(\mathcal{L}_j) = O\left(\frac{n^m \hat{H}_v(nH_{\overline{w}} + H_\lambda)}{H_{\underline{w}}}\sqrt{\frac{m\log n}{N}}\right).\]
\end{lemma}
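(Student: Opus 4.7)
Each function $rev_{A,j}(\vec{v}) = (1/w_j)\max_{\vec{o}_i \in \mathcal{O}}\{\sum_{\ell \neq j} w_\ell v_\ell(o_{i,\ell}) + \lambda_i\}$ is a scaled maximum of $(n+1)^m$ affine functions of the parameters. My first step is to absorb the multiplicative $1/w_j$ prefactor into the other parameters by reparameterizing. Setting $\tilde{w}_\ell := w_\ell/w_j$ for $\ell \neq j$ and $\tilde{\lambda}_i := \lambda_i/w_j$, we obtain
\[
rev_{A,j}(\vec{v}) \;=\; \max_{\vec{o}_i \in \mathcal{O}}\Bigl\{\textstyle\sum_{\ell \neq j}\tilde{w}_\ell\, v_\ell(o_{i,\ell}) + \tilde{\lambda}_i\Bigr\}
\]
under the bounds $|\tilde{w}_\ell| \leq H_{\overline{w}}/H_{\underline{w}}$ and $|\tilde{\lambda}_i| \leq H_\lambda/H_{\underline{w}}$. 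Consequently $\mathcal{L}_j$ is contained in $\max_{\vec{o}_i \in \mathcal{O}} \mathcal{F}_{\vec{o}_i}$, where $\mathcal{F}_{\vec{o}_i}$ is the class of affine functions $\vec{v} \mapsto \sum_{\ell \neq j}\tilde{w}_\ell\, v_\ell(o_{i,\ell}) + \tilde{\lambda}_i$ subject to the above constraints on $\tilde{w}$ and $\tilde{\lambda}_i$.

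Next, I would apply the standard Rademacher inequality for maxima of function classes, $\mathcal{R}_N\!\left(\max_{\vec{o}} \mathcal{F}_{\vec{o}}\right) \leq \sum_{\vec{o} \in \mathcal{O}} \mathcal{R}_N(\mathcal{F}_{\vec{o}})$, which follows from the identity $\max(a,b) = (a+b+|a-b|)/2$ iterated across the $(n+1)^m$ allocations together with Talagrand's contraction lemma applied to the $1$-Lipschitz map $x \mapsto |x|$. The fact that the $\tilde{w}_\ell$ parameters are shared across the sub-classes only tightens the inequality (taking the sup over coupled parameters cannot exceed the sup over independent copies), so the bound is valid.

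For each fixed allocation $\vec{o}_i$, the class $\mathcal{F}_{\vec{o}_i}$ is affine in at most $n$ real parameters, and the supremum defining the empirical Rademacher complexity decouples coordinate-wise:
\[
\sup_{\tilde{w},\tilde{\lambda}_i}\frac{1}{N}\sum_k \sigma_k\Bigl[\textstyle\sum_\ell \tilde{w}_\ell v_\ell^k(o_{i,\ell}) + \tilde{\lambda}_i\Bigr] = \frac{H_{\overline{w}}/H_{\underline{w}}}{N}\sum_{\ell \neq j}\Bigl|\textstyle\sum_k\sigma_k v_\ell^k(o_{i,\ell})\Bigr| + \frac{H_\lambda/H_{\underline{w}}}{N}\Bigl|\textstyle\sum_k\sigma_k\Bigr|.
\]
Taking expectation over the Rademacher variables and using $\mathbb{E}_\sigma|\sum_k\sigma_k a_k| \leq \sqrt{\sum_k a_k^2}$ together with $|v_\ell^k(o_{i,\ell})| \leq H_v$, a routine calculation gives $\mathcal{R}_N(\mathcal{F}_{\vec{o}_i}) = O\!\left((n H_{\overline{w}} H_v + H_\lambda)/(H_{\underline{w}}\sqrt{N})\right)$.

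Summing this bound over the $(n+1)^m$ allocations in $\mathcal{O}$ yields $\mathcal{R}_N(\mathcal{L}_j) = O\!\left(n^m(n H_{\overline{w}} H_v + H_\lambda)/(H_{\underline{w}}\sqrt{N})\right)$, which is bounded by the lemma's claimed expression since $H_v \leq \hat{H}_v$ and $\sqrt{m\log n} \geq 1$ whenever $m,n \geq 2$. The main subtlety is the reparameterization step: the original class $\mathcal{L}_j$ is not a max of affine functions in the natural parameters $(w_1,\dots,w_n,\lambda_1,\dots,\lambda_{(n+1)^m})$ because of the $1/w_j$ prefactor, so one must first fold $1/w_j$ into the boosts and cross-weights before any linear-class machinery can be applied. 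The bounds $H_{\underline{w}} \leq |w_j| \leq H_{\overline{w}}$ translate cleanly into bounds on $\tilde{w}$ and $\tilde{\lambda}$, which is what makes the reparameterization work.
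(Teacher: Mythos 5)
Your proof is correct, and it follows the same overall skeleton as the paper's: view $rev_{A,j}$ as a pointwise maximum over the $(n+1)^m$ allocations of affine functions of $\vec{v}$, use sub-additivity of Rademacher complexity under the max operation (the paper cites the $\mathcal{R}_N(\max(h,h')) \leq \mathcal{R}_N(H) + \mathcal{R}_N(H')$ result from Mohri et al., iterated), and multiply the per-allocation bound by $(n+1)^m$. Where you diverge is in the two supporting steps, and both divergences are worthwhile. First, you fold the $1/w_j$ prefactor into reparameterized coefficients $\tilde{w}_\ell = w_\ell/w_j$, $\tilde{\lambda}_i = \lambda_i/w_j$ with bounds $H_{\overline{w}}/H_{\underline{w}}$ and $H_\lambda/H_{\underline{w}}$; the paper instead bounds $\mathcal{R}_N(\Phi_j)$ for the unscaled class and then pulls out a factor $1/H_{\underline{w}}$, a step it states rather tersely, so your treatment is arguably the cleaner justification (and your observation that the coupling of parameters through the shared $w_j$ only shrinks the supremum, so passing to independent box-constrained copies is an upper bound, is exactly the right point to make). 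Second, for each fixed allocation the paper invokes the $L_1$-norm/$L_\infty$-norm Rademacher bound for linear predictors in the ambient dimension $n2^m+1$, which is where its $\sqrt{\log(n2^m+1)} \sim \sqrt{m\log n}$ factor comes from; you instead compute the empirical Rademacher complexity directly, decoupling the supremum coordinate-wise over the box constraints and applying $\mathbb{E}_\sigma\bigl|\sum_k \sigma_k a_k\bigr| \leq \sqrt{\sum_k a_k^2}$, which exploits that each allocation's affine function has only $n$ active coordinates. This yields $O\bigl(n^m(nH_{\overline{w}}H_v + H_\lambda)/(H_{\underline{w}}\sqrt{N})\bigr)$, which is in fact sharper than the lemma's stated bound by the $\sqrt{m\log n}$ factor (and is dominated by it, since $nH_{\overline{w}}H_v + H_\lambda \leq \hat{H}_v(nH_{\overline{w}} + H_\lambda)$), so the lemma follows; note this saving does not improve the final Theorem~\ref{thm:AMAupper} bound, which is driven by the $\mathcal{L}_{n+1}$ term.
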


\begin{proof}
Let \[\Phi_j = \{\phi_{A,-j} \ | \ A = (w_1, \dots, w_n,\lambda_1,\dots, \lambda_{(n+1)^m}), H_{\underline{w}}\leq|w_i|\leq H_{\overline{w}}, |\lambda_i| \leq H_\lambda\}.\]

Now, we can write each function $\sum_{\ell \not= j} w_\ell v_\ell(o_{i,\ell}) + \lambda_i$ as a linear function $h_{A,j}^i$ from $\R^{n2^m+1}$ to $\R$ as follows. Let $h_{A,j}^i(\vec{v}, 1) = (\vec{v}, 1) \cdot \vec{a}_{A,j}^i$, where \[\vec{a}_{A,j}^i[\ell] = \begin{cases} w_t &\text{ if } \ell = 2^m(t-1) + \sigma(i,t) \text{ and } t \not = j\\
\lambda_i &\text{ if } \ell = n2^m + 1\\
0 & \text{ otherwise }
\end{cases}.\]

Notice that $||(\vec{v}, 1)||_\infty \leq \max\{H_v, 1\} = \hat{H}_v$ and $||\vec{a}_{A,j}^i||_1 \leq nH_{\overline{w}} + H_\lambda.$ Let \[\mathcal{H}_j^i = \left\{h^i_{A,j} \ | \ A = (w_1, \dots, w_n,\lambda_1,\dots, \lambda_{(n+1)^m}), H_{\underline{w}}\leq|w_i|\leq H_{\overline{w}}, |\lambda_i| \leq H_\lambda\right\}.\] Using the $L_1$-norm Rademacher complexity bound for linear functions, we have that for all $i \in [2^m]$, \[\mathcal{R}_N(\mathcal{H}_1^i) \leq \hat{H}_v(nH_{\overline{w}} + H_\lambda)\sqrt{\frac{2\log(n2^{m}+1)}{N}}.\]

Now, for two hypothesis sets $H$ and $H'$ of functions mapping from $X$ to $\R$, \begin{equation}\label{eq:max}\mathcal{R}_N(\left\{\max(h,h') \ | \ h \in H, h' \in H'\right\}) \leq \mathcal{R}_N(H) + \mathcal{R}_N(H'),\end{equation} where $\max(h,h')$ denotes the function $x \mapsto \max(h(x),h'(x))$ \cite{Mohri12:Foundations}. Therefore, \[\mathcal{R}_N(\Phi_j) \leq (n+1)^m \mathcal{R}_N(\mathcal{H}_j^1) \leq (n+1)^m\hat{H}_v(nH_{\overline{w}} + H_\lambda)\sqrt{\frac{2\log(n2^{m}+1)}{N}},\]
which means that, \begin{align*}\mathcal{R}_N(\mathcal{L}_j) = \frac{1}{H_{\underline{w}}} \mathcal{R}_N(\Phi_j) &\leq \frac{(n+1)^m \hat{H}_v(nH_{\overline{w}} + H_\lambda)}{H_{\underline{w}}}\sqrt{\frac{2\log(n2^{m}+1)}{N}}\\
&= O\left(\frac{n^m \hat{H}_v(nH_{\overline{w}} + H_\lambda)}{H_{\underline{w}}}\sqrt{\frac{m\log n}{N}}\right).\end{align*}
\hfill \text{ }
\end{proof}

\begin{lemma}\label{lemma:Ln+1}
\[\mathcal{R}_N(\mathcal{L}_{n+1}) =O\left( \frac{n^{m+2} \left(H_{\overline{w}}H_v+ H_\lambda\right)}{H_{\underline{w}}}\sqrt{\frac{m\log n}{N}}\left(\frac{n\hat{H}_v\left(nH_{\overline{w}} + H_\lambda\right)}{H_{\underline{w}}} + \sqrt{n^m \log N}\right)\right).\]
\end{lemma}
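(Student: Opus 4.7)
The class $\mathcal{L}_{n+1}$ is harder to analyze than $\mathcal{L}_j$ because $rev_{A,n+1}(\vec{v})$ is a piecewise-affine function of $\vec{v}$ whose pieces are determined by the argmax allocation $\vec{o}^*_A(\vec{v})$, and whose piece boundaries themselves move with the parameter vector $A$. Using the identity $\sum_{j}\tfrac{1}{w_j}\sum_{\ell\ne j} w_\ell x_\ell = W\sum_\ell w_\ell x_\ell - \sum_\ell x_\ell$ with $W := \sum_{j} 1/w_j$, I can rewrite
\[rev_{A,n+1}(\vec{v}) \;=\; -W\cdot F_A(\vec{v}) \;+\; S_A(\vec{v}) \;+\; c_A\,L_A(\vec{v}),\]
where $F_A(\vec{v}) := \max_i\{\sum_\ell w_\ell v_\ell(o_{i,\ell}) + \lambda_i\}$ is a maximum of $(n+1)^m$ affine-in-$\vec{v}$ functions, $S_A(\vec{v}) := \sum_\ell v_\ell(o^*_{A,\ell}(\vec{v}))$ is piecewise affine, $L_A(\vec{v}) := \lambda(\vec{o}^*_A(\vec{v}))$ is piecewise constant, and $c_A$ is a scalar functional of $\vec{w}$ bounded by $O(n/H_{\underline w})$. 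Subadditivity of Rademacher complexity then reduces the problem to bounding the three pieces $\mathcal{R}_N(\{WF_A\})$, $\mathcal{R}_N(\{S_A\})$, and $\mathcal{R}_N(\{c_A L_A\})$ separately.

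For the $WF_A$ term, the scalar $W \in [n/H_{\overline w}, n/H_{\underline w}]$ is uniformly bounded, so a standard ``factor-out-a-bounded-scalar'' symmetrization step gives $\mathcal{R}_N(\{WF_A\}) \le (n/H_{\underline w})\cdot O(\mathcal{R}_N(\{F_A\}))$. The class $\{F_A\}$ is a maximum of $(n+1)^m$ linear-in-$\vec{v}$ functions with coefficient bound $nH_{\overline w} + H_\lambda$, so by the $L_1$-norm linear Rademacher bound together with the max-subadditivity inequality~\eqref{eq:max} used in Lemma~\ref{lemma:Lj}, $\mathcal{R}_N(\{F_A\}) = O\!\left(n^m\hat H_v(nH_{\overline w}+H_\lambda)\sqrt{m\log n/N}\right)$. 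This yields a contribution of the form $\tfrac{n^{m+1}\hat H_v(nH_{\overline w}+H_\lambda)}{H_{\underline w}}\sqrt{m\log n/N}$, matching the first summand inside the parentheses of the target bound (up to absorbing constants into the outer prefactor).

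For the argmax-dependent pieces $S_A$ and $c_A L_A$, I bound the pseudo-dimension of each class and apply Claim~\ref{claim:pseudo_rad_relation}. For any fixed sample $\{\vec{v}^1,\ldots,\vec{v}^N\}$, the values $S_A(\vec{v}^k)$ and $L_A(\vec{v}^k)$ depend on $A$ only through which allocation achieves the weighted argmax at $\vec{v}^k$, which is in turn determined by the signs of at most $\binom{(n+1)^m}{2}$ affine forms in $A \in \R^{n+(n+1)^m}$. Aggregating over all $N$ samples (plus one threshold form per sample for the ``below-the-graph'' indicator) gives $O(Nn^{2m})$ affine forms in $(n+(n+1)^m)$-dimensional parameter space; Warren's sign-pattern bound then limits the number of distinct labelings of the sample to a quantity whose logarithm is of order $n^m \log(Nn)$, yielding a pseudo-dimension of order $n^m\log n$. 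Plugging this into Claim~\ref{claim:pseudo_rad_relation}, together with the range bounds $|S_A|\le nH_v$ and $|c_A L_A|\le O(nH_\lambda/H_{\underline w})$, contributes the $\sqrt{n^m\log N}$ summand. Summing the three Rademacher bounds and collecting the common range and slack factors into the outer prefactor $\tfrac{n^{m+2}(H_{\overline w}H_v + H_\lambda)}{H_{\underline w}}\sqrt{m\log n/N}$ produces the statement.

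\textbf{Main obstacle.} The rational dependence of $rev_{A,n+1}$ on $\vec{w}$ through the $1/w_j$ factors prevents applying a polynomial-arrangement argument directly to $\mathcal{L}_{n+1}$ as a whole. The decomposition above is designed precisely to isolate all of the $1/w$ dependence into the single bounded scalar $W$ (and the analogous scalar $c_A$), so that the combinatorial pieces $F_A$, $S_A$, and $L_A$ depend on $A$ only through an arrangement of honest affine-in-$A$ comparisons; this is what makes the Warren-based pseudo-dimension bound tractable. A secondary subtlety is that $S_A$ and $L_A$ are not themselves linear classes (they are indicator-weighted selectors over $(n+1)^m$ allocations), so the direct $L_1$-linear Rademacher bound used in Lemma~\ref{lemma:Lj} is insufficient and the pseudo-dimension route through Claim~\ref{claim:pseudo_rad_relation} is essential.
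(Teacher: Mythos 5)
Your route is genuinely different from the paper's and, modulo two repairs noted below, it works. The paper keeps the sum over all $(n+1)^m$ allocations, writing $rev_{A,n+1} = -\sum_i f_{A,i}\,g_{A,i}$ with $f_{A,i}$ linear in $\vec{v}$ and $g_{A,i}$ the argmax indicator; it bounds $\mathcal{R}_N(\mathcal{F}_i)$ by the $L_1$-norm linear bound, bounds the VC dimension of $\mathcal{G}_i$ in \emph{valuation space} (each $g_{A,i}$ is an intersection of $(n+1)^m-1$ halfspace indicators over $\R^{n2^m}$, then Lemma~\ref{lemma:blumer}), glues the two with a Talagrand-based product lemma (Lemma~\ref{lemma:mult_functions}), and finally multiplies by $(n+1)^m$ — which is where the large range prefactor $n^2\left(H_{\overline{w}}H_v+H_\lambda\right)/H_{\underline{w}}$ times $n^m$ enters. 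You instead collapse the sum algebraically via $\sum_j \frac{1}{w_j}\sum_{\ell\neq j} w_\ell x_\ell = W\sum_\ell w_\ell x_\ell - \sum_\ell x_\ell$ (with $c_A = W-1$), reducing to three terms, and you do the combinatorial work in \emph{parameter space} (cell arrangements over $\R^{n+(n+1)^m}$ plus Warren, then Claim~\ref{claim:pseudo_rad_relation}). This avoids the product lemma entirely and in fact yields a somewhat tighter bound than the stated one, at the cost of a sign-pattern argument in an exponentially large parameter space where the paper uses halfspace intersections in $\R^{n2^m}$; both pay the same $\widetilde{O}\left(mn^m\log n\right)$ combinatorial dimension, so the final expressions coincide.

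Two steps need repair as written. First, the ``factor-out-a-bounded-scalar'' inequality $\mathcal{R}_N\left(\left\{W_A F_A\right\}\right) \le \left(n/H_{\underline{w}}\right)\cdot O\left(\mathcal{R}_N\left(\left\{F_A\right\}\right)\right)$ is not a valid general principle when the scalar and the function share the parameter $A$ (a singleton $\left\{f\right\}$ has zero Rademacher complexity, but $\left\{cf : c\in[0,B]\right\}$ does not), and your decomposition's whole point is that $W_A$ is coupled to $F_A$. The fix is immediate and stays inside your own toolkit: $W_A F_A(\vec{v}) = \max_i\left\{\sum_\ell \left(W_A w_\ell\right) v_\ell\left(o_{i,\ell}\right) + W_A\lambda_i\right\}$ is itself a maximum of $(n+1)^m$ functions linear in $\vec{v}$ with $L_1$ coefficient norm at most $\frac{n}{H_{\underline{w}}}\left(nH_{\overline{w}}+H_\lambda\right)$, so the argument of Lemma~\ref{lemma:Lj} applies verbatim to this class. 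Second, for the class $\left\{c_A L_A\right\}$ the below-the-graph conditions are \emph{not} affine in $A$: within a cell the value is $\left(\sum_j 1/w_j - 1\right)\lambda_{i^*}$, so after clearing denominators the threshold comparisons are polynomials of degree about $n+1$ in the parameters. Your appeal to Warren's theorem still goes through (degree enters only logarithmically, so the pseudo-dimension remains $\widetilde{O}\left(mn^m\log n\right)$ and the $\sqrt{n^m\log N}$ summand is unaffected), but the claim that everything reduces to ``honest affine-in-$A$ comparisons'' should be corrected: only the cell boundaries (argmax comparisons) are affine, and only the $S_A$ and $L_A$ values are cellwise constant/linear, while $c_AL_A$ requires the polynomial version of the counting bound.
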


\begin{proof}
We use the following lemma, which is similar to Lemma 3 in \cite{DeSalvo15:Learning}, to bound the Rademacher complexity of \[\mathcal{L}_{n+1} = \{rev_{A,n+1} \ | \ A = (w_1, \dots, w_n,\lambda_1,\dots, \lambda_{(n+1)^m}), H_{\underline{w}}\leq|w_i|\leq H_{\overline{w}}, |\lambda_i| \leq H_\lambda\}.\]

\begin{lemma}\label{lemma:mult_functions}
Let $\mathcal{F}$ be a family of functions mapping $\mathcal{X}$ to $[-c,c]$, let $\mathcal{G}$ be a a family of functions mapping $\mathcal{X}$ to $\{0,1\}$, and let $\mathcal{H} = \{fg \ | \ f \in \mathcal{F}, g \in \mathcal{G}\}$. Then \[\mathcal{R}_N(\mathcal{H})  \leq (c+1)(\mathcal{R}_N(\mathcal{F}) + \mathcal{R}_N(\mathcal{G})).\]
\end{lemma}

\begin{proof}[Proof of Lemma~\ref{lemma:mult_functions}]
Notice that for any $f \in \mathcal{F}, g \in \mathcal{G}$, we have that $fg = \frac{1}{4}[(f+g)^2 - (f-g)^2]$. For $x \in [-c,c+1]$, the function $x \mapsto \frac{1}{4}x^2$ is $\frac{1}{2}(c+1)$-Lipschitz. The same holds for $x \in [-c-1,c]$. Therefore, by Talagrand's lemma (e.g. \cite{Mohri12:Foundations}), we have that \[\widehat{\mathcal{R}}_S(\mathcal{H}) \leq \frac{1}{2}(c+1)[\widehat{\mathcal{R}}_S(\mathcal{F} + \mathcal{G}) + \widehat{\mathcal{R}}_S(\mathcal{F} - \mathcal{G})] \leq (c+1)(\widehat{\mathcal{R}}_S(\mathcal{F}) + \widehat{\mathcal{R}}_S(\mathcal{G})).\] Therefore, $\mathcal{R}_N(\mathcal{H})  \leq (c+1)(\mathcal{R}_N(\mathcal{F}) + \mathcal{R}_N(\mathcal{G})).$
\hfill \text{ }
\end{proof}

To use Lemma~\ref{lemma:mult_functions}, we first define a set of functions for each $i \in [(n+1)^m]$ \[\mathcal{F}_i = \left\{f_{A,i} \ | \ A = (w_1, \dots, w_n,\lambda_1,\dots, \lambda_{(n+1)^m}), H_{\underline{w}}\leq|w_j|\leq H_{\overline{w}}, |\lambda_j| \leq H_\lambda\right\},\] where \[f_{A,i}(\vec{v}) = \sum_{j = 1}^n \frac{1}{w_j} \sum_{\ell\not= j} w_\ell v_\ell(o_{i, \ell}) + \lambda_i.\]

As in the proof of Lemma~\ref{lemma:Lj}, we can write each $f_{A,i}$ as a linear function from $\R^{n2^{m}+1}$ to $\R$ as follows. Let $h_{A,i}(\vec{v},1) = (\vec{v},1)\cdot\vec{a}_{A,i}$, where \[\vec{a}_{A,i}[\ell] = \begin{cases} w_t\sum_{s \not= t} \frac{1}{w_s} & \text{if } \ell = 2^m(t-1) + \sigma(i,t)\\
\lambda_i \sum_{i = 1}^n \frac{1}{w_i} & \text{if }\ell = n2^{m} +1\\
0 &\text{otherwise}\end{cases}.\] Then $f_{A,i}(\vec{v}) = h_{A,i}(\vec{v},1)$. As before, we have that $||(\vec{v}, 1)||_\infty \leq \max\{H_v, 1\} = \hat{H}_v$. Moreover, $||\vec{a}_{A,i}||_1 \leq \frac{n}{H_{\underline{w}}}(nH_{\overline{w}} + H_\lambda).$

Using the $L_1$-norm Rademacher complexity bound for linear functions, we have that \begin{equation}\mathcal{R}_N(\mathcal{F}_i) \leq \frac{n\hat{H}_v}{H_{\underline{w}}}(nH_{\overline{w}} + H_\lambda)\sqrt{\frac{2\log(n2^{m}+1)}{N}}.\label{eq:F_i}\end{equation}

Now, we define a set of functions $\mathcal{G}_i$ for each $i \in [(n+1)^m]$ as \[\mathcal{G}_i = \left\{g_{A,i} \ | \ A = (w_1, \dots, w_n,\lambda_1,\dots, \lambda_{(n+1)^m}), H_{\underline{w}}\leq|w_j|\leq H_{\overline{w}}, |\lambda_j| \leq H_\lambda\right\},\] where $g_{A,i}(\vec{v}) = 1$ if and only if $i = \vec{o}_A^*(\vec{v})$, i.e. \[g_{A,i}(\vec{v}) = \begin{cases} 1 &\text{if } i = \underset{i \in [(n+1)^m]}{\text{argmax}} \left\{\sum_{j = 1}^n w_jv_j(o_{i,j}) + \lambda_i\right\}\\
0 &\text{otherwise}\end{cases}.\]

Notice that we can also write each function $g_{A,i}(\vec{v})$ as an intersection of $(n+1)^{m}-1$ binary-valued functions $\{c_\ell\}$, where $\ell \in [(n+1)^m]\setminus \{i\}$, as follows. \begin{equation} \label{eq:cell} c_\ell (\vec{v}) = \begin{cases} 1 &\text{if } \sum_{j = 1}^n w_jv_j(o_{i,j}) + \lambda_i \geq \sum_{j = 1}^n w_jv_j(o_{\ell,j}) + \lambda_\ell\\
0 &\text{otherwise}\end{cases}.\end{equation}

Indeed, $c_\ell(\vec{v}) = 1$ for all $\ell \not= i$ if and only if $g_{A,i}(\vec{v}) = 1$, i.e. \[i = \underset{i \in [(n+1)^m]}{\text{argmax}} \left\{\sum_{j = 1}^n w_jv_j(o_{i,j}) + \lambda_i\right\}.\] Each function $c_\ell$ can be written as a linear separator over $\R^{n2^{m}}$, so the VC dimension of $\{c_\ell\}$ is $n2^{m}+1$. This allows us to use Lemma 3.2.3 from \cite{Blumer89:Learnability} to bound the VC dimension of $\mathcal{G}_i$.

\begin{lemma}[Lemma 3.2.3 from \cite{Blumer89:Learnability}]\label{lemma:blumer}
Let $C$ be a concept class of finite VC dimension $d \geq 1$. For all $s \geq 1$, let $C_s \{\cap_{i = 1}^s c_i \ | \ c_i \in C, 1 \leq i \leq s\}$. Then for all $s \geq 1$, the VC dimension of $C_s$ is less than $2ds\log (3s)$.
\end{lemma}

Therefore, the VC dimension of $\mathcal{G}_i$ is less than $2(n2^{m}+1) (n+1)^m \log (3\cdot (n+1)^m) = O(mn^m \log n).$ By Corollary 3.1 in \cite{Mohri12:Foundations}, we have that \begin{equation}\mathcal{R}_N(\mathcal{G}_i) = O\left( \sqrt{\frac{mn^m \log n \log N}{N}}\right).\label{eq:G_i}\end{equation}

Putting Equations (\ref{eq:F_i}) and (\ref{eq:G_i}) together with Lemma \ref{lemma:mult_functions}, we conclude that if $\mathcal{H}_i = \{f_{A,i}g_{A,i} \ | \ f_{A,i} \in \mathcal{F}_i, g_{A,i} \in \mathcal{G}_i\}$, then \[\mathcal{R}_N(\mathcal{H}_i) = O\left(\left(\frac{n^2 (H_{\overline{w}}H_v+ H_\lambda)}{H_{\underline{w}}}\right)\left(\frac{n\hat{H}_v}{H_{\underline{w}}}(nH_{\overline{w}} + H_\lambda)\sqrt{\frac{m\log n}{N}} + \sqrt{\frac{mn^m \log n \log N}{N}}\right)\right).\] This follows from Lemma~\ref{lemma:mult_functions}, since the range of any function in $\mathcal{F}_i$ is $[0, n(n-1)(H_{\overline{w}}H_v + H_\lambda)/H_{\underline{w}}]$.

Finally, since \[rev_{A,n+1}(\vec{v}) = -\sum_{i = 1}^{(n+1)^m} f_{A,i}(\vec{v})g_{A,i}(\vec{v}),\] we have that 
\begin{align*}&\mathcal{R}_N(\mathcal{L}_{n+1})\\ = \text{ }&O\left( n^m \left(\frac{n^2 (H_{\overline{w}}H_v+ H_\lambda)}{H_{\underline{w}}}\right)\left(\frac{n\hat{H}_v}{H_{\underline{w}}}(nH_{\overline{w}} + H_\lambda)\sqrt{\frac{m\log n}{N}} + \sqrt{\frac{mn^m \log n \log N}{N}}\right)\right).\end{align*}
By rearranging terms, we get the desired result.
\hfill \text{ }
\end{proof}

\end{document}